\documentclass[lettersize,journal]{IEEEtran}
\usepackage{amsmath,amsfonts}
\usepackage{algorithm}
\usepackage{algorithmic}
\usepackage{multirow}
\usepackage{colortbl}
\usepackage{arydshln} 
\usepackage{booktabs}
\usepackage{subcaption}
\usepackage{nicematrix} 
\usepackage{tablefootnote}

\usepackage{array}
\usepackage{textcomp}
\usepackage{stfloats}
\usepackage{url}
\usepackage{verbatim}
\usepackage{graphicx}
\usepackage{cite}
\usepackage{tabularx}
\usepackage{xcolor}
\usepackage[algo2e,linesnumbered,noend]{algorithm2e}
\usepackage{graphicx}
\usepackage{amssymb}
\usepackage{epstopdf}
\usepackage{multicol}
\usepackage{tabularx}
\usepackage{caption}
\usepackage{enumitem}
\usepackage{booktabs}
\usepackage{bm}
\usepackage{wrapfig}
\usepackage{amsmath}
\usepackage{pifont}
\usepackage{lipsum}
\usepackage{dsfont}
\usepackage{mathrsfs}
\usepackage{amsthm}
\usepackage{float}
\usepackage{microtype}
\usepackage{graphicx}
\usepackage{array} 
\usepackage{makecell}
\usepackage{booktabs} 
\usepackage{wrapfig}
\usepackage{blindtext}
\usepackage{enumitem}
\usepackage{adjustbox}
\usepackage{booktabs}
\usepackage[table]{xcolor}
\usepackage{algorithm}
\definecolor{aliceblue}{RGB}{176,223,229}

\definecolor{zzblue}{RGB}{156,183,245}

\usepackage{subcaption}

\usepackage{amsmath}
\usepackage{amssymb}
\usepackage{mathtools}
\usepackage{amsthm}
\usepackage{wrapfig}
\theoremstyle{plain}
\newtheorem{theorem}{Theorem}
\newtheorem{proposition}[theorem]{Proposition}

\theoremstyle{definition}

\theoremstyle{remark}

\newcommand{\CC}{\cellcolor{gray!25}}

\definecolor{iccvblue}{rgb}{0.21,0.49,0.74}

\makeatletter
\renewcommand{\@cite}[2]{\textcolor{blue}{[#1\if@tempswa , #2\fi]}}
\makeatother
\usepackage[T1]{fontenc}
\newlength\savewidth

\renewcommand{\paragraph}[1]{\vspace{1.25mm}\noindent\textbf{#1}}

\newcolumntype{x}[1]{>{\centering\arraybackslash}p{#1pt}}
\newcolumntype{y}[1]{>{\raggedright\arraybackslash}p{#1pt}}
\newcolumntype{z}[1]{>{\raggedleft\arraybackslash}p{#1pt}}

\newcommand{\app}{\raise.17ex\hbox{$\scriptstyle\sim$}}

\definecolor{deemph}{gray}{0.6}

\definecolor{baselinecolor}{gray}{.9}

\definecolor{zeroshotcolor}{gray}{.3}

\definecolor{LightCyan}{rgb}{0.92,1,1}

\definecolor{demphcolor}{RGB}{144,144,144}

\definecolor{LightCyan}{rgb}{0.92,1,1}
\definecolor{darkergreen}{RGB}{21, 152, 56}
\definecolor{red2}{RGB}{252, 54, 65}

\definecolor{bluebell}{rgb}{0.84, 0.84, 0.92}
\newcommand*\colorcmark[1]{%
  \expandafter\newcommand\csname #1cmark\endcsname{\textcolor{#1}{\ding{51}}}%
}
\colorcmark{green}

\newcommand*\colorxmark[1]{%
  \expandafter\newcommand\csname #1xmark\endcsname{\textcolor{#1}{\ding{55}}}%
}

\definecolor{Highlight}{HTML}{39b54a}  

\colorxmark{red}
\usepackage[colorlinks=true, linkcolor=blue, citecolor=blue]{hyperref}

\begin{document}

\title{Reflective Flow Sampling Enhancement}

\author{Zikai~Zhou$^{\star}$,
        Muyao~Wang$^{\star}$,
        Shitong~Shao,
        Lichen~Bai,
        Haoyi~Xiong,
        Bo~Han,
        Zeke~Xie$^{\dagger}$
\thanks{$^{\star}$These authors contributed equally to this work.}
\thanks{Zikai Zhou, Shitong Shao, Lichen Bai, and Zeke Xie are with The Hong Kong University of Science and Technology (Guangzhou), Guangzhou, China.}
\thanks{Muyao Wang is with The University of Tokyo, Japan.}
\thanks{Haoyi Xiong is with Microsoft, Beijing.}
\thanks{Bo Han is with Hong Kong Baptist University, Hong Kong.}
\thanks{$^{\dagger}$Correspondence to: zekexie@hkust-gz.edu.cn}}

\markboth{Journal of \LaTeX\ Class Files,~Vol.~14, No.~8, February~2026}%
{Shell \MakeLowercase{\textit{et al.}}: A Sample Article Using IEEEtran.cls for IEEE Journals}

\maketitle

\begin{abstract}
The growing demand for text-to-image generation has led to rapid advances in generative modeling. Recently, text-to-image diffusion models trained with flow matching algorithms, such as FLUX, have achieved remarkable progress and emerged as strong alternatives to conventional diffusion models. At the same time, inference-time enhancement strategies have been shown to improve the generation quality and text–prompt alignment of text-to-image diffusion models. However, these techniques are mainly applicable to conventional diffusion models and usually fail to perform well on flow models.
To bridge this gap, we propose Reflective Flow Sampling (RF-Sampling), a theoretically-grounded and training-free inference enhancement framework explicitly designed for flow models, especially for the CFG-distilled variants (i.e., models distilled from CFG guidance techniques), like FLUX. Departing from heuristic interpretations, we provide a formal derivation proving that RF-Sampling implicitly performs gradient ascent on the text-image alignment score. By leveraging a linear combination of textual representations and integrating them with flow inversion, RF-Sampling allows the model to explore noise spaces that are more consistent with the input prompt.
Extensive experiments across multiple benchmarks demonstrate that RF-Sampling consistently improves both generation quality and prompt alignment. Moreover, RF-Sampling is also the first inference enhancement method that can exhibit test-time scaling ability to some extent on FLUX.

\end{abstract}

\begin{IEEEkeywords}
Generative Models, Training-free Enhancement, CFG-distilled Models
\end{IEEEkeywords}

\begin{figure*}[t]
\centering
\includegraphics[width=1.\textwidth,trim={0cm 0cm 0cm 0cm},clip]{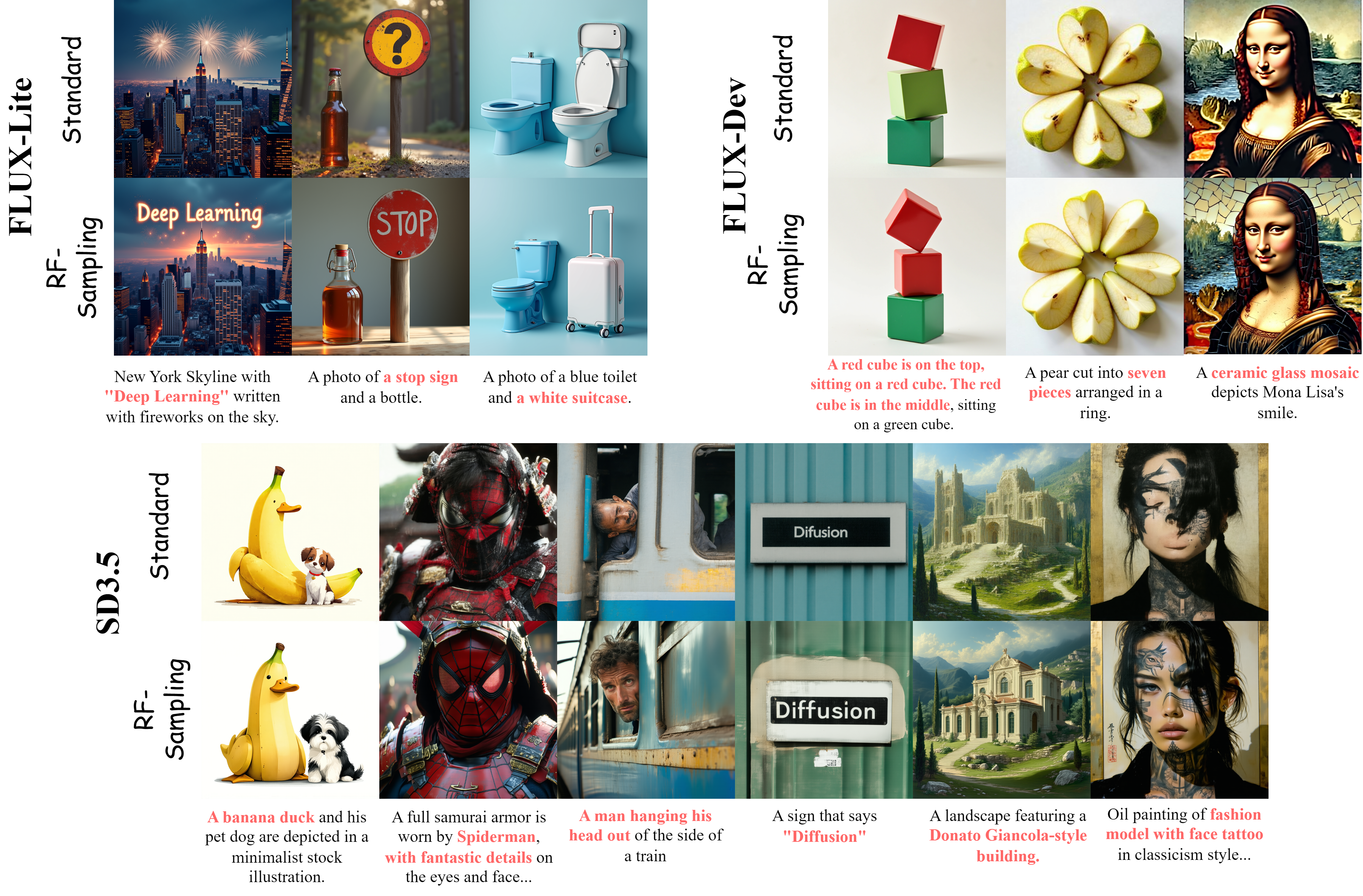}
\caption{Qualitative comparisons with three representative flow models. Images for each prompt are synthesized using the same random seed. 
More visualization results are in Appendix~\ref{sec:more_vis}.
}
\label{figure:opening}
\end{figure*}

\section{Introduction}

Text-to-image (T2I) generation has become one of the most active areas in generative modeling, driven by the growing demand for creating high-quality images from natural language prompts~\cite{Rombach_2022_CVPR,flux2024,flux1-lite,esser2024scalingrectifiedflowtransformers}. 
Recent advances in diffusion models and their training algorithms have led to remarkable progress, enabling strong performance across diverse domains~\cite{yang2023diffusion,esser2024scalingrectifiedflowtransformers,lipman2022flow,liu2022flow,ddpm_begin,zhou2025goldennoisediffusionmodels}. 
To further improve generation quality and prompt alignment, a variety of inference enhancement methods have been proposed for diffusion models~\cite{singhal2025general,ma2025inference,ho2022classifier}. 
Among them, inversion-based techniques such as Z-Sampling~\cite{lichenzigzag} exploit the discrepancy of the Classifier-Free Guidance (CFG)~\cite{ho2022classifier, xieguidance, shao2025core} parameter between denoising and DDIM inversion~\cite{ddim}. While effective, these strategies remain largely heuristic-driven and are primarily optimized for the conventional diffusion models, lacking a unified theoretical foundation to explain their behavior across different generative paradigms. 

At the same time, T2I diffusion models trained with flow matching algorithms~\cite{lipman2022flow}, such as FLUX~\cite{flux2024,flux1-lite}, have recently emerged as promising alternatives to conventional diffusion models, offering both competitive quality and efficient sampling. 
However, the unique geometric properties of flow matching, combined with the prevalence of CFG-distilled archiectures, pose significant challenges for existing inference enhancement.
To mitigate this limitation, recent work such as CFG-Zero*~\cite{fan2025cfg} has proposed optimized scaling and zero-init strategies to adapt CFG-style guidance to flow matching.
Nevertheless, the reliance on CFG-specific techniques still restricts the broader applicability of inference enhancement strategies, especially as CFG-distilled variants~\cite{meng2023distillation}, such as FLUX, continue to gain traction as efficient T2I generators.

To fill this gap, we introduce \textbf{R}eflective \textbf{F}low Sampling (RF-Sampling), a novel training-free inference enhancement framework explicitly designed for flow models that bypasses the reliance on CFG-style guidance entirely. Inspired by the key findings that rich semantic noise latent can improve the generative ability of conventional diffusion model~\cite{wang2024silent,lichenzigzag,zhou2025goldennoisediffusionmodels,po2023synthetic,bai2025weak}, our key idea is to interpolate textual representations and integrate them with flow inversion, which allows the model to explore noise spaces that are more consistent with the input prompt. We refer to such flow inversion as reflective flow. Moving beyond heuristic noise manipulation, we formulate RF-Sampling from the perspective of test-time optimization. We mathematically prove that the latent synthesized by our proposed ``High-Weight Denoising $\to$ Low-Weight Inversion'' mechanism is essentially an approximation of the gradient of the alignment score $\nabla_x \log p(c|x)$~(see formal derivation in Sec.~\ref{method} and Appendix.~\ref{apd:sec:theoretical_analysis}). Building upon this insight, RF-Sampling acts as a gradient ascent process on the latent states: it iteratively updates the trajectory towards regions with higher text-image alignment probability without requiring explicit CFG calculations or backpropagation. This theoretical foundation allows our method to function effectively even on CFG-distilled models like FLUX, where traditional guidance signal are absent or baked into the weights.


We empirically validate the effectiveness of RF-Sampling through comprehensive experiments across multiple benchmarks. Our results demonstrate that RF-Sampling consistently outperforms existing inference enhancement methods, which often struggle to generalize to flow-based architectures. As illustrated in Fig.~\ref{figure:opening}, the images synthesized by our method show noticeable improvements in aesthetic quality and semantic faithfulness. Notably, RF-Sampling is the first inference strategy to exhibit \textit{test-time scaling} properties on FLUX (see Fig.~\ref{figure:time-scaling}), where increased inference computation yields continuous gains in generation quality. Furthermore, we showcase the versatility of our framework by extending it to diverse downstream tasks, including LoRA composition, image editing, and video synthesis. The main contributions of this paper are summarized as follows:

\begin{itemize}
    \item \textbf{Novel Framework for Flow Models:} We propose RF-Sampling, a training-free inference enhancement framework tailored for flow matching models. It effectively addresses the limitations of CFG-distilled variants (e.g., FLUX), where traditional guidance methods often fail.
    
    \item \textbf{Theoretical Grounding:} Departing from heuristic approaches, we provide a rigorous theoretical derivation proving that RF-Sampling implicitly performs gradient ascent on the text-image alignment score. This offers a solid mathematical explanation for its effectiveness in navigating the flow manifold.
    
    \item \textbf{Superior Performance and Scalability:} We demonstrate that RF-Sampling achieves state-of-the-art performance on standard benchmarks and exhibits unique test-time scaling capabilities. Additionally, its robust generalization allows for seamless integration into various generative tasks, ranging from stylized image synthesis to video generation.
\end{itemize}


\begin{figure*}[t]
\centering
\includegraphics[width=.9\textwidth,trim={0cm 0cm 0cm 0cm},clip]{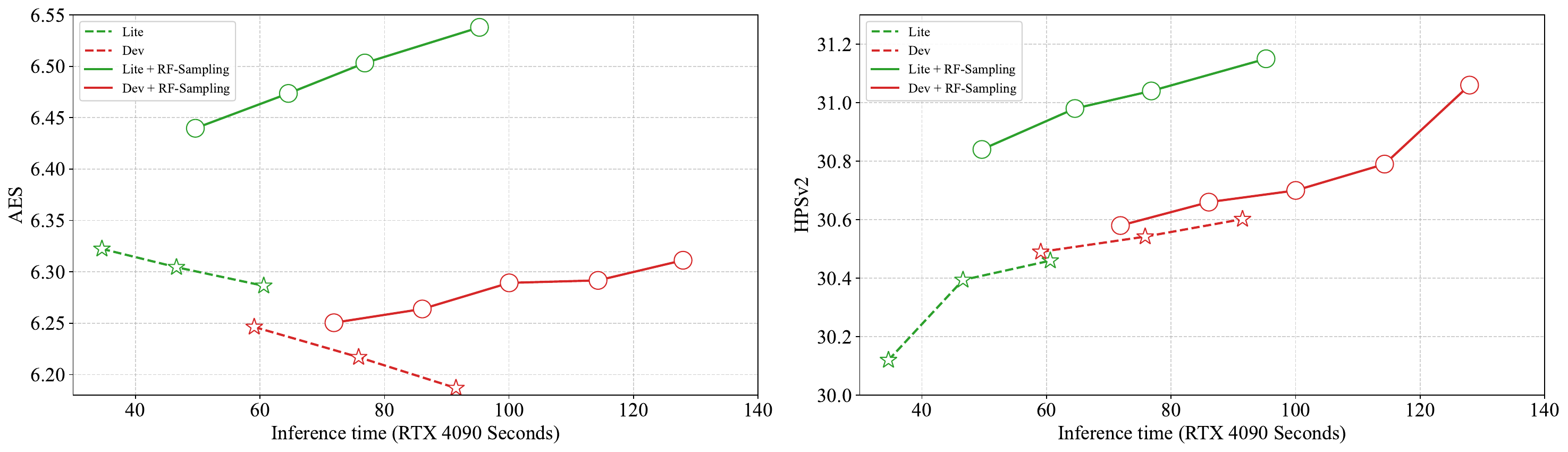}
\caption{RF-Sampling outperforms standard sampling with the same time consumption and significantly enhances the performance of FLUX-Lite and FLUX-Dev. With the increase of inference time, RF-Sampling consistently performs well, validating the scalability of our method.~(Breakdown is shown in Appendix Tab.~\ref{tab:fig2_breakdown}) }
\label{figure:time-scaling}
\end{figure*}
 
\section{Related Work}
\label{related_work}

\subsection{Text-to-Image Generation}
T2I generation is a rapidly evolving branch of generative modeling, aiming to synthesize realistic images that align with given textual descriptions. Early methods primarily relied on autoregressive models~\cite{salimans2017pixelcnn++,chen2020generative} or generative adversarial networks~\cite{Goodfellow2014GenerativeAN,mirza2014conditional}. However, in recent years, Diffusion models~\cite{ddpm_begin,Rombach_2022_CVPR} have emerged as the dominant paradigm in T2I due to their ability to generate high-quality and high-resolution images. These models generate images through a stepwise denoising process, starting from a random noise image and gradually transforming it into a meaningful image. 
In addition to conventional diffusion models, Flow Matching~\cite{lipman2022flow, liu2022flow} is an emerging diffusion model training technique that has rapidly gained traction as a strong alternative. Flow matching learns a continuous transformation that smoothly maps a simple noise distribution to the data distribution via matching the velocity. Unlike conventional diffusion models, which require multiple discrete denoising steps, flow matching models such as FLUX~\cite{flux2024,flux1-lite} can achieve efficient sampling with fewer neural function evaluations (NFEs), significantly reducing inference time while maintaining comparable, even superior generation quality to top conventional diffusion models. This efficiency advantage makes flow matching models particularly attractive for applications requiring fast generation. Our work focuses on developing dedicated inference enhancement strategies for these efficient flow models.

\subsection{CFG-distilled Guidance}

Classifier-Free Guidance~\cite{nips2021_classifier_free_guidance} has become a foundational technique in conditional diffusion models, as it improves alignment between synthesized images and text prompts by blending conditional and unconditional outputs during inference. Despite its effectiveness, CFG doubles inference cost by requiring two forward passes per denoising step.

To mitigate this inefficiency, a class of methods termed \emph{CFG-distilled}~\cite{meng2023distillation,lisnapfusion} techniques has been proposed. These methods aim to replicate the benefits of CFG using a single forward pass, thereby maintaining alignment quality while significantly reducing computational overhead. However, distillation effectively bakes the guidance into the vector field, removing the explicit unconditional branch typically used for inference. This architectural shift renders many enhancement methods inapplicable, which rely on manipulating the scale between conditional and unconditional branches, necessitating new approaches capable of recovering guidance information from the distilled vector field itself.

\subsection{Inference Enhancement for T2I Generation}
To enhance the generation quality and text alignment of conventional diffusion models, researchers have explored a range of inference enhancement strategies, which can be applied to pretrained models without requiring additional training. One key enhancement technique is Z-Sampling~\cite{lichenzigzag}, which leverages differences in the CFG parameters during the denoising process and DDIM inversion~\cite{ddim} to enhance the generation, suggesting that the noise latent space holds rich semantic information crucial for image quality. Other methods, such as~\cite{singhal2025general,ma2025inference,wang2024silent,zhou2025goldennoisediffusionmodels,po2023synthetic}, have also explored improving generation by manipulating the noise or latent space, indicating that intervention at the inference stage is an effective direction. Furthermore, in the context of Flow Matching, CFG-Zero*~\cite{fan2025cfg} mitigates the shortcomings of Flow CFG~\cite{zheng2023guided} by incorporating an optimized scale and zero-init, thereby refining the inference trajectory.
Despite the significant success of these inference enhancement strategies, they are typically tailored to the conventional diffusion models or rely on specific inference mechanisms, such as CFG technique and particular inversion algorithms. As a result, these methods cannot be directly transferred to flow models, especially when dealing with CFG-distilled variants. This limitation is particularly pressing as flow models gain increasing popularity due to their efficiency advantages, making it crucial to address this gap.

\section{Method}
\label{method}
In this section, we formulate RF-Sampling not merely as a heuristic inference trick, but as a principled optimization process~(details see Appendix.~\ref{apd:sec:theoretical_analysis}). We first introduce the preliminaries of flow matching, define the semantic parameterization, and then derive the theoretical connection between our reflective mechanism and the gradient of the text-image alignment score.

\subsection{Flow Matching Models}

Flow matching models represent a new class of generative models that synthesize images by solving an ordinary differential equation(ODE). The core idea is to train a neural network, parameterized as a vector field $v_\theta(x, t)$, to predict the flow that pushes a simple prior distribution $p_0(x)$ (e.g., standard Gaussian) to a complex target data distribution $p_1(x)$. The inference process then involves sampling a point from the prior $x_0 \sim p_0(x)$ and solving the ODE:
\begin{equation}
\frac{dx}{dt} = v_\theta(x, t),
\label{eq:ode}
\end{equation}
from $t=0$ to $t=1$ to obtain the final generated sample $x_1$. For convenience, we refer to this class of models as \emph{flow models} throughout the paper.

\begin{figure*}[t]
\centering
\includegraphics[width=1.\textwidth,trim={0cm 0cm 0cm 0cm},clip]{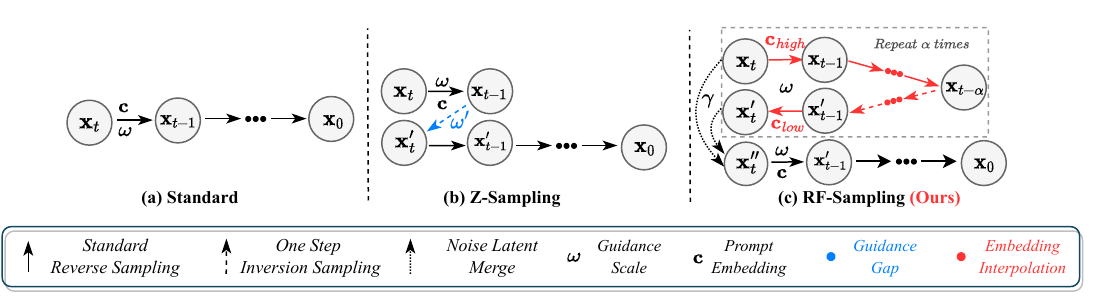}
\caption{Illustration of RF-Sampling. Compared to previous methods, RF-Sampling employs interpolation on text embeddings similar to the traditional CFG, thereby enhancing the model's generation quality and making it more suitable for flow diffusion models, especially CFG-distilled models.}
\label{figure:method}
\end{figure*}

\subsection{Parametrization in Semantic Space}

For T2I generation, the vector field is conditioned on a text embedding $c$, denoted as $v_\theta(x, t, c)$. Unlike conventional diffusion models, where CFG relies on joint training with both conditional and unconditional branches~\cite{nips2021_classifier_free_guidance,fan2025cfg}, Some flow models are typically trained only under conditional settings~\cite{flux2024,flux1-lite}. As a result, directly using CFG techniques or adopting an empty-text embedding as guidance for this kind of CFG-distilled flow models is inappropriate. To address this, we employ a linear interpolation between the conditional text embedding $c_{text}$ and an unconditional empty-text embedding $c_{uncond}$, yielding a mixed text embedding $c_{mix}$. In addition, we introduce a the amplifying weight $s$ to explicitly amplify the semantic discrepancy arising from the different text embeddings used in the denoising and inversion processes. The combination of text embedding can be described as:  
\begin{equation}
\begin{aligned}
c_{mix}(\beta) &= \beta \cdot c_{text} + (1-\beta) \cdot c_{uncond}, \\
c_w(s, \beta)&= c_{text} + {s}\cdot c_{mix}(\beta),
\label{eq:mixed_embedding}
\end{aligned}
\end{equation}
where $\beta$ is the interpolation weight directly controlling the difference between text prompt embeddings. A higher $\beta$ typically leads to a stronger alignment with the prompt. Therefore, the combination of $\beta$ and $s$ enables us to adjust the degree of text guidance throughout the inference process. 

Based on this parametrization, we define two distinct semantic states needed for our method:
\begin{itemize}
    \item \textbf{High-Weight State:} Defined by parameters $\{s_{high}, \beta_{high}\}$, yielding embedding $c_{high}$. This state imposes strong semantic alignment.
    \item \textbf{Low-Weight State:} Defined by parameters $\{s_{low}, \beta_{low}\}$, yielding embedding $c_{low}$. This state approximates the unconditional or weak-alignment flow.
\end{itemize}

\subsection{RF-Sampling as Gradient Ascent}
\label{sec:theory}
Our goal at inference time is to find a latent $x_t$ that maximizes the Alignment Score $J(x_t)$, defined as the log-posterior probability of the text condition given the noisy image latent~($J(x_t) = \log p(c|x_t)$). As established in score-based modeling theory~\cite{song2019generative,ho2022classifier}, the gradient of this score is proportional to the semantic vector field difference~(see Appendix.~\ref{apd:sec:optimzation_score}): 
\begin{equation}
    \nabla_x J(x_t) \propto v_\theta(x_t, c) - v_\theta(x_t, \emptyset),
    \label{eq:objective}
\end{equation}
where $\emptyset$ represents the null prompt. However, obtaining this gradient is challenging in CFG-distilled models~(e.g., FLUX), as they often lack an explicit unconditional branch $v_\theta(x, t, \emptyset)$.

To estimate the gradient $\nabla_x J(x_t)$ without an explicit unconditional branch, we introduce the reflective displacement vector $\Delta_{RF}$, generated by a ``High-Weight Denoising $\to$ Low-Weight Inversion'' operation over a small step $\delta t$~(see Appendix.~\ref{apd:sec:rf-sampling}):
\begin{equation}
    \Delta_{RF} = \delta t \cdot \left[v_\theta(x_t, t, c_{high}) - v_\theta(x_{t-\delta t}, t-\delta t, c_{low}) \right].
\end{equation}
Physically, this vector captures the net displacement caused by the semantic gap between the high and low guidance states. 

\begin{theorem}[\textbf{First-Order Validity}]
\label{prop:first_order}
Let the alignment score $J(x_t)$ be differentiable with gradient $\nabla_x J(x_t) \neq \mathbf{0}$. Assume the vector field $v_\theta(x, t, c)$ is locally Lipschitz continuous with respect to $x$ and differentiable with respect to $c$. Under the first-order Taylor expansion around the null prompt embedding, the reflective displacement $\Delta_{RF}$ satisfies:
\begin{equation}
    \Delta_{RF} = \mathcal{A} \cdot \delta t \cdot \nabla_x J(x_t) + \mathcal{O}(\|\mathbf{u}\|^2),
    \label{eq:thm_first_order}
\end{equation}
where $\mathcal{A} = s_{high}\beta_{high} - s_{low}\beta_{low} > 0$ is the alignment coefficient, and $\mathbf{u}$ is the semantic direction vector~($\mathbf{u} = c_{text} - c_{uncond}$). 
Furthermore, for a sufficiently small step size $\gamma > 0$, the update $x^{\prime\prime}_t = x_t + \gamma \Delta_{RF}$ satisfies the ascent inequality:
\begin{equation}
    J(x^{\prime\prime}_t) > J(x_t) \iff \langle \Delta_{RF}, \nabla_x J(x_t) \rangle > 0.
\end{equation}
Equality in Eq.~\ref{eq:thm_first_order} holds if $v_\theta$ is strictly linear with respect to the text embedding $c$.
\end{theorem}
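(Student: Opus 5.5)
The plan is to linearize the vector field in the text embedding around the null embedding and read off the coefficient of the semantic direction $\mathbf{u}=c_{text}-c_{uncond}$.

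\textbf{Step 1: decompose the embeddings.} Take $c_{uncond}$ as the null embedding $\emptyset$. By Eq.~\ref{eq:mixed_embedding}, $c_{mix}(\beta)=c_{uncond}+\beta\mathbf{u}$. Hence
\[
c_w(s,\beta)=c_{text}+s\,c_{uncond}+s\beta\,\mathbf{u}.
\]
The terms $c_{text}+s\,c_{uncond}$ are identical in $c_{high}$ and $c_{low}$ only when $s_{high}=s_{low}$. Otherwise the difference $(s_{high}-s_{low})c_{uncond}$ appears. I would treat this as an offset lying along the null direction. The cleanest route is to adopt the convention, implicit in the statement, that the expansion is taken around a common base point $\bar c$ with $c_{high}=\bar c+s_{high}\beta_{high}\mathbf{u}$ and $c_{low}=\bar c+s_{low}\beta_{low}\mathbf{u}$. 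One could also absorb the $c_{uncond}$ offset by normalizing $c_{uncond}=0$, as is common for zeroed empty-prompt embeddings; I will state this as an explicit assumption.

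\textbf{Step 2: Taylor expansion and time-step mismatch.} Using differentiability in $c$, write
\[
v_\theta(x,t,\bar c+\lambda\mathbf{u})=v_\theta(x,t,\bar c)+\lambda\,\partial_c v_\theta(x,t,\bar c)\mathbf{u}+\mathcal{O}(\|\mathbf{u}\|^2).
\]
The low-weight term is evaluated at $(x_{t-\delta t},t-\delta t)$. Local Lipschitz continuity in $x$, together with continuity in $t$, lets me replace it by its value at $(x_t,t)$. This costs an $\mathcal{O}(\delta t)$ error inside the bracket, which is why $\Delta_{RF}$ carries the prefactor $\delta t$.

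Subtracting the two expansions cancels the base terms and leaves
\[
\Delta_{RF}=\delta t\,\mathcal{A}\,\partial_c v_\theta\,\mathbf{u}+\mathcal{O}(\|\mathbf{u}\|^2),
\]
with $\mathcal{A}=s_{high}\beta_{high}-s_{low}\beta_{low}$.

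\textbf{Step 3: connect to the alignment gradient.} Expanding $v_\theta(x_t,c)-v_\theta(x_t,\emptyset)$ to first order gives $\partial_c v_\theta\,\mathbf{u}+\mathcal{O}(\|\mathbf{u}\|^2)$. By Eq.~\ref{eq:objective} (Appendix~\ref{apd:sec:optimzation_score}), this equals $\nabla_x J(x_t)$ up to the proportionality constant, which I absorb into the normalization. That yields Eq.~\ref{eq:thm_first_order}. The claim $\mathcal{A}>0$ follows from the design choice $s_{high}\beta_{high}>s_{low}\beta_{low}$ defining the high and low states. If $v_\theta$ is linear in $c$, every second-order remainder vanishes identically, so Eq.~\ref{eq:thm_first_order} holds with equality.

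\textbf{Step 4: the ascent inequality.} Expand $J$ to first order:
\[
J(x_t+\gamma\Delta_{RF})=J(x_t)+\gamma\langle\Delta_{RF},\nabla_xJ(x_t)\rangle+o(\gamma).
\]
For $\gamma$ small enough the sign of the increment is the sign of the inner product. This gives ``$\Leftarrow$'', and ``$\Rightarrow$'' whenever the inner product is nonzero. The zero inner-product case must be excluded or handled by a second-order caveat; I would note this. Substituting Step 3 makes the inner product $\mathcal{A}\,\delta t\,\|\nabla_xJ\|^2>0$ up to the remainder, which is where $\nabla_xJ\neq\mathbf{0}$ is used.

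\textbf{Main obstacle.} The hardest part is the bookkeeping in Steps 1–2. The $s\,c_{uncond}$ offsets and the mismatched evaluation point $(x_{t-\delta t},t-\delta t)$ do not literally cancel. They must either be assumed away (common base point or $c_{uncond}=0$) or bounded as additional $\mathcal{O}(\delta t^2)$ terms that the statement's $\mathcal{O}(\|\mathbf{u}\|^2)$ silently absorbs. I would first try to make these assumptions explicit.
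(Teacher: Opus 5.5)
Your proposal is correct at the level of rigor of the statement and follows essentially the same route as the paper's appendix: write $c_w=(1+s)c_{uncond}+(1+s\beta)\mathbf{u}$, linearize $v_\theta$ in $c$ so the base terms cancel and only the coefficient $\mathcal{A}$ survives, identify $\partial_c v_\theta\,\mathbf{u}$ with $v_\theta(x_t,c)-v_\theta(x_t,\emptyset)\propto\nabla_x J$, use Lipschitz continuity to replace $v_\theta(x_{t-\delta t},t-\delta t,c_{low})$ by its value at $(x_t,t)$, and finish with a first-order expansion of $J$. Your explicit common-base-point (or $c_{uncond}=0$) assumption plays exactly the role of the paper's approximation $v_\theta(x,(1+s)c_{uncond})\approx v_\theta(x,c_{uncond})$; take that base to be the null embedding itself, so the derivative in your Step 2 is the same one as in Step 3. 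Your caveats are points the paper's proof silently glosses over: the zero-inner-product case of the ``iff'', and the unabsorbed $\mathcal{O}(\delta t^2)$ mismatch term, which also survives in the linear-in-$c$ equality case.
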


\begin{proof}[Proof Sketch]
(Full derivation in Appendix~\ref{apd:sec:taylor_derivation}). We decompose $v_\theta(x, c_w)$ using Taylor expansion. By defining $\Delta_{RF} \triangleq \delta t [v_\theta(c_{high}) - v_\theta(c_{low})]$, the zeroth-order terms (unconditional flow) cancel out. The remaining dominant term is linear in the score gradient $\nabla_x J$, scaled by $\mathcal{A}$. Since $\mathcal{A} > 0$, the inner product $\langle \Delta_{RF}, \nabla_x J \rangle$ is positive, ensuring a strictly increasing direction.
\end{proof}

While Theorem~\ref{prop:first_order} guarantees the ascent direction, it does not constrain the magnitude. We address the optimal step size via second-order analysis.

\begin{theorem}[\textbf{Second-Order Optimality}]
\label{prop:second_order}
Assume the alignment score $J(x)$ is twice differentiable ($C^2$) and locally concave along the direction of $\Delta_{RF}$ (i.e., $\Delta_{RF}^\top \nabla^2 J(x_t) \Delta_{RF} < 0$). Let $\gamma$ be the merge ratio. The objective improvement $\Delta J(\gamma) = J(x_t + \gamma \Delta_{RF}) - J(x_t)$ is bounded by the quadratic expansion:
\begin{equation}
    \Delta J(\gamma) \approx \gamma \underbrace{\langle \Delta_{RF}, \nabla_x J(x_t) \rangle}_{G_{\text{linear}} > 0} - \frac{1}{2}\gamma^2 \underbrace{| \Delta_{RF}^\top \mathbf{H}(x_t) \Delta_{RF} |}_{L_{\text{penalty}} > 0},
    \label{eq:thm_second_order}
\end{equation}
where $\mathbf{H}(x_t) = \nabla^2_x J(x_t)$ is the Hessian matrix. The approximation becomes an equality if $J(x)$ is a quadratic function (e.g., Gaussian posterior).
Consequently, there exists a unique optimal step size $\gamma^*$ that maximizes the gain:
\begin{equation}
    \gamma^* = \frac{\langle \Delta_{RF}, \nabla_x J(x_t) \rangle}{| \Delta_{RF}^\top \mathbf{H}(x_t) \Delta_{RF} |}.
\end{equation}
\end{theorem}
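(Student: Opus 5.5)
The plan is a one-dimensional Taylor expansion of $J$ along the ray $\gamma \mapsto x_t + \gamma \Delta_{RF}$, followed by a scalar optimization of the resulting quadratic in $\gamma$. Define $\phi(\gamma) = J(x_t + \gamma \Delta_{RF})$. Since $J \in C^2$, $\phi$ is $C^2$ on a neighborhood of $0$, and the chain rule gives
\begin{equation*}
\phi'(0) = \langle \Delta_{RF}, \nabla_x J(x_t)\rangle, \qquad \phi''(0) = \Delta_{RF}^\top \mathbf{H}(x_t)\, \Delta_{RF}.
\end{equation*}
Taylor's theorem with Peano remainder then yields
\begin{equation*}
\Delta J(\gamma) = \phi(\gamma)-\phi(0) = \gamma\, \phi'(0) + \tfrac12 \gamma^2 \phi''(0) + o(\gamma^2).
\end{equation*}
The local concavity hypothesis $\phi''(0) < 0$ lets me write $\phi''(0) = -|\Delta_{RF}^\top \mathbf{H}(x_t) \Delta_{RF}|$, which gives the displayed form of Eq.~\ref{eq:thm_second_order} with $L_{\text{penalty}} > 0$.

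For the sign of the linear coefficient, I will invoke Theorem~\ref{prop:first_order}. Up to $\mathcal{O}(\|\mathbf{u}\|^2)$ we have $\Delta_{RF} = \mathcal{A}\,\delta t\, \nabla_x J(x_t)$. Hence $G_{\text{linear}} = \mathcal{A}\,\delta t\, \|\nabla_x J(x_t)\|^2 + \mathcal{O}(\|\mathbf{u}\|^2)$, which is strictly positive because $\mathcal{A} > 0$, $\nabla_x J \neq \mathbf{0}$, and the semantic gap is small.

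The equality clause follows because, if $J$ is quadratic, as for a Gaussian posterior with $\log p(c|x)$ quadratic in $x$, then $\phi$ is exactly a quadratic polynomial and the remainder vanishes identically.

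For optimality, I will maximize the model $q(\gamma) = \gamma G - \tfrac12 \gamma^2 L$ with $G, L > 0$. Since $q''(\gamma) = -L < 0$, $q$ is strictly concave, so its unique critical point $\gamma^* = G/L$ is the unique global maximizer. At that point the gain is $q(\gamma^*) = G^2/(2L) > 0$, which matches the stated $\gamma^*$.

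The main obstacle is that, for non-quadratic $J$, the claim ``unique optimal step size'' holds rigorously only for the quadratic model, not for $\Delta J$ itself. I will address this in two ways. First, I will state explicitly that $\gamma^*$ maximizes the second-order model. Second, I will note that $\gamma^* = \mathcal{O}(1)$ relative to $\|\Delta_{RF}\|^{-1}$ scaling, so the expansion is valid at $\gamma^*$ once $\|\Delta_{RF}\|$ is small, which holds because $\delta t$ is small. I also need $G > 0$ so that $\gamma^* > 0$ is compatible with the ``sufficiently small step'' regime of Theorem~\ref{prop:first_order}. To keep the argument self-consistent, I will carry the $\mathcal{O}(\|\mathbf{u}\|^2)$ error from Theorem~\ref{prop:first_order} through $G$ rather than drop it.
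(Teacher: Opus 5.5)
Your core argument is the paper's own proof: a second-order expansion of $J(x_t+\gamma\Delta_{RF})$, the concavity hypothesis to get a downward parabola, and setting its derivative to zero. The Peano-remainder version plus your first remedy (saying that $\gamma^*$ maximizes the quadratic model) is exactly the level at which the paper's argument operates, and it suffices for the statement as written with $\approx$.

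Your second remedy is false and should be dropped. Replacing $\Delta_{RF}$ by $\lambda\Delta_{RF}$ with $\lambda>0$ sends $G_{\text{linear}}\mapsto\lambda G_{\text{linear}}$ and $L_{\text{penalty}}\mapsto\lambda^2 L_{\text{penalty}}$. Hence $\gamma^*\mapsto\gamma^*/\lambda$, and the displacement at the optimum is scale invariant:
\[
\gamma^*\Delta_{RF}=\ell\,\hat d,\qquad \hat d=\frac{\Delta_{RF}}{\|\Delta_{RF}\|},\qquad \ell=\frac{\langle \hat d,\nabla_x J(x_t)\rangle}{|\hat d^\top\mathbf{H}(x_t)\hat d|}.
\]
To leading order, $\delta t$ enters $\Delta_{RF}$ only as a positive prefactor. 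So shrinking $\delta t$ makes $\gamma^*$ grow like $1/\delta t$, while the trial point $x_t+\gamma^*\Delta_{RF}$ stays put: it is the one-dimensional Newton point along $\hat d$, fixed by $J$ near $x_t$. The remainder at $\gamma^*$ is governed by $\ell$ (for example, it is at most $\tfrac{M}{6}\ell^3$ if $\nabla^2J$ is $M$-Lipschitz), and it does not shrink as $\delta t\to0$. For the same reason, small $\delta t$ does not put $\gamma^*$ in the small-step regime of Theorem~\ref{prop:first_order}, since the relevant quantity $\gamma^*\|\Delta_{RF}\|=\ell$ is unaffected.

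If you want a claim about $\Delta J$ itself, the hypothesis must be on $J$, not on $\delta t$. For instance, suppose $\nabla^2J$ is $M$-Lipschitz on the segment from $x_t$ to $x_t+2\gamma^*\Delta_{RF}$ and $2M\ell<|\hat d^\top\mathbf{H}(x_t)\hat d|$. Then $\phi''<0$ on $[0,2\gamma^*]$ and $\phi'(2\gamma^*)\le\|\Delta_{RF}\|\,\ell\,\bigl(2M\ell-|\hat d^\top\mathbf{H}(x_t)\hat d|\bigr)<0$. So $\Delta J$ has a unique maximizer $\hat\gamma$ on $[0,2\gamma^*]$, with $|\hat\gamma-\gamma^*|\le\bigl(2M\ell/|\hat d^\top\mathbf{H}(x_t)\hat d|\bigr)\gamma^*$.

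The same tension affects your plan to re-derive $G_{\text{linear}}>0$ while carrying the error term. You get $G_{\text{linear}}=\mathcal{A}\,\delta t\,\|\nabla_xJ(x_t)\|^2+\langle r,\nabla_xJ(x_t)\rangle$, where $\|r\|=\mathcal{O}(\|\mathbf{u}\|^2)$ carries no factor $\delta t$ (as you wrote it). Positivity therefore needs $\|r\|<\mathcal{A}\,\delta t\,\|\nabla_xJ(x_t)\|$, which is a joint condition on $\mathbf u$ and $\delta t$.
\begin{itemize}
\item ``The semantic gap is small'' alone does not give it: in the paper's linearization $\nabla_xJ\propto\nabla_c v_\theta\cdot\mathbf u$ is itself $\mathcal{O}(\|\mathbf u\|)$, so shrinking $\mathbf u$ shrinks the main term too.
\item It becomes harder, not easier, as $\delta t\to0$, which contradicts your appeal to small $\delta t$.
\end{itemize}
The clean route is to take $G_{\text{linear}}>0$ as supplied by Theorem~\ref{prop:first_order}, as the paper's proof sketch does and as the underbrace in the statement already asserts, rather than re-deriving it with explicit error terms.
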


\begin{proof}[Proof Sketch]
(See Appendix~\ref{apd:sec:taylor_derivation}). We apply the second-order Taylor expansion to $J(x_t + \gamma \Delta_{RF})$. The linear term $G_{\text{linear}}$ represents the gradient gain derived from Theorem~\ref{prop:first_order}. The quadratic term involves the Hessian $\mathbf{H}$. Since we maximize a log-probability, the local curvature is negative definite, acting as a penalty $L_{\text{penalty}}$. Solving $\frac{d}{d\gamma}\Delta J(\gamma)=0$ yields the closed-form optimal $\gamma^*$.
\end{proof}

\begin{table*}[th]
\begin{center}

\renewcommand\arraystretch{1.35}
\setlength{\tabcolsep}{10pt}

\caption{Main experiments on HPDv2~\cite{wu2023humanpreferencescorev2} dataset across 3 different models. The experiments show the consistent superior performance compared with previous methods, highlighting the effectiveness of our RF-Sampling. Note that other baselines are not applicable to FLUX.} 

\resizebox{1.\linewidth}{!}{\begin{tabular}{cccccccccccc}
\hline
{ }                                                                                 & { }                         & \multicolumn{2}{c}{{ Animation}}           & \multicolumn{2}{c}{{ Concept-art}}         & \multicolumn{2}{c}{{ Painting}}            & \multicolumn{2}{c}{{ Photo}}               & \multicolumn{2}{c}{Average}                                    \\ \cline{3-12} 
\multirow{-2}{*}{{ Model}}                                                          & \multirow{-2}{*}{{ Method}} & AES(↑)                         & HPSv2(↑)                      & AES(↑)                         & HPSv2(↑)                      & AES(↑)                         & HPSv2(↑)                      & AES(↑)                         & HPSv2(↑)                      & AES(↑)                         & HPSv2(↑)                      \\ \hline
{ }                                                                                 & { Standard}                 & 5.9474                         & 30.93                         & 6.1926                         & 28.59                         & 6.4161                         & 28.84                         & 5.4077                         & 27.66                         & 5.9909                         & 29.01                         \\
{ }                                                                                 & GI~\cite{kynkaanniemi2024applying}                                              & 5.9814                         & 26.23                         & \CC6.2188 & 23.48                         & 6.2188                         & 23.61                         & 5.3417                         & 23.81                         & 5.9401                         & 24.28                         \\
{ }                                                                                 & Z-Sampling~\cite{lichenzigzag}                                      & 5.8729                         & 30.58                         & 6.0427                         & 27.58                         & 6.2579                         & 28.21                         & 5.4394                         & 27.92                         & 5.9032                         & 28.57                         \\
{ }                                                                                 & CFG++~\cite{chung2024cfgmanifoldconstrainedclassifierfree}                                           & 5.8329                         & 29.81                         & 6.0969                         & 27.41                         & 6.3206                         & 27.81                         & 5.3969                         & 27.04                         & 5.9118                         & 28.02                         \\
{ }                                                                                 & CFG-Zero*~\cite{fan2025cfg}                                       & 5.9743                         & 31.22                         & 6.2066                         & 29.27                         & \CC6.4280 & 29.22                         & 5.4190                         & 27.65                         & 6.0061                         & 29.34                         \\
\multirow{-6}{*}{{ \begin{tabular}[c]{@{}c@{}}SD3.5\\ (28 steps)\end{tabular}}}     & { RF-Sampling}              & \CC6.0164 & \CC31.71 & 6.2093                         & \CC29.80 & 6.3702                         & \CC29.77 & \CC5.4973 & \CC28.51 & \CC6.0243 & \CC29.95 \\ \hline
{ }                                                                                 & { Standard}                 & 6.2635                         & 31.96                         & 6.5378                         & 30.01                         & 6.7381                         & 30.67                         & 5.8132                         & 29.04                         & 6.3381                         & 30.42                         \\
{ }                                                                                 & Z-Sampling & 6.3850 & 32.18 & 6.5162 & 30.29 & 6.7306  & 30.59 & 5.8084 & 29.21  & 6.3600 & 30.56 \\
\multirow{-2}{*}{{ \begin{tabular}[c]{@{}c@{}}FLUX-Lite\\ (28 steps)\end{tabular}}} & { RF-Sampling}              & \CC6.4350 & \CC32.78 & \CC6.6240 & \CC30.70 & \CC6.7832 & \CC30.95 & \CC5.9864 & \CC29.93 & \CC6.4572 & \CC31.09 \\ \hline
{ }                                                                                 & { Standard}                 & 6.1459                         & 32.26                         & 6.4934                         & 30.56                         & 6.4934                         & 31.27                         & 5.6515                         & 29.64                         & 6.1960                         & 30.93                         \\
{ }                                                                                 & Z-Sampling & 6.1741 & 32.24 & 6.5013 & 30.58 & 6.6510 & 31.29 & 5.6564 & 29.58 & 6.2457 & 30.92 \\
\multirow{-2}{*}{{ \begin{tabular}[c]{@{}c@{}}FLUX-Dev\\ (50 steps)\end{tabular}}}  & { RF-Sampling}              & \CC6.1866 & \CC32.40 & \CC6.5153 & \CC30.80 & \CC6.5153 & \CC31.45 & \CC5.6799 & \CC29.81 & \CC6.2243 & \CC31.12 \\ \hline
\end{tabular}
}
\label{tab:hpd}
\end{center}
\end{table*}
\begin{table*}[th]
\begin{center}

\renewcommand\arraystretch{1.3}
\setlength{\tabcolsep}{10pt}

\caption{Main experiments on Pick-a-Pic~\cite{kirstain2023pickapicopendatasetuser} and DrawBench~\cite{saharia2022photorealistic} datasets across 3 different models. Obviously, our proposed RF-Sampling exhibits superior performance across 4 different metrics. Note that other baselines are not applicable to FLUX.} 
\resizebox{1.\linewidth}{!}{\begin{tabular}{cccccccccc}
\hline
{ }                                                                                 & { }                         & \multicolumn{3}{c}{Pick-a-Pic}                                                                  & { }       & \multicolumn{3}{c}{DrawBench}                                                                   & { }       \\ \cline{3-10} 
\multirow{-2}{*}{{ Model}}                                                          & \multirow{-2}{*}{{ Method}} & PickScore(↑)                  & ImageReward(↑)                 & AES(↑)                         & HPSv2(↑)                      & PickScore(↑)                  & ImageReward(↑)                 & AES(↑)                         & HPSv2(↑)                      \\ \hline
{ }                                                                                 & { Standard}                 & 21.99                         & 85.13                          & 5.9435                         & 29.32                         & 22.60                         & 86.02                          & 5.4591                         & 27.76                         \\
{ }                                                                                 & GI                                              & 21.19                         & 28.94                          & 5.9534                         & 24.63                         & 22.11                         & 47.53                          & 5.4279                         & 23.96                         \\
{ }                                                                                 & Z-Sampling                                      & 21.73                         & 89.03                          & 5.9091                         & 28.84                         & 22.55                         & 92.05                          & 5.4784                         & 28.06                         \\
{ }                                                                                 & CFG++                                           & 21.79                         & 85.17                          & 5.8821                         & 28.50                         & 22.54                         & 81.80                          & 5.3757                         & 27.18                         \\
{ }                                                                                 & CFG-Zero*                                       & 21.88                         & 86.78                          & 5.9536                         & 29.37                         & 22.66                         & 91.90                          & 5.4511                         & 28.10                         \\
\multirow{-6}{*}{{ \begin{tabular}[c]{@{}c@{}}SD3.5\\ (28 steps)\end{tabular}}}     & { RF-Sampling}              & \CC21.99 & \CC101.50 & \CC5.9981 & \CC29.90 & \CC22.64 & \CC94.10  & \CC5.4915 & \CC28.74 \\ \hline
{ }                                                                                 & { Standard}                 & 21.91                         & 86.64                          & 6.3224                         & 30.12                         & 22.59                         & 86.51                          & 5.8969                        & 29.56                         \\
 & Z-Sampling & 21.89 & 94.37 & 6.4123 & 30.14 & 22.57 & 92.97 & 5.9102 & 29.53 \\
\multirow{-2}{*}{{ \begin{tabular}[c]{@{}c@{}}FLUX-Lite\\ (28 steps)\end{tabular}}} & { RF-Sampling}              & \CC22.05 & \CC99.21  & \CC6.5379 & \CC31.16 & \CC22.69 & \CC96.15  & \CC6.0137 & \CC29.75 \\ \hline
{ }                                                                                 & { Standard}                 & 22.06                         & 97.47                          & 6.2464                         & 30.49                         & 22.84                         & 99.73                          & 5.7775                         & 30.14                         \\
{ }                                                                                 & Z-Sampling & 21.95 & 99.89 & 6.2378 & 30.79 & 22.79 & 100.27 & 5.7793 & 30.13 \\
\multirow{-2}{*}{{ \begin{tabular}[c]{@{}c@{}}FLUX-Dev\\ (50 steps)\end{tabular}}}  & { RF-Sampling}              & \CC22.19 & \CC100.90 & \CC6.3113 & \CC31.06 & \CC22.93 & \CC106.21 & \CC5.8051 & \CC30.27 \\ \hline
\end{tabular}
}
\label{tab:pick_and_draw}
\end{center}
\end{table*}

\subsection{RF-Sampling Algorithm}

Guided by the theoretical analysis, we implement RF-Sampling as a three-stage process within each integration step of the ODE solver, as illustrated in Fig.~\ref{figure:method}.

\begin{figure}[h]
    \centering
        \centering
        \includegraphics[width=0.48\textwidth]{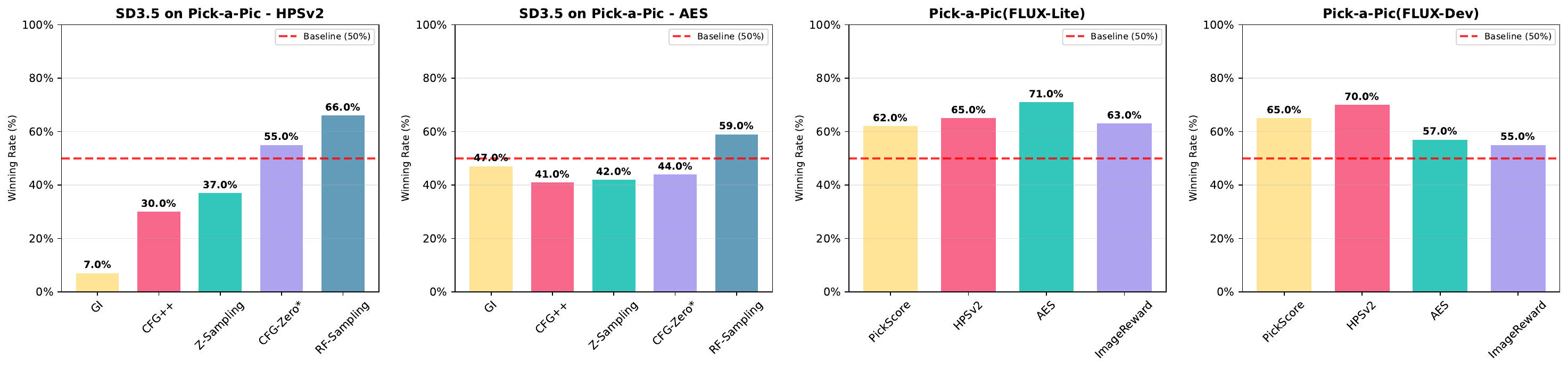}
        \caption{The winning rate of RF-Sampling over other methods on SD3.5. The standard sampling~(baseline) winning rate defaults to 50\%. The results reveal the superiority of RF-Sampling in synthesizing images with good quality.}
        \label{fig:sd3.5-main-winninf}
\end{figure}

\begin{figure}[h]
        \centering
        \includegraphics[width=0.48\textwidth]{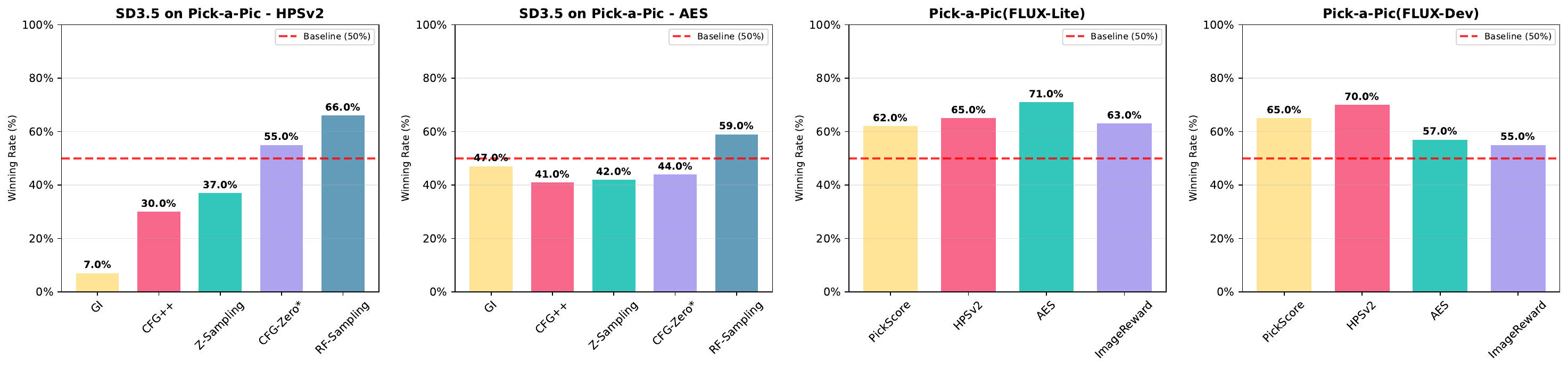}
        \caption{The winning rate of RF-Sampling over other methods on FLUX. The standard sampling~(baseline) winning rate defaults to 50\%. The results reveal the superiority of RF-Sampling in synthesizing images with good quality.}
        \label{fig:flux-main-winning}
\end{figure}

\paragraph{Stage 1: High-Weight Denoising. }
First, we perform a standard denoising step using a relatively \textbf{high interpolation weight} $\beta_{high}$ and a relatively high amplifying weight $s_{high}$ to get the mixed text embedding $c_{high}$, according to Eqn.~\ref{eq:mixed_embedding}. We then take $\alpha$ steps of the ODE solver from $t$ to $t-\alpha$ to obtain the next latent feature $x_{t-\alpha}$:
\begin{equation}
x_{t-\alpha} = x_t + \sum_{i=1}^{\alpha} v_\theta(x_{t-i+1}, t-i+1, c_{high}) \Delta t,
\label{eq:stage1}
\end{equation}
where $v_\theta$ is the conditioned vector field, $\alpha$ is the forward steps, and $\Delta t$ is the integration step size. This stage ensures a rapid and strong alignment with the given text prompt.

\paragraph{Stage 2: Low-Weight Inversion. }
Instead of directly using the newly obtained $x_{t-\alpha}$, we perform a backward-step ODE solving from $x_{t-\alpha}$. Crucially, this inversion uses a low interpolation weight $\beta_{low}$ and a relatively low amplifying weight $s_{low}$ for the mixed text embedding $c_{low}$, according to Eqn.~\ref{eq:mixed_embedding}. The corrected latent feature $x'_t$ is obtained by:
\begin{equation}
x'_t = x_{t-\alpha} - \sum_{i=1}^{\alpha} v_\theta(x_{t-\alpha+i-1}, t-\alpha+i-1, c_{low}) \Delta t,
\label{eq:stage2}
\end{equation}
where $x'_t$ is the corrected latent feature after inversion. This backward step effectively ``reflects'' the high-weight-guided latent feature back towards a more semantically centered region of the latent space. It filters out potential latent that have rich semantic information, providing a more text information starting point for the next forward step. The vector difference $x_t - x^\prime_t$ numerically represents the reflective displacement $\Delta_{RF}$.

\paragraph{Stage 3: Normal-Weight Denoising. }
With the semantically corrected feature $x'_t$, we explicitly perform the gradient ascent update and then proceed with the standard denoising. The latent is updated using the merge ratio $\gamma$ (serving as the learning rate in Prop.~\ref{prop:second_order}). Then we utilize the standard text embedding $c$ and the standard guidance scale $w$ to obtain the final latent feature for the next time step $x''_{t-1}$:
\begin{equation}
\begin{aligned}
    x''_t &= x_t + \gamma \cdot (x_t - x'_t), \quad &&\text{//Gradient Ascent}\\
    x''_{t-1} &=x''_t + v_\theta(x''_t, t, c) \Delta t, \quad &&\text{//Standard Denoising}
\end{aligned}
\label{eq:stage3}
\end{equation}
where $x''_{t-1}$ is the final latent feature for the next time step. This step ensures that the generation process continues to progress towards the target image distribution with an appropriate level of text alignment, building on the refined latent feature from the inversion stage.
 
By repeating this three-stage process for each time step, the latent $x_t$ is optimized to maximize $J(x_t)$ before moving to the next state, achieving a better high-quality and semantically coherent image synthesis. The detail process is shown in Algorithm~\ref{alg:reflective-flow-sampling}.

\begin{figure*}[th]
\centering
\includegraphics[width=1.\textwidth,trim={0cm 0cm 0cm 0cm},clip]{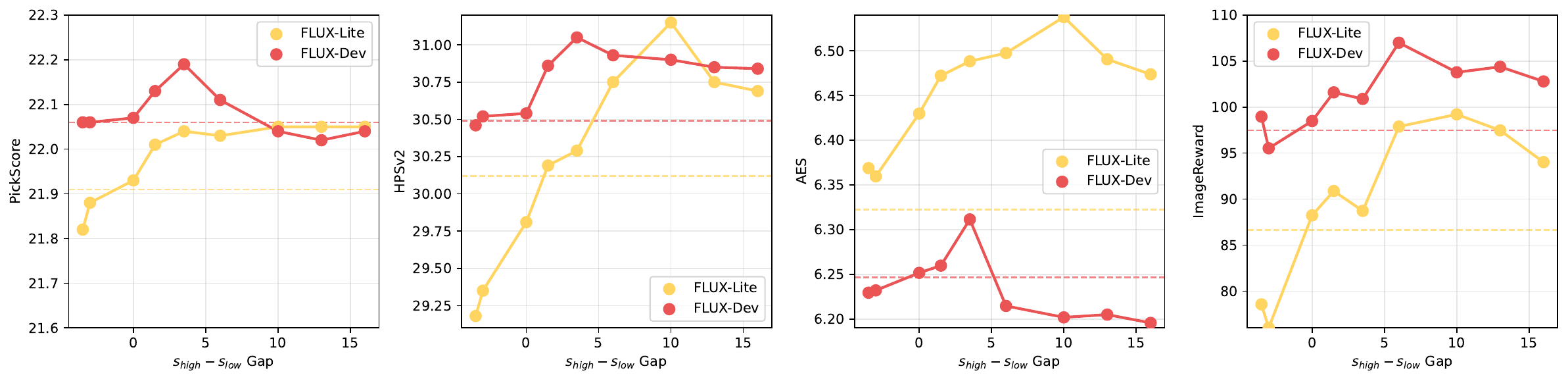}
\caption{Ablation of the gap between $s_{high}$ and $s_{low}$. When the gap of $s_{high}$ - $s_{low}$ increases within a certain range, the quality of synthesized images improves. The dotted lines represents the performance of the standard method. This indicates that within a certain range of values, RF-Sampling perform better than the standard one, demonstrating the robustness of it.}
\label{figure:cfg-gap}
\end{figure*}

\begin{figure*}[th]
\centering
\includegraphics[width=1.\textwidth,trim={0cm 0cm 0cm 0cm},clip]{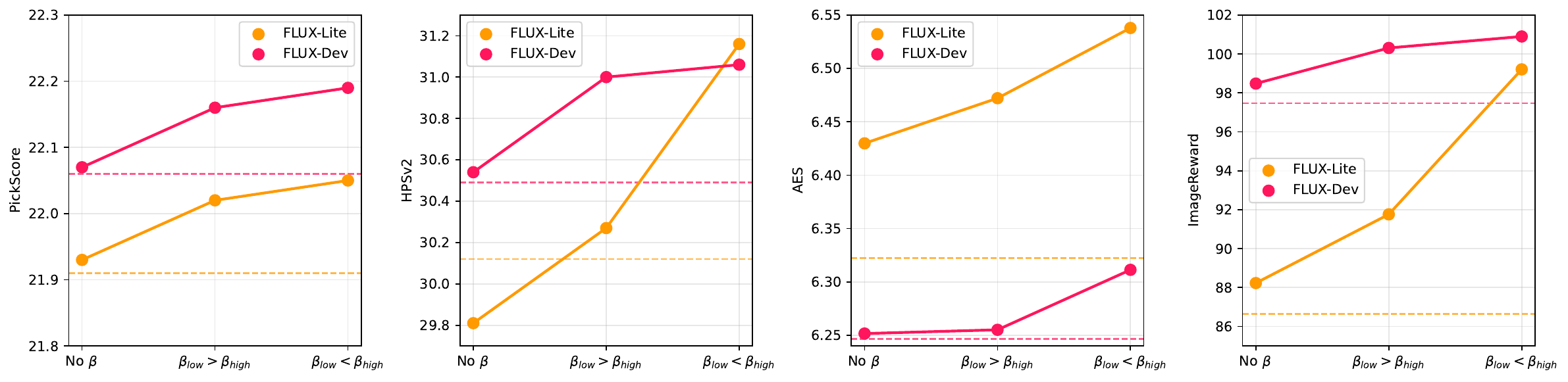}
\caption{Ablation study on the effect of $\beta_{low}$ and $\beta_{high}$. No $\beta$ means that we do not implement the interpolation weight in Eqn.~\ref{eq:mixed_embedding}. The results reveal that following the high-weight denoising $\rightarrow$ low-weight inversion paradigm can enchance the quality of synthesized images. The dotted lines represents the performance of the standard method. This indicates that within a certain range of values, RF-Sampling perform better than the standard one, demonstrating the robustness of it.}
\label{figure:neural-ratio}
\end{figure*}

\begin{algorithm}[!h]
    \caption{Reflective Flow Sampling}
    \label{alg:reflective-flow-sampling}
    \begin{algorithmic}[1]
        \REQUIRE Latent feature $x_t$, time steps $1, \dots, T$, merge ratio $\gamma$, and forward steps $\alpha$. 
        \FOR{$t = T, \dots, 1$}
            \STATE // Stage 1: High-Weight Denoising (\(\alpha\) steps forward)
            \STATE Let $\Delta t$ be the step size for each interval.
            \STATE $x_{fwd} \leftarrow x_t$
            \FOR{$i=1$ to $\alpha$}
                \STATE $c_{mix_{high}} = \beta_{high} \cdot c_{text} + (1-\beta_{high}) \cdot c_{uncond}$
                \STATE $c_{high} = c_{text} + s_{high} \cdot c_{mix_{high}}$
                \STATE $x_{fwd} \leftarrow x_{fwd} + v_\theta(x_{fwd}, t-(i-1), c_{high}) \Delta t$
            \ENDFOR
            \STATE $x_{t-\alpha} \leftarrow x_{fwd}$
            
            \STATE // Stage 2: Low-Weight Inversion (\(\alpha\) steps inversion)
            \STATE $x_{inv} \leftarrow x_{t-\alpha}$
            \FOR{$i=1$ to $\alpha$}
                \STATE $c_{mix_{low}} = \beta_{low} \cdot c_{text} + (1-\beta_{low} ) \cdot c_{uncond}$
                \STATE $c_{low} = c_{text} + s_{low} \cdot c_{mix_{low}}$
                \STATE $x_{inv} \leftarrow x_{inv} - v_\theta(x_{inv}, t-\alpha+(i-1), c_{low}) \Delta t$
            \ENDFOR
            \STATE $x'_t \leftarrow x_{inv}$
            
            \STATE // Stage 3: Normal-Weight Denoising (1 step forward)
            \STATE $x''_t \leftarrow x_t + \gamma(x_t - x'_t)$  \quad // \textit{Gradient Ascent}
            \STATE $x''_{t-1} \leftarrow x''_t + v_\theta(x''_t, t, c) \Delta t$ \quad // \textit{Standard Denoising}
            \STATE $x_{t-1} \leftarrow x''_{t-1}$
        \ENDFOR
        \STATE \textbf{return} $x_0$
    \end{algorithmic}
\end{algorithm}

\section{Experiment}
\label{experiment}


\subsection{Experiment Setting}
We conduct a comprehensive evaluation of several T2I diffusion models. For a detailed description of the benchmarks, evaluation metrics, model architectures and hyperparameter settings, please refer to Appendix~\ref{apd:benchmark_and_dataset}. Below is a summary of our experimental setup.

\paragraph{Benchmarks.}
Our evaluation leverages several established benchmarks to assess a wide range of capabilities. For human preference alignment, we use Pick-a-Pic~\cite{kirstain2023pickapicopendatasetuser} and HPD v2~\cite{wu2023humanpreferencescorev2}. To evaluate compositional reasoning, we employ DrawBench~\cite{saharia2022photorealistic}, GenEval~\cite{ghosh2023genevalobjectfocusedframeworkevaluating}, and T2I-Compbench~\cite{huang2023t2icompbench}. For text-to-video (T2V) and in-context image generation, we utilize ChronoMagic-Bench-150~\cite{yuan2024chronomagic} and FLUX-Kontext-Bench~\cite{labs2025flux1kontextflowmatching}, respectively.

\paragraph{Evaluation Metrics.}
To quantify model performance, we utilize several metrics designed to reflect human perception. These include PickScore~\cite{kirstain2023pickapicopendatasetuser}, HPS v2~\cite{wu2023humanpreferencescorev2}, and ImageReward~\cite{xu2023imagerewardlearningevaluatinghuman} for measuring alignment with human preferences, and the Aesthetic Score (AES)~\cite{AES} for assessing visual appeal. For T2V evaluation on ChronoMagic-Bench-150, we use UMT-FVD, UMTScore, GPT4o-MTScore, and MTScore.

\paragraph{Flow Models.}
Our analysis focuses on five state-of-the-art flow models. For T2I generation, we evaluate FLUX-Dev~\cite{flux2024}, its lightweight variant FLUX-Lite~\cite{flux1-lite}, and StableDiffusion-3.5~\cite{esser2024scalingrectifiedflowtransformers}. For T2V generation, we use Wan2.1-T2V-1.3B~\cite{wan2025}, and for in-context image editing, we evaluate FLUX-Kontext~\cite{labs2025flux1kontextflowmatching}.

\begin{table}[h]
\begin{center}

\renewcommand\arraystretch{1.0}
\setlength{\tabcolsep}{10pt}

\caption{Comparison of FID and IS between standard sampling and RF-Sampling on ImageNet-1K. We use FLUX-Lite with inference steps 28, combining the nunchaku~\cite{li2024svdquant} sampling acceleration framework, to generate 5,000 samples~(5 images per class). RF-Sampling achieves a lower FID and a higher IS than the standard one, demonstrating its ability to better align with the real data distribution while maintaining high-quality and diverse image generation.}
\resizebox{.9\linewidth}{!}{\begin{tabular}{ccc}
\hline
Method      & FID~($\downarrow$)                           & IS~($\uparrow$)                               \\ \hline
Standard    & 35.08                         & 150.07                         \\
RF-Sampling & \CC33.12 & \CC155.21 \\ \hline

\end{tabular}
}
\label{tab:imagenet_5k}
\end{center}
\end{table}

\begin{figure}[h]
\centering
\includegraphics[height=0.35\textwidth,trim={0cm 0cm 0cm 0cm},clip]{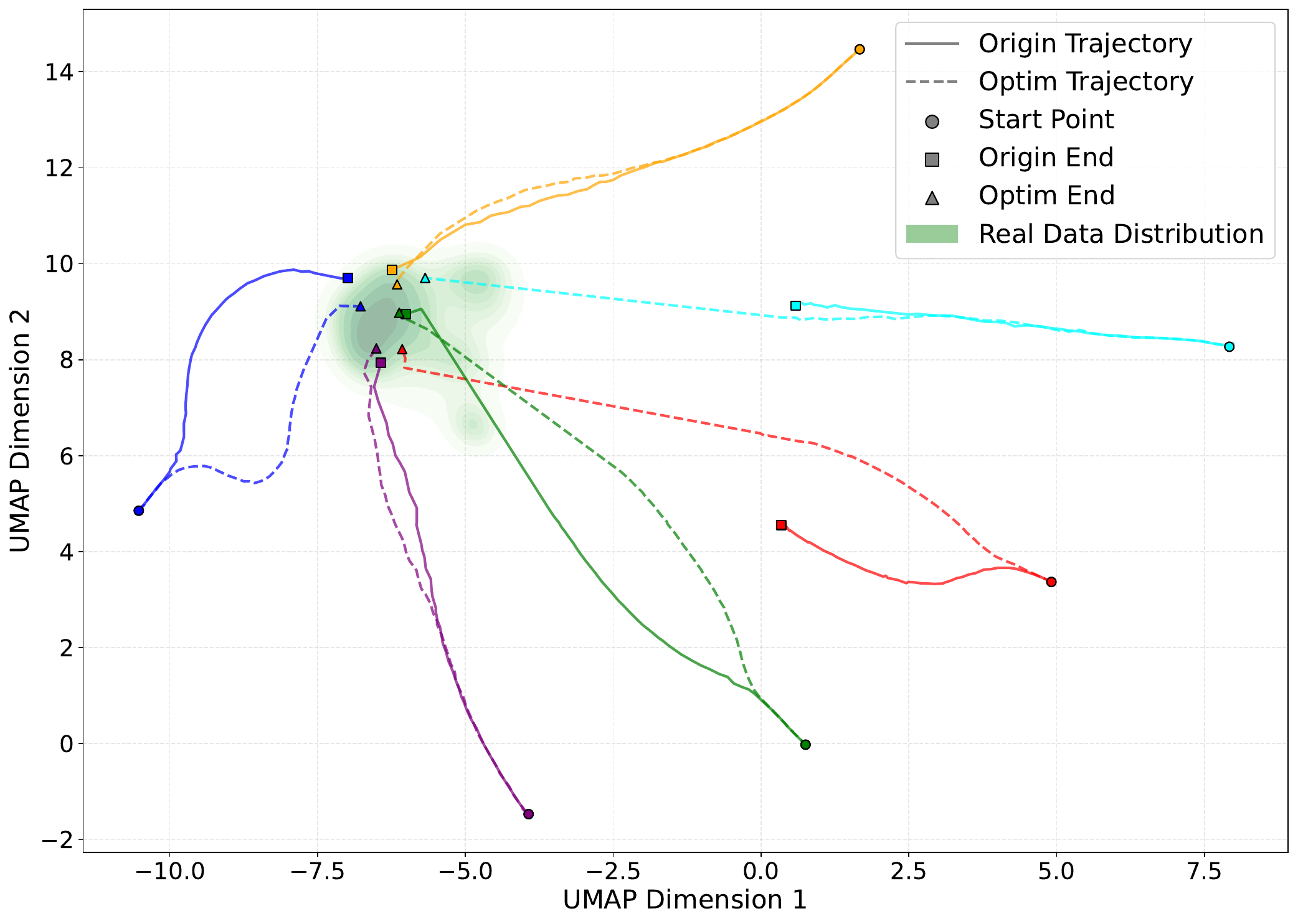}
\caption{Visualization of the sampling trajectories sampled by our method and the standard approach.}
\label{trajector-main}
\end{figure}

\begin{figure*}[h]
\centering
\includegraphics[width=1.\textwidth,trim={0cm 0cm 0cm 0cm},clip]{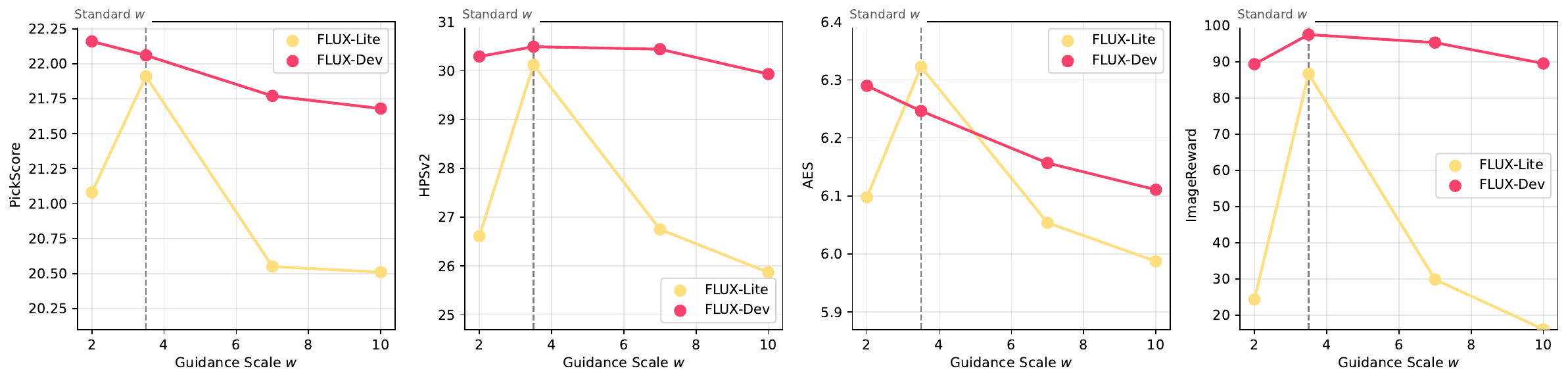}
\caption{Ablation study of standard guidance scale $w$. With the increase of the standard guidance scale $w$, the quality of the synthesized images degrades a lot. The results guarantee that the performance gain of RF-Sampling is not introduced by the increase of the amplifying weight $s$.}
\label{figure:cfg-effect}
\end{figure*}

\begin{figure*}[h]
\centering
\includegraphics[width=1.\textwidth,trim={0cm 0cm 0cm 0cm},clip]{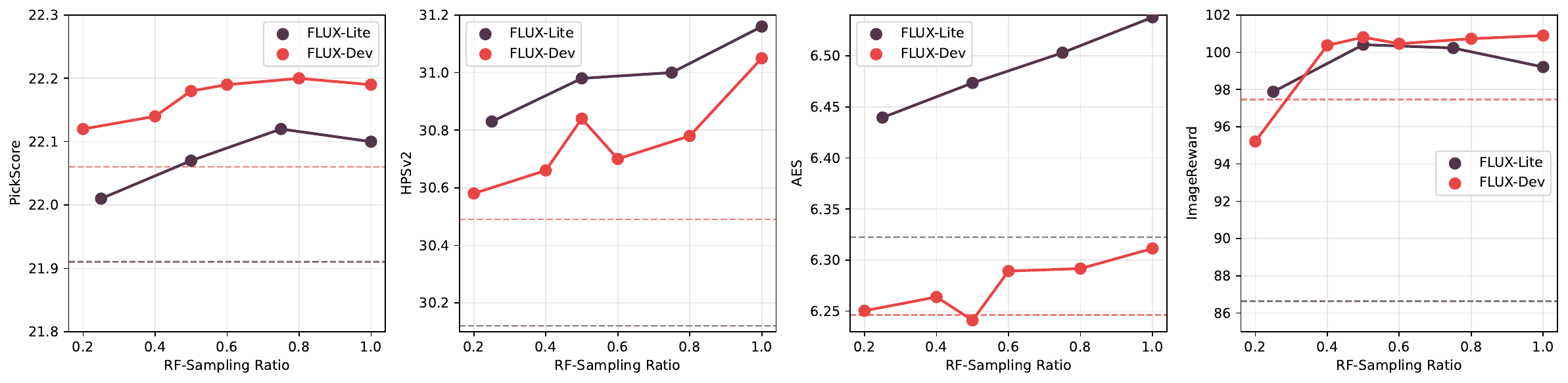}
\caption{Robustness to the RF-Sampling steps. The horizontal axis shows the ratio of RF-Sampling operations during the whole inference steps. As the ratio increases, generation quality improves, indicating effective semantic information gain throughout the whole path. The dotted lines represents the performance of the standard method. This indicates that within a certain range of values, RF-Sampling perform better than the standard one, demonstrating the robustness of it.}
\label{figure:step-operation}
\end{figure*}

\begin{figure*}[h]
\centering
\includegraphics[width=1.\textwidth,trim={0cm 0cm 0cm 0cm},clip]{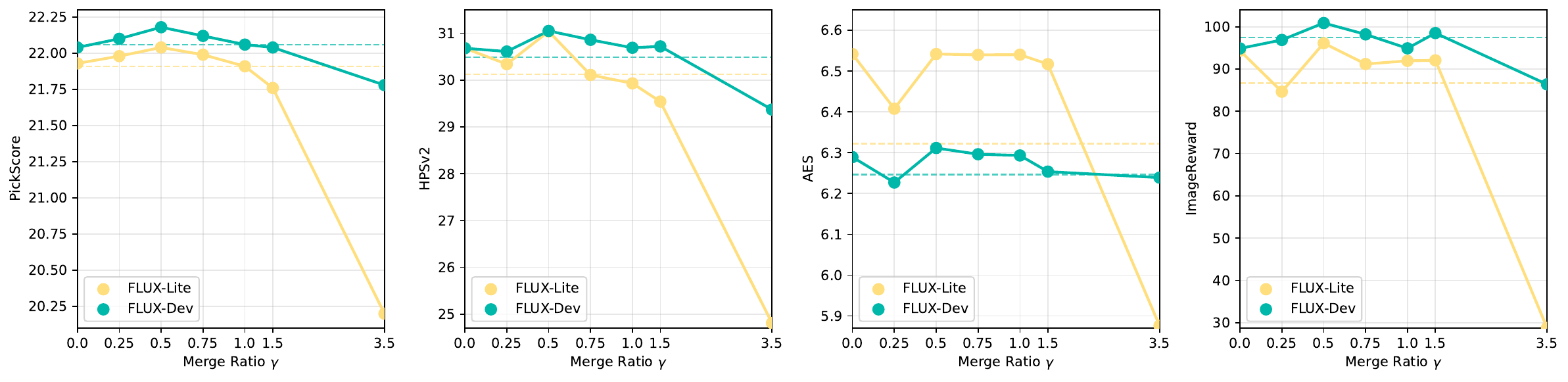}
\caption{We explore the influence of merge ratio $\gamma$ on Pick-a-Pic dataset. The results across 4 metrics reveal that $\gamma = 0.5$ is a better choice, where the synthesized images are the best. The dotted lines represents the performance of the standard method. This indicates that within a certain range of values, RF-Sampling perform better than the standard one, demonstrating the robustness of it.}
\label{figure:merge-ratio}
\end{figure*}

\subsection{Main Experiment}
To validate the effectiveness of our method, we conduct evaluations using multiple human preference models that score the images generated by our approach. The results in Tab.~\ref{tab:hpd} and Tab.~\ref{tab:pick_and_draw} prove that our method consistently achieves top-1 performance across most metrics. Additionally, Tab.~\ref{tab:random_seed} in the appendix demonstrates the robustness of our method, showing that RF-Sampling consistently outperforms the standard method across varying random seeds. We further report preference-winning rate experiments among different human preference models in Fig.~\ref{fig:sd3.5-main-winninf} and Fig.~\ref{fig:flux-main-winning}, where our method achieves up to $70\%$ winning rate under certain expert preferences. Moreover, we evaluate our method on the T2I and GenEval benchmarks to demonstrate its effectiveness. The corresponding results are provided in the appendix, as shown in Tab.~\ref{tab:t2i} and Tab.~\ref{tab:geneval_full}. To highlight the advantages of our approach, we provide qualitative visualizations in Fig.~\ref{figure:opening}, with additional synthesized examples in Appendix Sec.~\ref{sec:more_vis}. These visualizations further highlight the enhanced inference capability of our method.

\subsection{Ablation Studies and Additional Analysis.}

To better highlight the characteristics of our method, we conducted extensive quantitative and qualitative experiments, as presented below. More results are provided in Appendix~\ref{sec:add_analysis}.

\paragraph{High denoising and low inversion.} 
To validate the rationale behind the choice of the interpolation parameter $\beta$, we conduct experiments with different settings of $\beta$. The results, shown in Fig.~\ref{figure:neural-ratio}, confirm the effectiveness of interpolation and justify assigning higher weights to the forward process while using lower weights for the inverse process. As a complement, to provide a more intuitive understanding of the effect of varying $\beta$, we present the corresponding visualizations in Fig.~\ref{figure:high-denoise} and Fig.~\ref{figure:low-inversion}.
In addition, to examine the effectiveness of parameter $s$ in amplifying the semantic gap, we perform experiments as illustrated in Fig.~\ref{figure:cfg-gap}. The results indicate that an appropriately larger gap can better guide the model to generate high-quality images. Besides, these experiments validate the theoretical analysis Eq.~\ref{eq:thm_first_order}. 

\begin{figure*}[!t]
\centering
\includegraphics[width=.9 \textwidth,trim={0cm 0cm 0cm 0cm},clip]{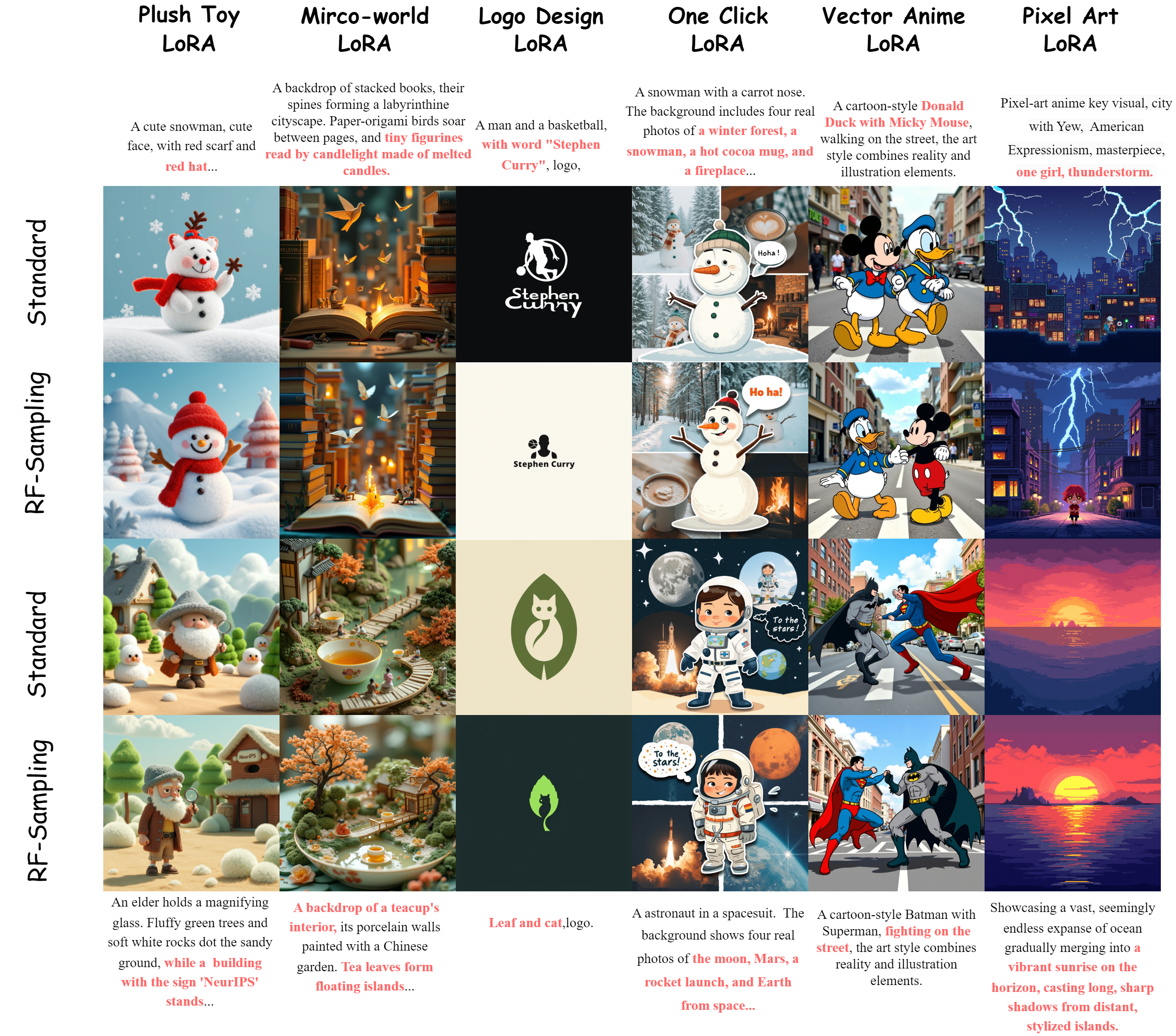}
\caption{We combine our proposed methods with existing LoRAs in FLUX community. Our RF-Sampling can be directly applied to the corresponding downstream tasks. The synthesized images validate the effectiveness and generalizability of our method.}
\label{figure:lora-main}
\end{figure*}

\paragraph{Optimal Steps.} 
To further validate the contribution of our method at each inference step, we evaluate the proportion of steps performing reflection relative to the total number of inference steps. The results, presented in Fig.~\ref{figure:step-operation}, demonstrate that, in general, increasing the number of reflection-enhanced steps leads to higher image generation quality.

\begin{table*}[h]
\begin{center}

\renewcommand\arraystretch{1.25}
\setlength{\tabcolsep}{10pt}

\caption{Orthogonal experiments with Nunchaku~\cite{li2024svdquant}, a sampling acceleration method for FLUX. The results demonstrate the generalizability of RF-Sampling to sampling acceleration. } 
\resizebox{.9\linewidth}{!}{\begin{tabular}{cccccc}
\hline
Model                                                                                                   & Method                                     & PickScore(↑)                  & ImageReward(↑)                 & AES(↑)                         & HPSv2(↑)                      \\ \hline
{ }                                                                                 & { Standard}            & 21.91                         & 86.64                          & 6.3224                         & 30.12                         \\
{ }                                                                                 & RF-Sampling                                & \CC22.05                         & \CC99.21                          & \CC6.5379 & \CC31.16 \\
\cline{2-6}
{ }                                                                                 & { Standard + Nunchaku} & 22.07                         & 95.94                          & 6.2303                         & 30.47                         \\ 
\multirow{-4}{*}{{ \begin{tabular}[c]{@{}c@{}}FLUX-Lite\\ (28 steps)\end{tabular}}} & RF-Sampling + Nunchaku                     & \CC22.23 & \CC102.35 & \CC6.4171                         & \CC30.86                         \\ \hline
{ }                                                                                 & { Standard}            & 22.06                         & 97.47                          & 6.2464                         & 30.49                         \\
{ }                                                                                 & RF-Sampling                                & \CC22.19                         & \CC100.90                         & \CC6.3113 & \CC31.06 \\
\cline{2-6}
{ }                                                                                 & { Standard + Nunchaku} & 22.18                         & 102.23                         & 6.2203                         & 30.73                         \\ 
\multirow{-4}{*}{{ \begin{tabular}[c]{@{}c@{}}FLUX-Dev\\ (50 steps)\end{tabular}}}  & RF-Sampling + Nunchaku                     & \CC22.22 & \CC107.46 & \CC6.2672                         & \CC30.90   \\ \hline
\end{tabular}
}
\label{tab:nunchaku}
\end{center}
\end{table*}
\begin{table*}[h]
\begin{center}

\renewcommand\arraystretch{1.1}
\setlength{\tabcolsep}{10pt}

\caption{We conduct ablation studies of the value of $\alpha$ and the form of $c_{uncond}$ on Pick-a-Pic dataset. It is noticed that as $\alpha$ increases, the quality of synthesized images improves, albeit at an inevitable cost to computational time. The form of $c_{uncond}$ may be either the null prompt embedding $\varnothing$ or the zero-padded prompt embedding $\mathbf{0}$. In relation to the form of $c_{uncond}$, the null prompt embedding $\varnothing$ yields superior results, as it carries a greater amount of unconditional semantic information.} 
\resizebox{0.9\linewidth}{!}{\begin{tabular}{ccccccc}
\hline
Model                                                                                                   & \multicolumn{2}{c}{Method}                                & PickScore(↑)                  & ImageReward(↑)                & AES(↑)                         & HPSv2(↑)                       \\ \hline
{\color[HTML]{000000} }                                                                                 & {\color[HTML]{000000} }                        & $\mathbf{0}$   & 21.95 & 88.58                         & 6.4346                         & 29.90                          \\
{\color[HTML]{000000} }                                                                                 & \multirow{-2}{*}{{\color[HTML]{000000} $\alpha = 1$}} & $\varnothing$ & 21.95 & 87.64                         & 6.4576                         & 30.05                          \\ \cline{2-3}
{\color[HTML]{000000} }                                                                                 &                                                & $\mathbf{0}$   & 21.88                         & 91.86                         & 6.4667                         &  29.61                      \\
\multirow{-4}{*}{{\color[HTML]{000000} \begin{tabular}[c]{@{}c@{}}FLUX-Lite\\ (28 steps)\end{tabular}}} & \multirow{-2}{*}{$\alpha = 2$}                        & $\varnothing$ & \CC22.05                         & \CC99.21 & \CC6.5379 & \CC31.16  \\ \hline
{\color[HTML]{000000} }                                                                                 & {\color[HTML]{000000} }                        & $\mathbf{0}$   & 22.11                         & 98.82                         & 6.2566                         &  30.00                     \\
{\color[HTML]{000000} }                                                                                 & \multirow{-2}{*}{{\color[HTML]{000000} $\alpha = 1$}} & $\varnothing$ & \CC22.19 & 100.90                        & 6.3113                         & \CC31.06                           \\ \cline{2-3}
{\color[HTML]{000000} }                                                                                 &                                                & $\mathbf{0}$   & 22.08                         & \CC101.19                         & 6.3168                         & 30.74 \\
\multirow{-4}{*}{{\color[HTML]{000000} \begin{tabular}[c]{@{}c@{}}FLUX-Dev\\ (50 steps)\end{tabular}}}  & \multirow{-2}{*}{$\alpha = 2$}                        & $\varnothing$ & 22.14                         & 100.52 & \CC6.3342 & 30.88 \\ \hline
\end{tabular}
}
\label{tab:null_and_alpha}
\end{center}
\end{table*}

\begin{figure*}[h]
\centering
\includegraphics[width=.9\textwidth,trim={0cm 0cm 0cm 0cm},clip]{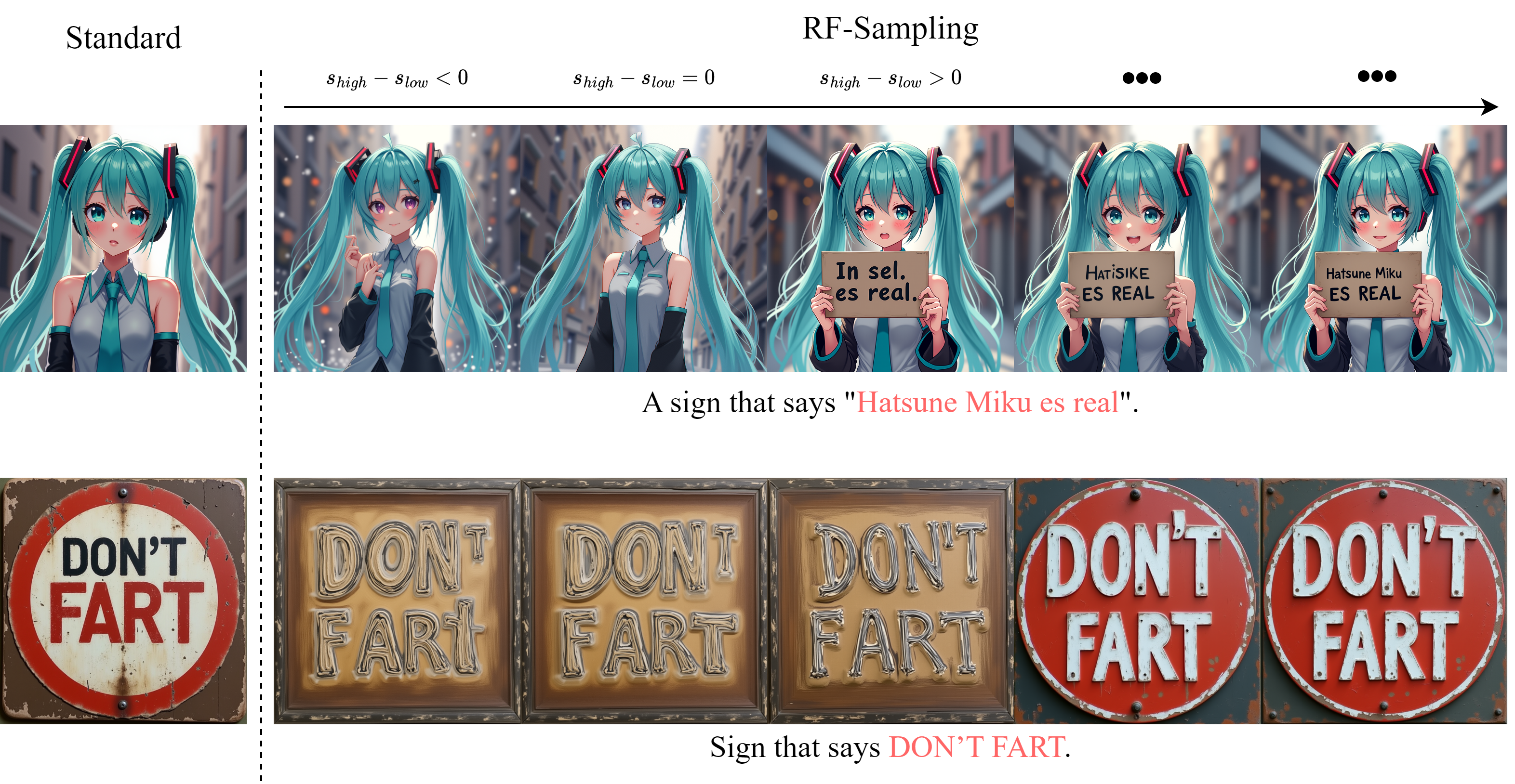}
\caption{Visualization of synthesized images with different $s$ scales. $s_{high} > s_{low}$ serves as a necessary condition for achieving superior image synthesis quality. When $s_{high} - s_{low} < 0$, RF-Sampling shows minimal advantage over the standard method with poor text generation accuracy and blurred details. However, as $s_{high} - s_{low}$ increases to positive values, RF-Sampling exhibits dramatic improvements in text rendering precision, visual detail clarity, and overall image coherence, ultimately generating high-fidelity outputs that significantly outperform Standard approaches. The results indicate that the parameter relationship $s_{high} > s_{low}$ acts as a crucial control mechanism that enables RF-Sampling to leverage its full potential for complex text-to-image generation tasks.}
\label{figure:high-denoise}
\end{figure*}

\begin{figure*}[h]
\centering
\includegraphics[width=.9\textwidth,trim={0cm 0cm 0cm 0cm},clip]{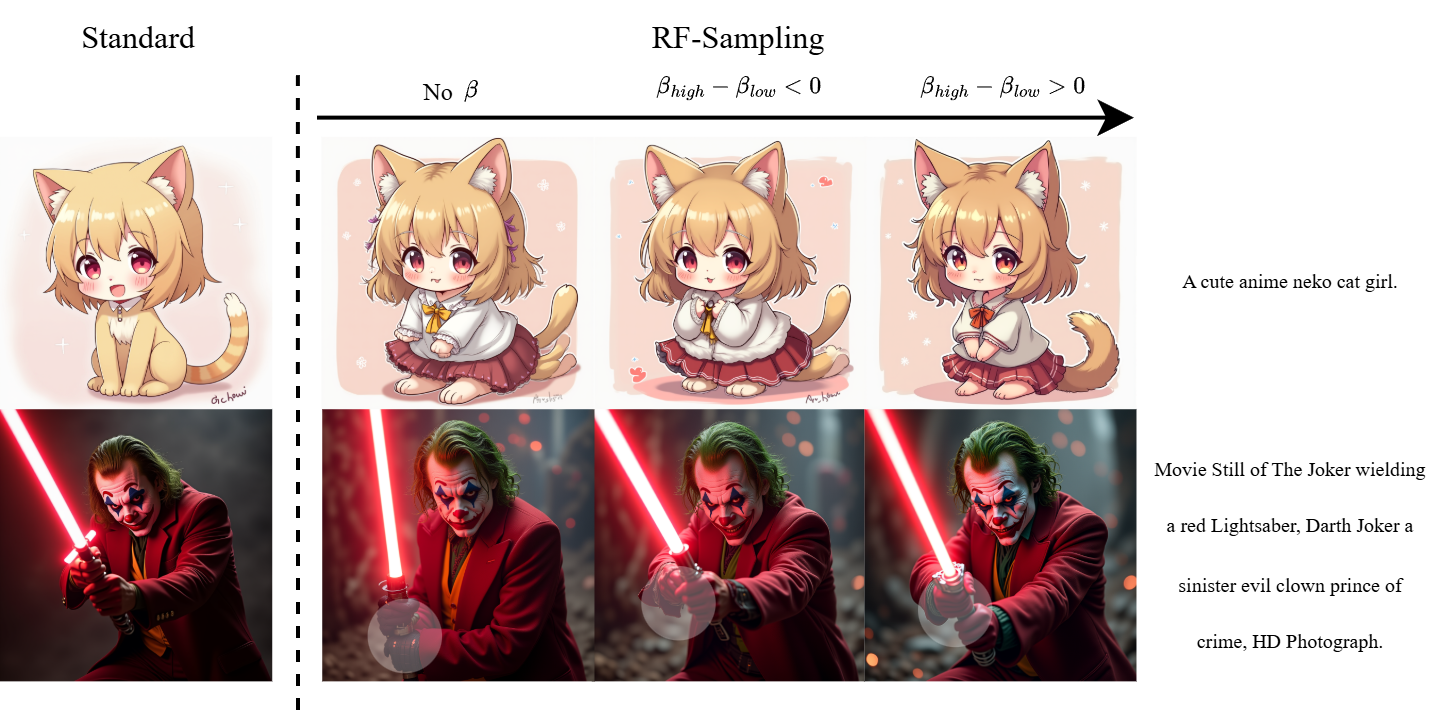}
\caption{Visualization of synthesized images with different $\beta$ scales. By applying the interpolation weight $\beta$, the model can synthesize higher-quality, more detailed, and visually appealing images that better align with user expectations for complex prompts, especially when $\beta_{high} > \beta_{low}$.}
\label{figure:low-inversion}
\end{figure*}

\begin{figure*}[!h]
\centering
\includegraphics[width=.9\textwidth,trim={0cm 0cm 0cm 0cm},clip]{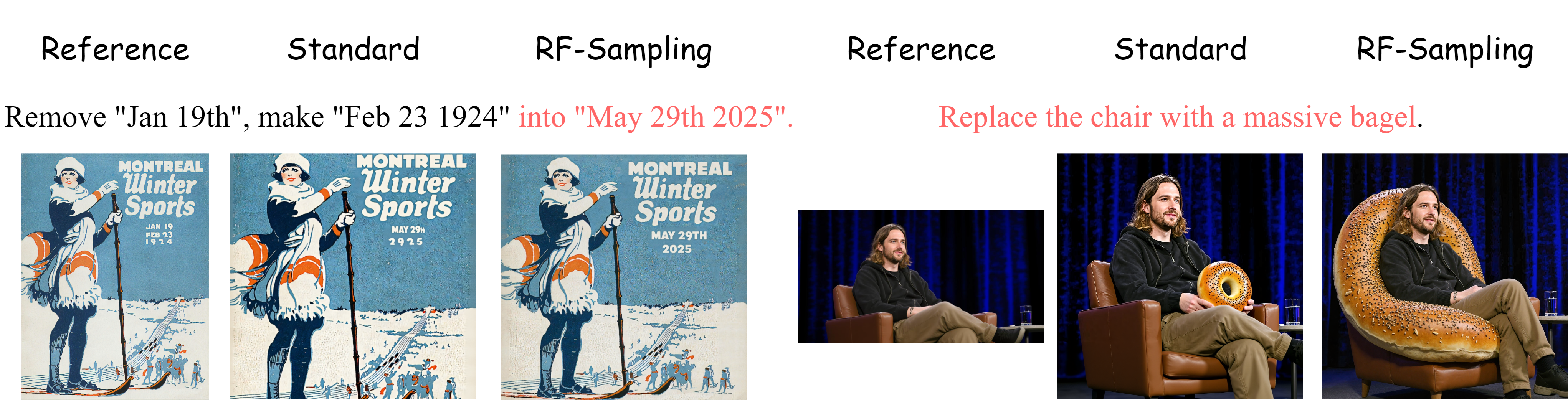}
\caption{Image editing experiments on FLUX-Kontext Bench~\cite{labs2025flux1kontextflowmatching}. Compared to the standard sampling, RF-sampling enables a more precise understanding of the given instruction, thereby achieving accurate image editing. For more examples, please see Appendix Fig.~\ref{figure:kontext}.}
\label{figure:kontext-main-paper}
\end{figure*}

\paragraph{The form of Null prompt.} Since the null-text representation can be either implemented as zero padding $\mathbf{0}$ or as an explicit null token $\varnothing$, we conduct experiments under different values of the parameter $\alpha$. The results are reported in Tab.~\ref{tab:null_and_alpha}, which demonstrate that using a null-text representation provides our method with stronger unconditional information.

\paragraph{Effect of parameter $\alpha$.} We evaluated the impact of the parameter $\alpha$ on inference performance, with results shown in Tab.~\ref{tab:null_and_alpha}. Considering that $\alpha$ also affects inference speed, we set $\alpha = 2$ for FLUX Lite and $\alpha = 1$ for FLUX Dev in our final configuration.

\paragraph{Effect of merge ratio $\gamma$.} 
To determine the optimal merge ratio $\gamma$ for our method, we conduct a quantitative study on the Pick-a-Pic dataset shown in Fig.~\ref{figure:merge-ratio}. We systematically vary the value of $\gamma$ and evaluate the quality of the synthesized images using four distinct metrics. Across all four metrics, the highest scores are usually achieved when $\gamma$ was set to $0.5$. This suggests that an equal balance in the merge operation is critical for producing the highest-quality images. Lower or higher values of $\gamma$ led to a noticeable degradation in performance, indicating an imbalanced fusion. 

\paragraph{Effect of Guidance Scale.} Traditional T2I diffusion models can enhance the quality of synthesized images by increasing the inference steps and guidance scale. In RF-Sampling, we adopt High-Weight Denoising $\rightarrow$ Low-Weight Inversion, which implicitly increases the inference steps and guidance scales. To validate our method, We further conduct an ablation study on the standard guidance scale $w$ and inference steps as shown in Fig~\ref{figure:cfg-effect}, Fig.~\ref{figure:time-scaling}, respectively. As $w$ increases, we observe a clear degradation in the quality of the synthesized images. In addition, with increasing inference steps, the performance gains of RF-Sampling. This finding confirms that the performance improvement of RF-Sampling does not originate from simply amplifying the guidance through a larger weight $s$, but rather from the reflective mechanism itself.

\paragraph{Efficiency Analysis.} 
To demonstrate the efficiency of our method, we conduct performance comparison experiments under the same number of inference steps. As shown in Fig.~\ref{figure:time-scaling}, the results indicate that our method achieves better performance within the same inference steps. 

\begin{table*}[h]
\begin{center}

\caption{We extend our method to the video generation task. Due to the computational budget, we utilize Wan2.1-T2V-1.3B~\cite{wan2025}. The results on ChronoMagic-Bench-150~\cite{yuan2024chronomagic} across 4 metrics show the promising scalability of our method to the video generation task.}

\renewcommand\arraystretch{1.2}
\setlength{\tabcolsep}{10pt}

\resizebox{.9\linewidth}{!}{%
\begin{tabular}{ccccc}
\hline 
            Method
            & UMT-FVD (↓)                    & UMTScore (↑)                   & GPT4o-MTScore (↑)              & MTScore(↑)                      \\ \hline
Standard    & 264.84                         & 2.7053                         & 3.4797                         & 0.41497                         \\
RF-Sampling & \CC229.49 & \CC2.9095 & \CC3.5302 & \CC0.43671\\
\bottomrule
\end{tabular}
}
\label{table:wanx}
\end{center}
\end{table*}

To ensure a fair comparison, we evaluate our method against baselines under an equivalent computational budget in Tab.~\ref{tab:time_cost}. With the total inference time matched (e.g., 84 steps), RF-Sampling consistently outperforms methods like GI, CFG++, and Z-Sampling. Furthermore, Tab.~\ref{tab:best_of_n} shows that our method offers a superior trade-off compared to Best-of-N strategies, surpassing Best-of-3 in metrics such as PickScore and AES while being approximately $1.5\times$ faster. Notably, comparisons with \cite{ma2025inference} on DrawBench and T2I-CompBench (Tab.~\ref{tab:saining_scaling_draw}, Tab.~\ref{tab:saining_scaling_t2i}) highlight a substantial efficiency advantage: RF-Sampling achieves competitive or top-tier results using only 150 NFEs, far fewer than the 2880 NFEs required by the baseline. Finally, to further improve efficiency, we conduct orthogonal experiments with Nunchaku~\cite{li2024svdquant}, a sampling acceleration method for FLUX. The results, presented in Tab.~\ref{tab:nunchaku}, show that our method can be effectively combined with such acceleration techniques, highlighting its potential for speedup.


\paragraph{Distribution Analysis.} 
To explore the RF-Sampling trajectories, we select two ImageNet classes~\cite{ILSVRC15} (Ambulance and Zebra) and visualize their respective data distributions as green shaded regions in the UMAP space. For each class, we randomly sample 6 Gaussian noises and process them through both standard diffusion sampling and RF-Sampling methods on FLUX-Dev using the prompt format "a photo of {class} in ImageNet." The results shown in Fig.~\ref{figure:real-distribution} reveal that RF-Sampling trajectories consistently demonstrate stronger convergence towards the real data distribution compared to standard method trajectories, as evidenced by the optimized endpoints (triangles) being more tightly clustered within or closer to the dense real data regions than their corresponding standard endpoints (squares). This convergence pattern indicates that RF-Sampling successfully refines the generation process by moving latent representations closer to the manifold of real images, thereby enhancing the fidelity and realism of generated samples while maintaining the semantic coherence of the target ImageNet classes. To validate the effectiveness of RF-Sampling, we conduct the experiments on ImageNet-1k, as shown in Tab.~\ref{tab:imagenet_5k}. The results indicate that RF-Sampling can synthesize image samples that better align with the real data distribution while maintaining high-quality and diverse image generation.

\subsection{Generalization to Other Tasks}



To further validate the generality and robustness of our approach, we extend its application beyond the standard text-to-image generation task to image editing, video generation, and LoRA fine-tuning.

\paragraph{Image Editing.} As shown in Fig.~\ref{figure:kontext-main-paper} and Appendix Fig.~\ref{figure:kontext}, our method achieves a winning rate of $57\%$ when evaluated under editing scenarios, highlighting its ability to preserve semantic alignment and generate coherent modifications guided by textual instructions.

\paragraph{Video Generation.} We further apply our method to the challenging task of video generation. The results, presented in Appendix Fig.~\ref{figure:wan} and Tab.~\ref{table:wanx}, indicate that our approach consistently enhances video quality, confirming that the reflective mechanism generalizes well to sequential data.

\paragraph{LoRA Combination.} Finally, we examine the compatibility of our method with lightweight fine-tuning techniques. As shown in Fig.~\ref{figure:lora-main} and Appendix Fig.~\ref{figure:lora-appendix}, our method remains effective when combined with LoRA-based models, demonstrating that inference enhancements are orthogonal and complementary to parameter-efficient adaptation strategies.




\subsection{Theoretical Discussion}
Based on the theoretical proofs in Sec.~\ref{method}, we can provide rigorous theoretical explanations for Fig.~\ref{figure:time-scaling} and Fig.~\ref{figure:merge_fit}.

Fig.~\ref{figure:time-scaling} provides empirical support for our theoretical derivation. As inference time increases (corresponding to a decrease in the discretization step size $\delta t$), the approximation error of the reflective direction $\Delta_{RF}$ diminishes. Consequently, the estimated gradient becomes more accurate, allowing RF-Sampling to consistently improve generation quality  with increased compute. This contrasts with standard sampling, which often saturates or degrades, highlighting that RF-Sampling effectively leverages finer temporal discretization to achieve better text-image alignment.

Eqn.~\ref{eq:thm_second_order} explains the inverted U-shaped curves observed in Fig.~\ref{figure:merge_fit}. When $\gamma < \gamma^*$, the linear term dominates, leading to quality improvement. When $\gamma > \gamma^*$, the quadratic penalty term ($-\frac{1}{2} C_t \gamma^2$) grows faster than the linear gain, causing the image quality to degrade. Although there are some local fluctuation in Fig.~\ref{figure:merge_fit}, which may be caused by non-smoothness of the evaluation metrics, the polynomial fit~(dotted lines) robustly demonstrates the overall inverted U-shaped trend, confirming the consistency between our theoretical second-order analysis and the empirical results.

\section{Conclusion}

In this work, we introduced RF-Sampling, a novel training-free inference enhancement method tailored for flow models, particularly those CFG-distilled variants. Moving beyond heuristic interpretations, we established a rigorous theoretical foundation for our method, mathematically proving that the proposed reflective mechanism serves as an accurate proxy for the gradient of the text-image alignment score. This formulation effectively bridges the gap between inference-time intervention and principled gradient ascent optimization. Our experiments demonstrate that RF-Sampling significantly improves both generation quality and text-prompt alignment, outperforming existing methods and achieving top-1 performance in various evaluations. This theoretical groundedness also unlocks the capability for test-time scaling, a property largely absent in previous methods, where increasing the inference compute consistently yields higher generation quality. Future work could explore adaptive learning rate schedules or higher-order optimization methods to further push the boundaries of flow-based generation.


{
    \small
    \bibliography{main}

}

\clearpage
\onecolumn
\appendix

\subsection{Benchmark}
\label{apd:benchmark_and_dataset}

\paragraph{Pick-a-Pic.} Pick-a-Pic~\cite{kirstain2023pickapicopendatasetuser} is an open dataset curated to capture user preference for T2I-synthesized images. Collected through an intuitive web application, it contains over 500,000 examples based on 35,000 unique prompts, providing a large-scale foundation for studying user preferences.

\paragraph{DrawBench.} DrawBench~\cite{saharia2022photorealistic}\footnote{https://huggingface.co/datasets/shunk031/DrawBench} is a benchmark dataset introduced to enable comprehensive evaluation of T2I models. It consists of 200 meticulously designed prompts, categorized into 11 groups to assess model capabilities across various semantic dimensions. These dimensions include compositionality, numerical reasoning, spatial relationships, and the ability to interpret complex textual instructions. DrawBench is specifically designed to provide a multidimensional analysis of model performance, facilitating the identification of both strengths and weaknesses in T2I synthesis.

\paragraph{HPD v2.} The human preference dataset v2 (HPD v2)~\cite{wu2023humanpreferencescorev2} is an extensive dataset featuring clean and precise annotations. With 798,090 binary preference labels across 433,760 image pairs, it addresses the limitations of conventional evaluation metrics that fail to accurately reflect human preferences. Following the methodologies in ~\cite{wu2023humanpreferencescorev2,shao2025bagdesignchoicesinference}, we employed four distinct subsets for our analysis: Animation, Concept-art, Painting, and Photo, each containing 800 prompts.

\paragraph{GenEval.} GenEval~\cite{ghosh2023genevalobjectfocusedframeworkevaluating} is an evaluation framework specifically designed to assess the compositional properties of synthesized images, such as object co-occurrence, spatial positioning, object count, and color. By leveraging state-of-the-art detection models, GenEval provides a robust evaluation of T2I generation tasks, ensuring strong alignment with human judgments. Additionally, the framework allows for the integration of other advanced vision models to validate specific attributes. The benchmark comprises 550 prompts, all of which are straightforward and easy to interpret.

\paragraph{T2I-Compbench.} T2I-Compbench~\cite{huang2023t2icompbench} is a comprehensive benchmark for evaluating open-world compositional T2I synthesis. It includes 6,000 compositional text prompts, systematically categorized into three primary groups: attribute binding, object relationships, and complex compositions. These groups are further divided into six subcategories: color binding, shape binding, texture binding, spatial relationships, non-spatial relationships, and intricate compositions.

\paragraph{ChronoMagic-Bench-150.} Chronomagic-Bench-150, introduced in~\cite{yuan2024chronomagic} serves as a comprehensive benchmark for metamorphic evaluation of timelapse T2V synthesis. This benchmark includes 4 main categories of time-lapse videos: biological, human-created, meteorological, and physical, further divided into 75 subcategories. Each subcategory contains two challenging prompts, leading to in a total of 150 prompts. We consider three distinct metrics in Chronomagic-Bench-150: {UMT-FVD (↓)}, {UMTScore (↑)}, {GPT4o-MTScore (↑)} and {MTScore (↑)}.

\paragraph{FLUX-Kontext-Bench.}
FLUX-Kontext-Bench, introduced in~\cite{labs2025flux1kontextflowmatching}, is a comprehensive benchmark for evaluating in-context image generation and editing models. It consists of 1026 image-prompt pairs derived from 108 base images. The benchmark spans five core task categories: local editing, global editing, text editing, style reference, and character reference. Designed to reflect real-world usage, FLUX-Kontext-Bench addresses limitations of prior synthetic or narrow-scope benchmarks and supports holistic evaluation of both single-turn quality and multi-turn consistency.

\subsection{Evaluation Metric}

\paragraph{PickScore.} PickScore is a CLIP-based scoring model, developed using the Pick-a-Pic dataset, which captures user preferences for synthesized images. This metric demonstrates performance surpassing that of typical human benchmarks in predicting user preferences. By aligning effectively with human evaluations and leveraging the diverse range of prompts in the Pick-a-Pic dataset, PickScore offers a more relevant and insightful assessment of T2I models compared to traditional metrics like FID~\cite{heusel2018ganstrainedtimescaleupdate} on datasets such as MS-COCO~\cite{lin2015microsoftcococommonobjects}.

\paragraph{HPS v2.} The human preference score version 2 (HPS v2) is an improved model to predict user preferences, created by fine-tuning the CLIP model~\cite{radford2021learningtransferablevisualmodels} on the HPD v2. This refined metric is designed to align T2I generation outputs with human tastes by estimating the likelihood that a synthesized image will be preferred, thereby serving as a reliable benchmark for evaluating the performance of T2I models across diverse image distributions.

\paragraph{AES.} The Aesthetic Score (AES)~\cite{AES} is a metric that evaluates the visual appeal of images. It is calculated using a model bulit on CLIP embeddings and enhanced with multilayer perceptron~(MLP) layers. This metric provides a quantitative measure of the aesthetic quality of synthesized images, offering valuable insights into their alignment with human aesthetic standards.

\paragraph{ImageReward.}ImageReward~\cite{xu2023imagerewardlearningevaluatinghuman} is a specialized reward model designed to evaluate T2I synthesis based on human preferences. Trained on a large-scale dataset of human comparisons, the model effectively captures user inclinations by assessing multiple aspects of synthesized images, including their alignment iwth text prompts and their aesthetic quality. ImageReward has shown superior performance compared to traditional metrics such as the Inception Score (IS)~
\cite{is} and Fréchet Inception Distance (FID), establishing it as a highly promising tool for automated evaluation in T2I tasks.

\subsection{Flow Models}
In the main paper, we totally use 3 flow-based T2I diffusion models, including FLUX-Dev~\cite{flux2024}, FLUX-Lite~\cite{flux1-lite}, and 
StableDiffusion-3.5~\cite{esser2024scalingrectifiedflowtransformers}, 1 flow-based T2V diffusion, Wan2.1-T2V-1.3B~\cite{wan2025}, and 1 flow-based TI2I diffusion model, FLUX-Kontext~\cite{labs2025flux1kontextflowmatching}.

\paragraph{FLUX-Dev.} FLUX-Dev~\cite{flux2024} is a family of T2I diffusion models built upon a transformer-based architecture, departing from the conventional U-Net design. Its core components include a dual text encoder system~(CLIP and T5~\cite{chung2022scalinginstructionfinetunedlanguagemodels}) for robust prompt understanding and a joint attention mechanism. This mechanism facilitates a bidirectional information flow between image and text representations within the transformer blocks, significantly enhancing prompt fidelity. The models are trained using a rectified flow formulation~\cite{iclr22_rect}, which enables high-quality image synthesis with fewer sampling steps compared to traditional diffusion models.

\paragraph{FLUX-Lite.} FLUX-Lite is a lightweight and highly efficient version derived from the FLUX models, optimized for faster inference. This 8B parameter model achieving a 23\% reduction in latency and a 7GB decrease in RAM usage. Its robustness is enhanced by a refined distillation process, trained on a diverse dataset and optimized for a broad range of guidance values~(2.0-5.0) and step counts~(20-32). The model's efficiency stems from an architectural insight that its transformer blocks contribute heterogeneously. An analysis revealed that intermediate blocks possess a degree of redundancy, unlike the critical initial and final blocks. The property allows for effective distillation and optimization without significant degradation in generative performance.

\paragraph{Stable-Diffusion-3.5.} StableDiffusion-3.5 marks a significant architectural shift in the StableDiffusion series to a Diffusion Transformer~(DiT)~\cite{peebles2023scalablediffusionmodelstransformers} model, aligning with the principles of rectified flow. As described by~\cite{esser2024scalingrectifiedflowtransformers}, this model processes text and image modalities using separate transformer weights before fusing them with a joint attention mechanism. This approach enables a sophisticated, bidirectional interaction between the two modalities, leading to well performance in prompt adherence, typographic generation, and overall image coherence. Its design demonstrates predictable scaling, where improvements in training loss directly translate to superior synthesis quality.

\paragraph{Wan2.1.} Wan2.1, introduced in~\cite{wan2025}, is an open-source video generation model developed by Alibaba, based on a Diffusion Transformer~(DiT) architecture and flow matching framework. It supports multiple tasks including text-to-video~(T2V) and image-to-video~(I2V). The model is available in two versions: a 14B-parameter variant for high-quality 720p generation and a lightweight 1.3B variant suitable for consumer-grade GPUs. Due to the resource limits, in this paper, we utilize the Wan2.1-T2V-1.3B.

\paragraph{FLUX-Kontext.} FLUX-Kontext, introduced in~\cite{labs2025flux1kontextflowmatching}, is a unified flow matching model for in-context image generation and editing in latent space. It combines text and image conditioning through a simple sequence concatenation mechanism, enabling both local editing and generative tasks within a single architecture. The model excels in preserving character and object consistency across multiple iterative edits, supports high-resolution output at interactive speeds, and facilitates iterative workflows.

\subsection{Hyperparameter Settings}
For all the experiments in the main paper, the inference steps are default to 28, 28, 50, 50, and 50, corresponding to SD3.5, FLUX-Lite, FLUX-Dev, FLUX-Kontext and Wan-2.1-T2V-1.3B. For the standard guidance scales $w$ are default to 4.5, 3.5, 3.5, 3.5, and 5.0, corresponding to SD3.5, FLUX-Lite, FLUX-Dev, FLUX-Kontext and Wan-2.1-T2V-1.3B.

For the hyperparameters, the interpolation weight $\beta_{high} = 0.7$, $\beta_{low} = 0.3$, and the merge ratio $\gamma = 0.5$~(for Wan-2.1-T2V-1.3B, $\gamma = 0.03$) across all the experiments. The amplifying weight $s_{high} = 3.5$, $s_{low} = 0$ for FLUX-Dev, and $s_{high} = 9$, $s_{low} = -1$ for FLUX-Lite, FLUX-Kontext, and SD3.5. The repeat time $\alpha = 1$ for SD3.5, FLUX-Dev, FLUX-Kontext, and Wan-2.1-T2V-1.3B, and $\alpha = 2$ for FLUX-Lite. For experiments in SD3.5, FLUX-Lite and FLUX-Dev, we execute RF-Sampling operations through all the inference steps, for FLUX-Kontext and Wan-2.1-T2V-1.3B, due to the time budgets, we only perform RF-Sampling operations in the first two steps.

\begin{figure*}[!ht]
\centering
\includegraphics[width=1.\textwidth,trim={0cm 0cm 0cm 0cm},clip]{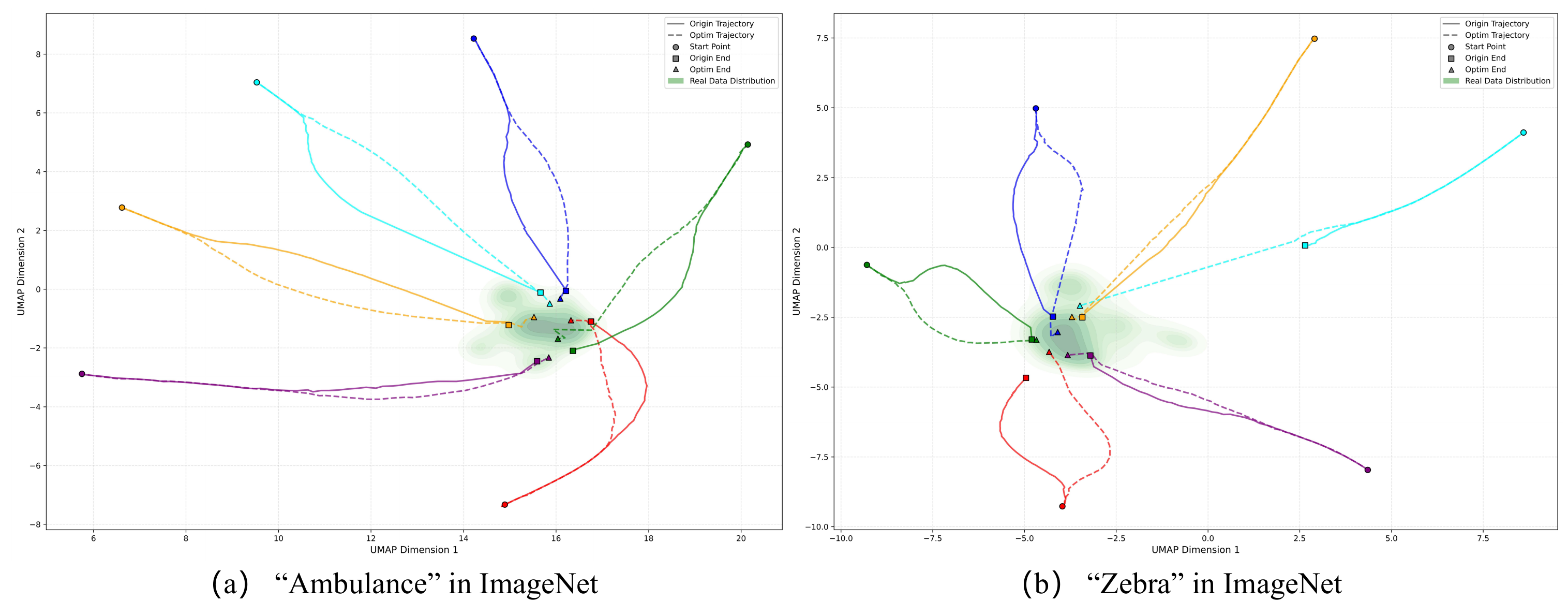}
\caption{Visualizations of the sampling trajectories of RF-Sampling and the standard method. we randomly select two ImageNet classes~\cite{ILSVRC15} (Ambulance and Zebra) and visualize their respective data distributions. For each class, we randomly sample \textbf{6} Gaussian noises and process them through both standard diffusion sampling and RF-Sampling methods using the prompt format \textbf{"a photo of {class} in ImageNet."} The results reveal that RF-Sampling trajectories consistently demonstrate stronger convergence towards the real data distribution compared to standard method trajectories.}
\label{figure:real-distribution}
\end{figure*}


\subsection{Additional Analysis}
\label{sec:add_analysis}

To further understand the effect of the parameter in our method, we conduct additional parameter analysis experiments as shown follows:

\paragraph{Additional Comparison Results and Efficiency Analysis.}
To comprehensively validate the effectiveness and efficiency of RF-Sampling, we conducted a series of comparative experiments. First, Tab.~\ref{tab:best_of_n} highlights the superior trade-off between performance and computational cost. RF-Sampling not only outperforms standard sampling across all metrics but also surpasses the Best-of-3 method in PickScore, AES, and ImageReward while being approximately $1.5\times$ faster. To ensure a fair comparison, we evaluated the methods under an equivalent computational budget in Tab.~\ref{tab:time_cost}. With the total inference time matched (84 steps), RF-Sampling consistently outperforms baseline methods, including GI, CFG++, and Z-Sampling, across multiple metrics. Furthermore, comparisons with \cite{ma2025inference} on DrawBench (Tab.~\ref{tab:saining_scaling_draw}) and T2I-CompBench (Tab.~\ref{tab:saining_scaling_t2i}) demonstrate the significant efficiency advantage of our approach; RF-Sampling achieves competitive or top-tier results using only 150 NFEs, a substantial reduction compared to the 2880 NFEs required by the baseline. Finally, we analyzed the scalability and robustness of our method. Tab.~\ref{tab:random_seed} shows that RF-Sampling consistently outperforms standard sampling at equivalent time consumption levels on both FLUX-Lite and FLUX-Dev.

\paragraph{Experiments on Large-scale Dataset.} To further validate the effectiveness of RF-Sampling, we conduct experiments on popular, large-scale benchmarks like GenEval~\cite{ghosh2023genevalobjectfocusedframeworkevaluating} and T2I-CompBench~\cite{huang2023t2icompbench} across 3 different flow models, shown in Tab.~\ref{tab:t2i} and Tab.~\ref{tab:geneval_full}. The large-scale experiments demonstrate the generalizability and robustness of our proposed methods.

\begin{table*}[!ht]
\begin{center}

\renewcommand\arraystretch{1.5}
\setlength{\tabcolsep}{10pt}

\caption{We evaluate the effectiveness of RF-Sampling on T2I-CompBench~\cite{huang2023t2icompbench}  across 3 diffusion models. The results validate the effectiveness and generalizability of our method.} 
\resizebox{1.\linewidth}{!}{\begin{tabular}{ccccccccccc}
\hline
                                                                                 &                          & \multicolumn{3}{c}{Attribute Binding}                                                                                                                                 & \multicolumn{4}{c}{Object Relationship}                                                                                                                                                                                       &                                                       &                                                       \\ \cline{3-9}
\multirow{-2}{*}{Model}                                                          & \multirow{-2}{*}{Method} & Color(↑)                                              & Shape(↑)                                              & Texture(↑)                                            & 2D-Spatial(↑)                                         & 3D-Spatial(↑)                                         & Non-Spatial(↑)                                        & numeracy(↑)                                           & \multirow{-2}{*}{Complex(↑)}                          & \multirow{-2}{*}{Overall(↑)}                          \\ \hline
                                                                                 & Standard                 & 0.7511                                               & 0.5709                                                & 0.7119                                                & \CC{ 0.2927} & 0.3751                                                & 0.3166                                                & 0.6078                                                & \CC{ 0.3846} & 0.5013                                                \\
\multirow{-2}{*}{\begin{tabular}[c]{@{}c@{}}SD3.5\\ (28 steps)\end{tabular}}     & RF-Sampling              & \CC{ 0.7817} & \CC{ 0.5885} & \CC{ 0.7241} & 0.2864                                                & \CC{ 0.3974} & \CC{ 0.3174} & \CC{ 0.6121} & 0.3844                                                & \CC{ 0.5119} \\ \hline
                                                                                 & Standard                 & 0.7030                                                & 0.4154                                                & 0.4887                                                & 0.2258                                                & 0.3710                                                & 0.3030                                                & 0.5564                                                & 0.3365                                                & 0.4249                                                \\
\multirow{-2}{*}{\begin{tabular}[c]{@{}c@{}}FLUX-Lite\\ (28 steps)\end{tabular}} & RF-Sampling              & \CC{ 0.7613} & \CC{ 0.4725} & \CC{ 0.5970} & \CC{ 0.2420} & \CC{ 0.4042} & \CC{ 0.3070} & \CC{ 0.6090} & \CC{ 0.3649} & \CC{ 0.4698} \\ \hline
                                                                                 & Standard                 & 0.7535                                                & 0.5018                                                & 0.6167                                                & \CC{ 0.2783} & 0.3866                                                & 0.3078                                                & 0.6052                                                & 0.3706                                                & 0.4775                                                \\
\multirow{-2}{*}{\begin{tabular}[c]{@{}c@{}}FLUX-Dev\\ (50 steps)\end{tabular}}  & RF-Sampling              & \CC{ 0.7761} & \CC{ 0.5323} & \CC{ 0.6422} & { 0.2687}                         & \CC{ 0.3943} & \CC{ 0.3080} & \CC{ 0.6082} & \CC{ 0.3733} & \CC{ 0.4887} \\ \hline
\end{tabular}
}
\label{tab:t2i}
\end{center}
\end{table*}
\begin{table*}[!ht]
\begin{center}

\renewcommand\arraystretch{1.2}
\setlength{\tabcolsep}{10pt}

\caption{We evaluate the effectiveness of RF-Sampling on GenEval~\cite{ghosh2023genevalobjectfocusedframeworkevaluating} across 3 diffusion models. The results show the superiority over the standard. } 
\resizebox{1.\linewidth}{!}{\begin{tabular}{ccccccccc}
\hline
{ Model}                                                                            & { Method}      & { Single(↑)}                    & { Two(↑)}                       & { Counting(↑)}                  & { Colors(↑)}                    & { Positions(↑)}                 & { Color Attribution(↑)}         & { Overall(↑)}                   \\ \hline
{ }                                                                                 & { Standard}    & { 0.97}                         & \CC{ 0.91} & \CC{ 0.75} & { 0.85}                         & \CC{ 0.21} & { 0.53}                         & { 0.70}                         \\
\multirow{-2}{*}{{ \begin{tabular}[c]{@{}c@{}}SD3.5\\ (28 steps)\end{tabular}}}     & { RF-Sampling} & \CC{ 0.99} & \CC{ 0.91} & { 0.72}                         & \CC{ 0.89} & { 0.19}                         & \CC{ 0.54} & \CC{ 0.71} \\ \hline
{ }                                                                                 & { Standard}    & { 0.90}                         & { 0.57}                         & { 0.52}                         & { 0.71}                         & { 0.11}                         & { 0.36}                         & { 0.53}                         \\
\multirow{-2}{*}{{ \begin{tabular}[c]{@{}c@{}}FLUX-Lite\\ (28 steps)\end{tabular}}} & { RF-Sampling} & \CC{ 0.93} & \CC{ 0.62} & \CC{ 0.59} & \CC{ 0.73} & \CC{ 0.18} & \CC{ 0.42} & \CC{ 0.58} \\ \hline
{ }                                                                                 & { Standard}    & \CC{ 0.99} & { 0.80}                         & \CC{ 0.78} & { 0.77}                         & { 0.23}                         & \CC{ 0.50} & { 0.68}                         \\
\multirow{-2}{*}{{ \begin{tabular}[c]{@{}c@{}}FLUX-Dev\\ (50 steps)\end{tabular}}}  & { RF-Sampling} & \CC{ 0.99} & \CC{ 0.82} & { 0.76}                         & \CC{ 0.80} & \CC{ 0.25} & \CC{ 0.50} & \CC{ 0.69}  \\  \hline
\end{tabular}
}
\label{tab:geneval_full}
\end{center}
\end{table*}


\begin{figure*}[!ht]
\centering
\includegraphics[width=1.\textwidth,trim={0cm 0cm 0cm 0cm},clip]{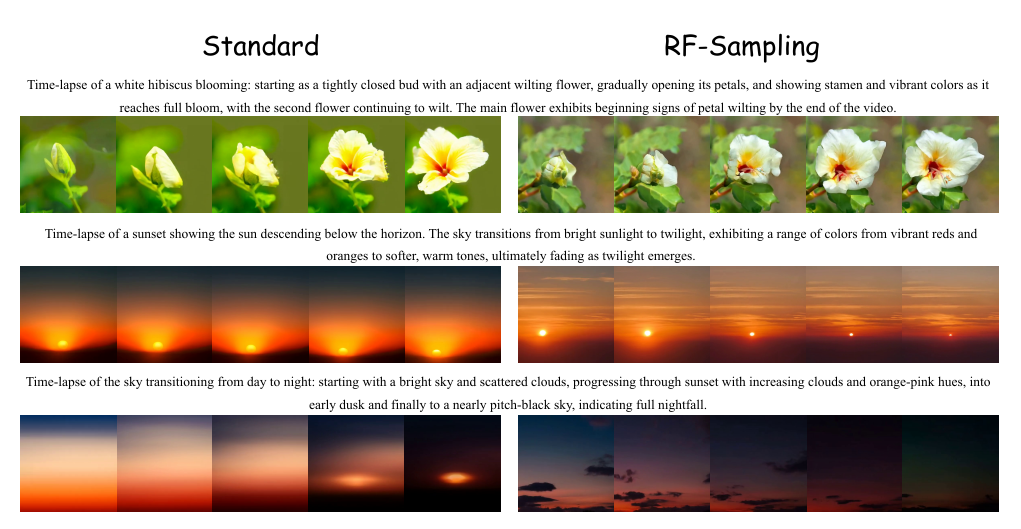}
\caption{We directly extend our proposed method to video generation task on Wan2.1-T2V-1.3B. The visualizations show the superiority of our proposed method compared with standard sampling.}
\label{figure:wan}
\end{figure*}

\begin{figure*}[!ht]
\centering
\includegraphics[width=1.\textwidth,trim={0cm 0cm 0cm 0cm},clip]{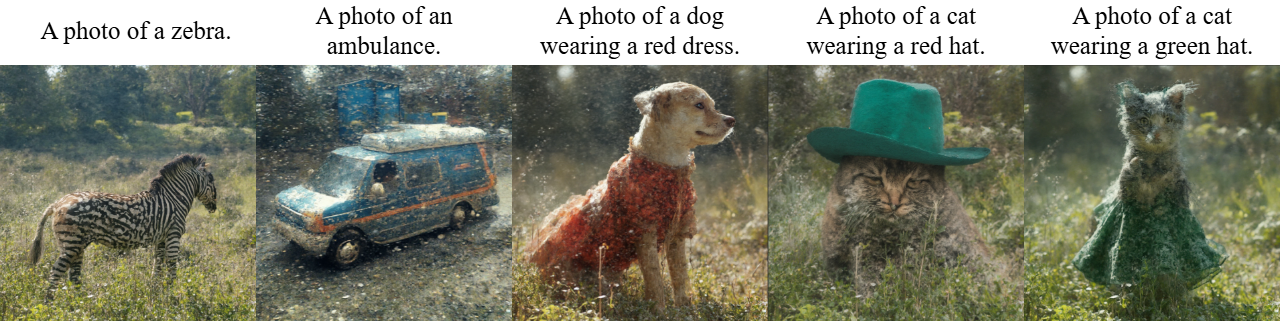}
\caption{\small Visual results of FLUX-Lite with guidance scale $w=1$. The generated images remain semantically aligned with the input text prompts, demonstrating that the model's output is still conditionally generated even at the minimum guidance scale. This empirically verifies that CFG-distilled models like FLUX do not possess a true unconditional generation mode, and setting $w=1$ does not produce unconditional outputs.}
\label{figure:w_1_condition}
\end{figure*}

\begin{figure*}[h]
\centering
\includegraphics[width=1.\textwidth,trim={0cm 0cm 0cm 0cm},clip]{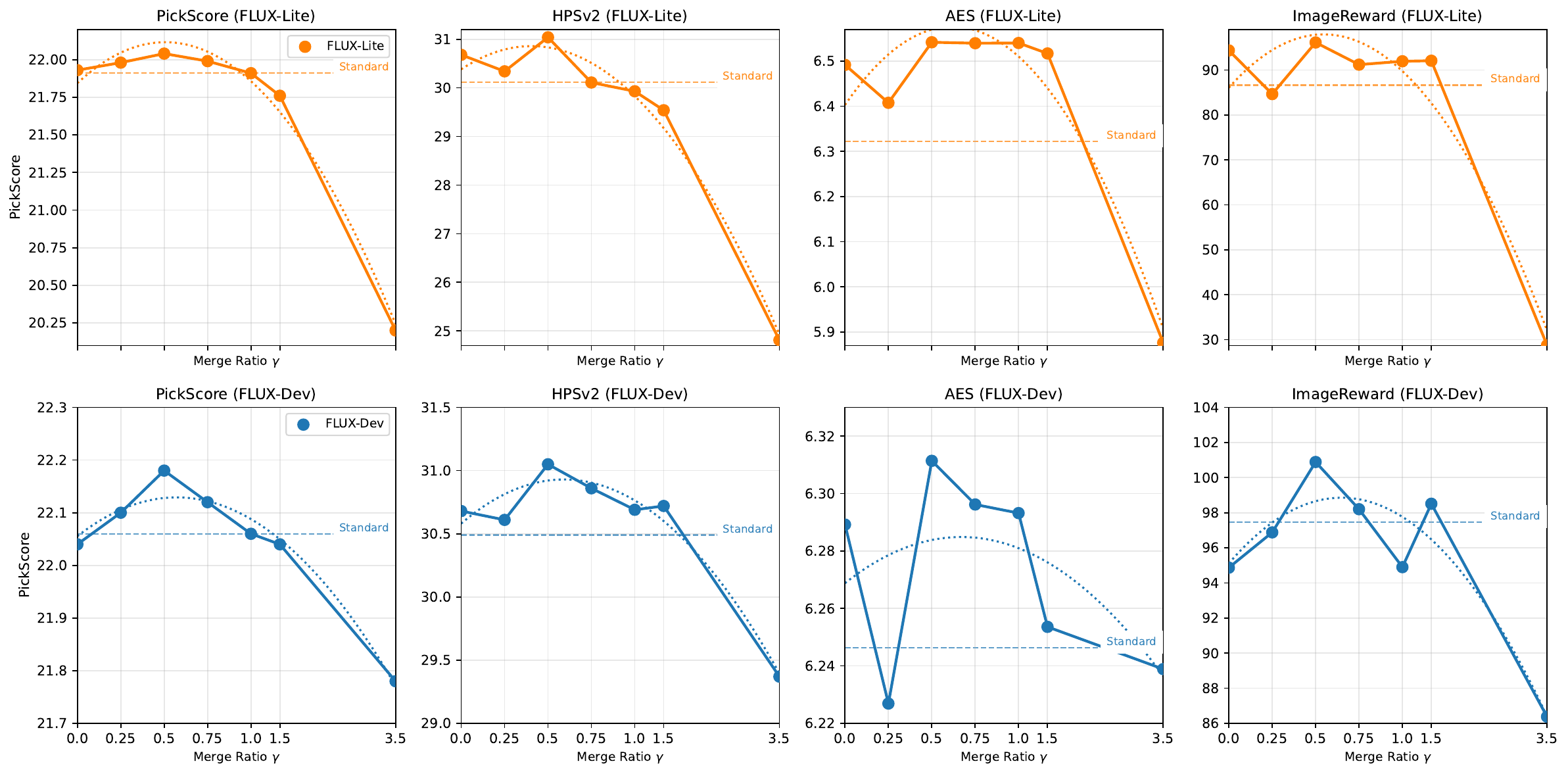}
\caption{Impact of Merge Ratio $\gamma$ on generation quality. The inverted U-shaped curves across all metrics confirm the existence of an optimal step size, balancing gradient alignment and manifold constraints. FLUX-Dev shows significantly higher robustness to large $\gamma$ values than FLUX-Lite, attributed to the smoother latent manifold of the larger model. Dotted curve lines represent quadratic fits to the data.}
\label{figure:merge_fit}
\end{figure*}


\subsection{Theoretical Discussion of RF-Sampling}
\label{apd:sec:theoretical_analysis}

\paragraph{Optimization Objective. }
\label{apd:sec:optimzation_score}
In text-to-image generation task, our goal at inference time is to find a latent $x$ that maximizes the probability of the text condition $c$. We define the optimization objective function $J(x)$ as the log-posterior, referred to as the alignment score~\cite{hessel2021clipscore,kirstain2023pickapicopendatasetuser,xu2023imagerewardlearningevaluatinghuman,wu2023humanpreferencescorev2}:

\begin{equation} 
\label{eq:objective_apd}
    J(x) = \log p(c | x) = \log p(x | c) - \log p(x) + \text{const}
\end{equation}

According to the score-based generative modeling theory~\cite{song2019generative,sde} and Classifier-Free Guidance~(CFG)~\cite{ho2022classifier}, the gradient of this objective function~(the score) can be approximated by the difference between conditional and unconditional noise/velocity predictions. In flow matching models~\cite{flux2024,lipman2022flow,liu2022flow}, the vector field $v_\theta(x, t, c)$ predicts the velocity that points to the data distribution. The gradient of the log-likelihood with respect to $x$ is proportional to the difference in velocity fields:

\begin{equation} 
\label{eq:gradient_proxy}
    \nabla_x J(x) \propto v_\theta(x, t, c) - v_\theta(x, t, \emptyset)
\end{equation}

where $\emptyset$ represents the null prompt. Standard CFG modifies the velocity as $v = v_{uncond} + w (v_{cond} - v_{uncond})$. For the CFG-distilled models, our goal is to perform gradient ascent on $J(x)$ without explicit CFG calculations at every step, utilizing the flow trajectory itself.

\paragraph{RF-Sampling. }
\label{apd:sec:rf-sampling}
RF-Sampling introduces a heuristic:``Denoise with high weight, invert with low weight.'' We now mathematically prove that this operation creates a displacement vector $\Delta_{RF}$ that is equivalent the gradient term in Eq.~\ref{eq:gradient_proxy}.

Consider the latent $x_t$ at timestep $t$. We define two text embeddings via embedding interpolation:
\begin{itemize}
    \item High Weight Embedding ($c_{high}$~(or $c'$ in the main paper)): Strong alignment requirement.
    \item Low Weight Embedding ($c_{low}$~(or $c''$ in the main paper)): Weak alignment requirement.
\end{itemize}

The RF-Sampling process consists of a high-weight denoising $\rightarrow$ low-weight inversion with step size $\delta t$.

\begin{itemize}
    \item High-Weight Denoising: Moving from $t$ to $t - \delta t$:
        \begin{equation}
            x_{t-\delta t} = x_t + v_\theta(x_t, t, c_{high}) \cdot (-\delta t)
        \end{equation}
    \item Low-Weight Inversion (Backward): Moving from $t-\delta t$ back to $t$:
        \begin{equation}
            x'_t = x_{t-\delta t} + v_\theta(x_{t-\delta t}, t-\delta t, c_{low}) \cdot (+\delta t)
        \end{equation}
\end{itemize}

Substituting $x_{t-\delta t}$ into the inversion equation:

\begin{equation}
    x'_t = \left[ x_t - v_\theta(x_t, t, c_{high}) \delta t \right] + v_\theta(x_{t-\delta t}, t-\delta t, c_{low}) \delta t
\end{equation}

We define the reflective displacement vector  $\Delta_{RF}$ as the difference between the original latent $x_t$ and the reflected latent $x_t'$:

\begin{equation}
    \Delta_{RF} = x_t - x'_t = \delta t \cdot \left[ v_\theta(x_t, t, c_{high}) - v_\theta(x_{t-\delta t}, t-\delta t, c_{low}) \right]
\end{equation}

Assuming the step size $\delta t$ is sufficiently small and the vector field $v_\theta$ is locally Lipschitz continuous~(smooth),  we can approximate $v_\theta(x_{t-\delta t}) \approx v_\theta(x_t)$~(support evidence see Fig.~\ref{figure:time-scaling}). The expression simplifies to:
\begin{equation} \label{eq:delta_sim}
    \Delta_{RF} \approx \delta t \cdot \left( v_\theta(x_t, t, c_{high}) - v_\theta(x_t, t, c_{low}) \right)
\end{equation}

\paragraph{Embedding Taylor Expansion. }
\label{apd:sec:taylor_derivation}
We define the semantic direction vector $\mathbf{u}$ as the difference between conditional and unconditional embeddings: $\mathbf{u} = c_{text} - c_{uncond}$.
Any input embedding $c_w$ in our method can be decomposed into a base component aligned with the unconditional embedding and a directional component aligned with the semantic vector:
\begin{align}
    c_w(s, \beta) &= (c_{text} + s \cdot c_{mix}) \\
                  &= c_{text} + s (\beta c_{text} + (1-\beta) c_{uncond}) \\
                  &= (1+s\beta) c_{text} + s(1-\beta) c_{uncond} \\
                  &= (1+s\beta) (c_{uncond} + \mathbf{u}) + s(1-\beta) c_{uncond} \\
                  &= \underbrace{(1+s) c_{uncond}}_{c_{base}(s)} + \underbrace{(1+s\beta)}_{\lambda(s, \beta)} \mathbf{u}
\end{align}
Here, $c_{base}(s)$ represents the baseline embedding scaling with $s$, and $\lambda(s, \beta)$ is the effective magnitude along the semantic direction. We treat the velocity field $v_\theta$ as a function of the embedding. Performing a first-order Taylor expansion around the unconditional embedding $c_{uncond}$:
\begin{equation}
    v_\theta(x, c_w) \approx v_\theta(x, c_{uncond}) + (1+s\beta) \left( \nabla_c v_\theta \cdot \mathbf{u} \right)
\end{equation}
Note that we approximate $v_\theta(x, (1+s)c_{uncond}) \approx v_\theta(x, c_{uncond})$ by simply treating it as the unconditional flow component $v_{uncond}$, supported by Tab.~\ref{tab:ablation_stage}. 
To rigorously justify why the term $(\nabla_c v_\theta \cdot \mathbf{u})$ corresponds to the score difference $\nabla_x J(x)$, we can analyze the relationship between the semantic directional derivative and the image-space gradient. By performing a first-order Taylor expansion of the conditional vector field $v_\theta(x, c_{text})$ around the unconditional embedding $c_{uncond}$ (where the semantic direction $\mathbf{u} = c_{text} - c_{uncond}$), we have:
\begin{equation}
    v_\theta(x, c_{text}) = v_\theta(x, c_{uncond} + \mathbf{u}) \approx v_\theta(x, c_{uncond}) + \nabla_c v_\theta(x, c_{uncond}) \cdot \mathbf{u}.
\end{equation}
By rearranging the terms, we observe that the directional derivative along $\mathbf{u}$ mathematically approximates the difference between the conditional and unconditional velocity fields:
\begin{equation}
    \nabla_c v_\theta(x, c_{uncond}) \cdot \mathbf{u} \approx v_\theta(x, c_{text}) - v_\theta(x, c_{uncond}).
    \label{eq:taylor_diff}
\end{equation}
Crucially, according to the theoretical formulation of Classifier-Free Guidance (CFG) and score-based generative modeling, the gradient of the log-likelihood (the alignment score $J(x) = \log p(c|x)$) is strictly proportional to this exact velocity difference:
\begin{equation}
    v_\theta(x, c_{text}) - v_\theta(x, c_{uncond}) \propto \nabla_x \log p(c|x) = \nabla_x J(x).
    \label{eq:cfg_diff}
\end{equation}
Therefore, combining Eq.~\ref{eq:taylor_diff} and Eq.~\ref{eq:cfg_diff}, we establish the fundamental equivalence that taking a small step in the text embedding space acts as a direct proxy for the score gradient in the image space:
\begin{equation}
    (\nabla_c v_\theta \cdot \mathbf{u}) \propto \nabla_x J(x).
\end{equation}

Now we calculate the difference between High-Weight Denoising ($s_{high}, \beta_{high}$) and Low-Weight Inversion ($s_{low}, \beta_{low}$). Note that $s_{high} > s_{low}$ and $\beta_{high} > \beta_{low}$.

\begin{align}
    \Delta v &= v_{high} - v_{low} \\
             &\approx \left[v_{uncond} + (1+s_{high} \beta_{high}) \nabla_x J(x) \right] \nonumber \\
             &\quad - \left[v_{uncond} + (1+s_{low} \beta_{low}) \nabla_x J(x) \right]
\end{align}

Grouping the terms by components:

\begin{equation} \label{eq:two_terms}
    \Delta v \approx \underbrace{(s_{high} \beta_{high} - s_{low} \beta_{low})}_{\text{Alignment Coefficient } \mathcal{A}} \cdot \nabla_x J(x)
\end{equation}

The coefficient $\mathcal{A} = s_{high} \beta_{high} - s_{low} \beta_{low}$. Since we configure $s_{high} > s_{low}$ and $\beta_{high} > \beta_{low}$ in our experiments, $\mathcal{A}$ is guaranteed to be positive and large. This confirms that $\Delta_{RF}$ provides a strong gradient ascent direction for text-image alignment.

\paragraph{Substitution into Optimization Object. }
Now we link $\Delta_{RF}$ back to the optimization objective $J(x)$.

From Eq.~\ref{eq:gradient_proxy} and Eq.~\ref{eq:two_terms}, we know that the difference between two vector fields with different text embedding is proportional to the gradient of the alignment score. Specifically, since $c_{high}$ is more aligned with the prompt than $c_{low}$:

\begin{equation}
    v_\theta(x_t, c_{high}) - v_\theta(x_t, c_{low}) \propto v_\theta(x_t, c) - v_\theta(x_t, \emptyset) \propto \nabla_x \log p(c | x_t) = \nabla_x J(x_t)
\end{equation}

Substituting this into Eq.~\ref{eq:delta_sim}, we obtain:

\begin{equation}
    \Delta_{RF} \approx \delta t \cdot \nabla_x J(x_t)
\end{equation}

where $\delta t$ is the step size, which is positive value.

Our goal is to maximize the alignment score $J(x)$, we utilize the gradient ascent rule:

\begin{equation}
    x_{new} = x_t + \lambda \cdot \nabla_x J(x_t)
\end{equation}

By substituting the reflective displacement $\Delta_{RF}$ as the proxy for the gradient, we arrive at the final RF-Sampling update rule:

\begin{equation} \label{eq:final_update}
    x''_t = x_t + \gamma \cdot \underbrace{(x_t - x'_t)}_{\Delta_{RF}}
\end{equation}

Here, the merge ratio $\gamma$ acts as the learning rate~(step size) for this single-step gradient ascent optimization.

\subsection{Why RF-Sampling works?}
\label{apd:sec:reason}
Let $x_t$ denote the latent at timestep $t$ generated by a Flow Matching model $\theta$. Our goal is to maximize the alignment between the image latent $x_t$ and the given text prompt $c$. We define the objective function~(Alignment Score) as the log-likelihood of the condition given the latent:

\begin{equation}
    J(x) = \log p(c | x)
\end{equation}

RF-Sampling aims to update the current latent state $x_t$ to a refined state $x''_t$ such that \textbf{$J(x''_t) > J(x_t)$}. The update rule is defined as:

\begin{equation}
    x''_t = x_t + \gamma \cdot \Delta_{RF} \approx x_t + \gamma\delta t \cdot \nabla_x J(x_t) \approx x_t + \gamma \cdot \nabla_x J(x_t)
\end{equation}

where $\gamma > 0$ is the merge ratio~(gradient ascent step size). $\Delta_{RF} = x_t - x'_t$  is the reflective direction, derived from the difference between the $x_t$ and $x'_t$. 

\paragraph{First-Order Analysis: Validity of RF-Sampling. }
\label{apd:first_order}
We first demonstrate why RF-Sampling improves generation quality using the first-order Taylor expansion.

\begin{proposition}[First-Order Validity]
    For a sufficiently small merge ratio $\gamma > 0$, if the reflective direction $\Delta_{RF}$ forms an acute angle with the true gradient $\nabla_x J(x_t)$, then RF-Sampling strictly increases the objective function value.
\end{proposition}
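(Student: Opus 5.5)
The argument is a first-order Taylor expansion of the scalar function $\phi(\gamma) = J(x_t + \gamma \Delta_{RF})$ at $\gamma = 0$. Throughout, I assume, as in Theorem~\ref{prop:first_order}, that $J$ is differentiable at $x_t$ (continuously differentiable in a neighbourhood suffices). The direction $\Delta_{RF}$ is treated as a fixed vector, since it is computed from $x_t$ before the merge step and does not depend on $\gamma$. The hypothesis that $\Delta_{RF}$ forms an acute angle with $\nabla_x J(x_t)$ means exactly that
\begin{equation*}
g := \langle \Delta_{RF}, \nabla_x J(x_t) \rangle > 0 .
\end{equation*}

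First, $\phi$ is differentiable at $0$ with $\phi'(0) = g$, by the chain rule, or equivalently by the definition of the directional derivative of $J$ along $\Delta_{RF}$. Second, the first-order expansion with Peano remainder gives
\begin{equation*}
J(x_t + \gamma \Delta_{RF}) - J(x_t) = \gamma g + r(\gamma), \qquad \frac{r(\gamma)}{\gamma} \to 0 \text{ as } \gamma \to 0^+ .
\end{equation*}
Third, choose $\gamma_0 > 0$ so that $|r(\gamma)| < \tfrac{1}{2}\gamma g$ for all $0 < \gamma < \gamma_0$. Then for every such $\gamma$,
\begin{equation*}
J(x''_t) - J(x_t) > \tfrac{1}{2}\gamma g > 0 ,
\end{equation*}
which is the strict increase claimed. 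This makes ``sufficiently small'' concrete as any $\gamma \in (0, \gamma_0)$.

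If a quantitative threshold is wanted, I would instead assume that $\nabla_x J$ is $L$-Lipschitz along the segment $[x_t, x_t + \gamma\Delta_{RF}]$. The descent-lemma bound then gives
\begin{equation*}
J(x''_t) \ge J(x_t) + \gamma g - \tfrac{L}{2}\gamma^2 \|\Delta_{RF}\|^2 ,
\end{equation*}
so any $\gamma < 2g / (L\|\Delta_{RF}\|^2)$ works. This is consistent with the $\gamma^*$ of Theorem~\ref{prop:second_order}.

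Finally, I would connect the result to Theorem~\ref{prop:first_order}: $\Delta_{RF} = \mathcal{A}\,\delta t\,\nabla_x J(x_t) + \mathcal{O}(\|\mathbf{u}\|^2)$ with $\mathcal{A} > 0$ and $\nabla_x J(x_t) \ne \mathbf{0}$. Hence $g = \mathcal{A}\,\delta t\,\|\nabla_x J(x_t)\|^2 + \mathcal{O}(\|\mathbf{u}\|^2)$, which is positive once the remainder is smaller than the leading term. So the acute-angle hypothesis holds in the regime where that theorem applies.

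The main obstacle is not the calculus but this last step. The $\mathcal{O}(\|\mathbf{u}\|^2)$ term and the $\propto$ relation between $\nabla_x J$ and the velocity difference come with unspecified constants. Guaranteeing $g > 0$ therefore requires an explicit assumption that the remainder is dominated by $\mathcal{A}\,\delta t\,\|\nabla_x J\|^2$. I would state that assumption outright rather than try to derive it, and keep the proposition conditional on the acute-angle hypothesis, as it is stated.
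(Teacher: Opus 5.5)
Your proposal is correct and uses the same argument as the paper's proof: a first-order Taylor expansion of $J(x_t+\gamma\Delta_{RF})$ at $\gamma=0$, whose linear term $\gamma\langle \Delta_{RF}, \nabla_x J(x_t)\rangle$ is positive. Your version is more rigorous than the paper's, because you bound the remainder explicitly through $\gamma_0$ and use the strict inequality $g>0$ that the acute-angle hypothesis provides. The paper, by contrast, only writes $\approx$ and infers a strict increase from $\Delta_{RF}^\top \nabla_x J(x_t) \geq 0$.
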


\begin{proof}
    Consider the first-order Taylor expansion of $J(x)$ around $x_t$:
    
    \begin{equation}
        J(x''_t) = J(x_t + \gamma \Delta_{RF}) \approx J(x_t) + \gamma \Delta_{RF}^\top \nabla_x J(x_t)
    \end{equation}
    
    The change in the objective function is:
    
    \begin{equation}
        \delta J \approx \gamma \underbrace{\Delta_{RF}^\top \nabla_x J(x_t)}_{\text{Directional Alignment}} = \gamma    \delta t \cdot \nabla_x^\top J(x_t) \cdot \nabla_x J(x_t)
    \end{equation}
    
    Since RF-Sampling distills semantic information from the guidance difference, $\Delta_{RF}$ is aligned with the gradient direction, implying $\Delta_{RF}^\top \nabla_x J(x_t) \geq 0$. Therefore, for small $\gamma$, we have $\delta J > 0$, proving that the update direction is valid.
\end{proof}

\paragraph{Second-Order Analysis: Optimality and Constraints. }
\label{apd:second_order}
While the first-order analysis suggests that larger $\gamma$ yields better results, experimental observations (Fig.~\ref{figure:merge_fit}) show an inverted U-shaped performance curve. We explain this phenomenon using the second-order Taylor expansion.

\begin{proposition}[Second-Order Optimality]
    The relationship between the objective improvement and the merge ratio $\gamma$ is parabolic. There exists an optimal merge ratio $\gamma^*$ determined by the local curvature of the semantic manifold.
\end{proposition}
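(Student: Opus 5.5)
The plan is to study the one-dimensional restriction $\phi(\gamma) := J(x_t + \gamma \Delta_{RF})$ for $\gamma \ge 0$, with the direction $\Delta_{RF}$ held fixed. I assume $J \in C^2$, as in Theorem~\ref{prop:second_order}. Taylor's theorem with Lagrange remainder then gives
\begin{equation*}
\phi(\gamma) - \phi(0) = \gamma \langle \Delta_{RF}, \nabla_x J(x_t) \rangle + \tfrac{1}{2}\gamma^2 \Delta_{RF}^\top \mathbf{H}(x_t + \xi\gamma\Delta_{RF}) \Delta_{RF}
\end{equation*}
for some $\xi \in (0,1)$. Replacing the Hessian at the intermediate point by $\mathbf{H}(x_t)$ costs $o(\gamma^2)$ by continuity of $\mathbf{H}$. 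This recovers the quadratic model of Eq.~\ref{eq:thm_second_order}, and it is exact when $J$ is quadratic, i.e.\ when $\mathbf{H}$ is constant.

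Next I determine the sign of each coefficient.

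\textbf{Linear coefficient.} By Theorem~\ref{prop:first_order} (equivalently, the first-order proposition above), $\Delta_{RF} = \mathcal{A}\,\delta t\,\nabla_x J(x_t) + \mathcal{O}(\|\mathbf{u}\|^2)$ with $\mathcal{A} > 0$. Hence $G := \langle \Delta_{RF}, \nabla_x J \rangle = \mathcal{A}\,\delta t\,\|\nabla_x J\|^2 + \mathcal{O}(\|\mathbf{u}\|^2) > 0$ whenever $\nabla_x J \neq 0$ and the remainder is small.

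\textbf{Quadratic coefficient.} Local concavity of $\log p(c\mid x)$ along $\Delta_{RF}$ gives $\Delta_{RF}^\top \mathbf{H}\Delta_{RF} < 0$, so I can write it as $-L$ with $L = |\Delta_{RF}^\top \mathbf{H}\Delta_{RF}| > 0$.

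The model $q(\gamma) = G\gamma - \tfrac12 L\gamma^2$ is then a strictly concave parabola, which is the "parabolic relationship" claimed. Setting $q'(\gamma) = 0$ gives the unique stationary point $\gamma^* = G/L > 0$, and $q'' = -L < 0$ makes it the global maximizer of $q$. The gain there is $q(\gamma^*) = G^2/(2L)$. The same computation explains the inverted-U shape: $q$ is increasing on $(0,\gamma^*)$, decreasing beyond, and turns negative for $\gamma > 2\gamma^*$. Since $\mathbf{H}$ is the curvature of $J$ along the trajectory, $\gamma^*$ is fixed by the local curvature, as the proposition states.

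The main obstacle is showing that $\gamma^*$ is optimal for the true $\phi$ and not just for the model $q$. My approach is to restrict to a neighborhood on which the concavity assumption holds uniformly, say $-\Delta_{RF}^\top\mathbf{H}(y)\Delta_{RF} \in [L-\epsilon, L+\epsilon]$ for all $y$ on the segment. On that segment $\phi$ is strictly concave, so it has a unique maximizer. Sandwiching $\phi$ between the parabolas with curvatures $L \pm \epsilon$ places that maximizer within $\mathcal{O}(\epsilon)$ of $\gamma^*$. This requires $\gamma^*\|\Delta_{RF}\|$ to be small, which holds because $\Delta_{RF} = \mathcal{O}(\delta t)$.

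I would state the result in this approximate sense. I would also flag that exact equality of the optimizer holds only in the quadratic (Gaussian-posterior) case.
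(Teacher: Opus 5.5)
Your core argument is the paper's. You expand $J(x_t+\gamma\Delta_{RF})$ to second order and use local concavity to write the quadratic coefficient as $-\tfrac12 C_{RF}$, with $C_{RF}=|\Delta_{RF}^\top\mathbf{H}(x_t)\Delta_{RF}|>0$. Setting the derivative of the resulting downward parabola to zero gives $\gamma^*=\langle\Delta_{RF},\nabla_x J(x_t)\rangle/C_{RF}$. The paper stops there and works with ``$\approx$'' throughout. Your additions are harmless refinements at the same level of rigor: the Lagrange-remainder form, the explicit positivity check of the linear term via Theorem~\ref{prop:first_order}, and the optimal gain $G^2/(2L)$.

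The step that fails is your upgrade from the model $q$ to the true $\phi$. You say it needs $\gamma^*\|\Delta_{RF}\|$ small, ``which holds because $\Delta_{RF}=\mathcal{O}(\delta t)$.'' But $G$ is linear and $L$ quadratic in $\Delta_{RF}$, so $\gamma^*$ is homogeneous of degree $-1$ in $\Delta_{RF}$. The actual displacement therefore does not depend on the size of $\Delta_{RF}$ at all. Writing $\Delta_{RF}=\|\Delta_{RF}\|\hat d$,
\[
\gamma^*\Delta_{RF}=\frac{\langle \hat d,\nabla_x J(x_t)\rangle}{|\hat d^\top\mathbf{H}(x_t)\hat d|}\,\hat d,
\]
which is the one-dimensional Newton step along $\hat d$. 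Shrinking $\delta t$ only makes $\gamma^*$ grow like $1/\delta t$; the segment you need to control keeps the same length.

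Your sandwich argument is fine in itself: bounding $\phi''$ between $-(L+\epsilon)$ and $-(L-\epsilon)$ traps the zero of $\phi'$ in $[G/(L+\epsilon),G/(L-\epsilon)]$. However, the uniform curvature bound must hold on a segment whose length is this Newton step. That requires a hypothesis on $J$ itself, not on the discretization. For example, you could assume $\mathbf{H}$ is $M$-Lipschitz with $M\|\nabla_x J(x_t)\|/\ell^2\ll 1$, where $\ell=|\hat d^\top\mathbf{H}(x_t)\hat d|$. Either add such a Newton/Kantorovich-type assumption, or drop the upgrade and state the result, as the paper does, for the quadratic model only.
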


\begin{proof}
    We expand $J(x)$ to the second order around $x_t$:
    
    \begin{equation}
        J(x''_t) \approx J(x_t) + \gamma \Delta_{RF}^\top \nabla_x J(x_t) + \frac{1}{2} \gamma^2 \Delta_{RF}^\top \mathbf{H}(x_t) \Delta_{RF}
    \end{equation}

    where $\mathbf{H}(x_t) = \nabla^2_x J(x_t)$ is the Hessian matrix representing the local curvature. 
    In maximization problems, we assume the objective function is locally concave, implying that the quadratic term is negative~(negative definite or negative semi-definite). Let us define the directional curvature penalty $C_{RF}$:

    \begin{equation}
        C_{RF} = - \Delta_{RF}^\top \mathbf{H}(x_t) \Delta_{RF} \qquad \text{(assuming $C_{RF} > 0$)}
    \end{equation}

    The improvement $\Delta J(\gamma) = J(x''_t) - J(x_t)$ can be rewritten as:

    \begin{equation} 
    \label{eq:parabola}
        \Delta J(\gamma) \approx \gamma \cdot \underbrace{(\Delta_{RF}^\top \nabla_x J)}_{\text{Linear Gain (Slope } b)}  -  \gamma^2 \cdot \underbrace{\frac{1}{2} C_{RF}}_{\text{Curvature Penalty (Coeff } a)} 
    \end{equation}

    Equation (\ref{eq:parabola}) describes a downward-opening parabola with respect to $\gamma$. To find the optimal merge ratio $\gamma^*$, we take the derivative with respect to $\gamma$ and set it to zero:
    
    \begin{equation}
        \frac{d}{d\gamma} \Delta J(\gamma) = (\Delta_{RF}^\top \nabla_x J) - \gamma C_{RF} = 0
    \end{equation}
    
    Yielding the optimal step size:
    
    \begin{equation}
        \gamma^* = \frac{\Delta_{RF}^\top \nabla_x J(x_t)}{C_{RF}} = \frac{\Delta_{RF}^\top \nabla J}{|\Delta_{RF}^\top \mathbf{H} \Delta_{RF}|}
    \end{equation}
\end{proof}

\begin{table*}[!ht]
\begin{center}

\renewcommand\arraystretch{1.2}
\setlength{\tabcolsep}{10pt}

\caption{To demonstrate the robustness of our method, we conducted repeated experiments on the Pick-a-Pic using FLUX-Lite with different random seeds. The results show that our approach consistently outperformed the standard method across varying random seeds, highlighting the robustness of RF-Sampling.}
\resizebox{1.\linewidth}{!}{\begin{tabular}{cccccc}
\hline
                          & Method      & PickScore(↑)                     & HPSv2(↑)                         & AES(↑)                            & ImageReward(↑)                    \\ \hline
                          & Standard    & 21.91                         & 30.12                         & 6.3224                         & 86.84                          \\
\multirow{-2}{*}{Round 1} & RF-Sampling & \CC22.05 & \CC31.16 & \CC6.5379 & \CC99.21  \\ \hline
                          & Standard    & 21.95                         & 30.33                         & 6.3473                         & 93.73                          \\
\multirow{-2}{*}{Round 2} & RF-Sampling & \CC22.04 & \CC30.82 & \CC6.5231 & \CC100.81 \\ \hline
                          & Standard    & 21.94                         & 30.20                         & 6.3608                         & 99.42                          \\
\multirow{-2}{*}{Round 3} & RF-Sampling & \CC21.99 & \CC30.63 & \CC6.5133 & \CC103.45 \\ \hline
                          & Standard    & 21.96                         & 30.23                         & 6.3365                         & 96.22                          \\
\multirow{-2}{*}{Round 4} & RF-Sampling & \CC22.02 & \CC30.83 & \CC6.5243 & \CC109.37 \\ \hline
                          & Standard    & 21.94 $\pm$ 0.02                         & 30.22 $\pm$ 0.08                        & 6.3418 $\pm$ 0.0163                        & 94.00 $\pm$ 5.43                          \\
\multirow{-2}{*}{Average} & RF-Sampling & \CC22.03 $\pm$ 0.03 & \CC30.86 $\pm$ 0.22 & \CC6.5247 $\pm$ 0.0101 & \CC103.21 $\pm$ 4.46 \\ \hline
\end{tabular}
}
\label{tab:random_seed}
\end{center}
\end{table*}
\begin{table*}[!ht]
\begin{center}

\renewcommand\arraystretch{1.2}
\setlength{\tabcolsep}{10pt}

\caption{Ablation on reflection component. We replace the full reflection step with a simple linear interpolation between embeddings~(using Eqn.~\ref{eq:mixed_embedding}) under two distinct mixing weights. The results show that both linear variants fail to improve over the standard baseline, performing identically across all metrics. This demonstrates that the model-driven reflection is essential, as simpler heuristics cannot achieve the performance gains of our full RF-Sampling.}
\resizebox{1.\linewidth}{!}{\begin{tabular}{ccccc}
\hline
Method             & PickScore(↑)                     & HPSv2(↑)                         & AES(↑)                            & ImageReward(↑)                    \\ \hline
Standard           & 21.99                         & 29.32                         & 5.9435                         & 85.13                          \\ \hline
High Embedding Mix~$(s = 9, \beta = 0.7)$ & 21.99                         & 29.32                         & 5.9435                         & 85.13                          \\
Low Embedding Mix~$(s = -1, \beta = 0.3)$  & 21.99                         & 29.32                         & 5.9435                         & 85.13                          \\ \hline
RF-Sampling        & \CC21.99 & \CC29.90 & \CC5.9981 & \CC101.50 \\ \hline
\end{tabular}
}
\label{tab:ablation_stage}
\end{center}
\end{table*}
\begin{table*}[!ht]
\begin{center}

\renewcommand\arraystretch{1.2}
\setlength{\tabcolsep}{10pt}

\caption{Comparison of our RF-Sampling with Best-of-N method. RF-Sampling achieves a better trade-off between performance and efficiency: it outperforms standard sampling in all metrics and is competitive with Best-of-N methods. While Best-of-5 achieves the highest performance, it requires more than double the time per image compared to RF-Sampling. RF-Sampling outperforms Best-of-3 in PickScore, AES and ImageReward with approximately 1.5 times faster. These results demonstrate the effectiveness of our method in achieving high performance with reduced computational cost.}
\resizebox{1.\linewidth}{!}{\begin{tabular}{cccccc}
\hline
Method                                                                                     & PickScore(↑)                     & HPSv2(↑)                         & AES(↑)                            & ImageReward(↑)                    & s/img($\downarrow$)                         \\ \hline
\begin{tabular}[c]{@{}c@{}}Standard~(28 steps)\end{tabular}                              & 21.99                         & 29.32                         & 5.9435                         & 85.13                          & 29.93                         \\
\begin{tabular}[c]{@{}c@{}}Standard~(28 $\times$ 3 = 84 steps)\end{tabular} & 21.96                         & 29.60                         & 5.9109                         & 89.87                          & 67.06                         \\ \hline
Best-of-5                                                                                  & 22.21                         & 30.58                         & 5.9849                         & 106.69                         & 154.17                        \\ \hline
Best-of-3                                                                                  & 21.94                         & \CC30.14                         & 5.9642                         & 100.40                         & 97.63                         \\
RF-Sampling                                                                                & \CC21.99 & 29.90 & \CC5.9981 & \CC101.50 & \CC65.04 \\ \hline

\end{tabular}
}
\label{tab:best_of_n}
\end{center}
\end{table*}
\begin{table*}[!ht]
\begin{center}

\renewcommand\arraystretch{1.2}
\setlength{\tabcolsep}{10pt}

\caption{Comparison under equivalent computational budget. To demonstrate the effectiveness of our method, we compare RF-Sampling against baselines using 84 steps (28×3), matching the total inference time. The results show that RF-Sampling almost outperforms all baseline methods across different metrics while maintaining comparable time per image, demonstrating its effectiveness under a fair computational setting.}
\resizebox{1.\linewidth}{!}{\begin{tabular}{cccccc}
\hline
Method                                                                          & PickScore(↑)                       & HPSv2(↑)                           & AES(↑)                              & ImageReward(↑)                      & s/img~($\downarrow$)                         \\ \hline
\begin{tabular}[c]{@{}c@{}}Standard~(28 steps)\end{tabular}                   & 21.99                         & 29.32                         & 5.9435                         & 85.13                          & 29.93                         \\
\begin{tabular}[c]{@{}c@{}}GI~(28 steps)\end{tabular}                         & 21.19                         & 24.63                         & 5.9534                         & 28.94                          & 31.33                         \\
\begin{tabular}[c]{@{}c@{}}CFG++~(28 steps)\end{tabular}                      & 21.79                         & 28.50                         & 5.8821                         & 85.17                          & 32.46                         \\
\begin{tabular}[c]{@{}c@{}}CFG-Zero*~(28 steps)\end{tabular}                  & 21.88                         & 29.37                         & 5.9536                         & 86.78                          & 28.91                         \\ \hline
\begin{tabular}[c]{@{}c@{}}Standard~(28 $\times$ 3 = 84 steps)\end{tabular}   & 21.96                         & 29.60                         & 5.9109                         & 89.87                          & 67.06                         \\
\begin{tabular}[c]{@{}c@{}}GI~(28 $\times$ 3 = 84 steps)\end{tabular}         & 21.25                         & 25.27                         & 5.9335                         & 28.16                          & 67.04                         \\
\begin{tabular}[c]{@{}c@{}}Z-Sampling~(28 $\times$ 3 = 84 steps)\end{tabular} & 21.73                         & 28.84                         & 5.9091                         & 89.03                          & \CC65.00 \\
\begin{tabular}[c]{@{}c@{}}CFG++~(28 $\times$ 3 = 84 steps)\end{tabular}      & 20.98                         & 27.02                         & 5.6144                         & 64.73                          & 68.07                         \\
\begin{tabular}[c]{@{}c@{}}CFG-Zero*~(28 $\times$ 3 = 84 steps)\end{tabular}  & \CC22.01 & 29.48                         & 5.8949                         & 97.22                          & 65.47                         \\ \hline
RF-Sampling                                                                     & 21.99                         & \CC29.90 & \CC5.9981 & \CC101.50 & 65.04                         \\ \hline

\end{tabular}
}
\label{tab:time_cost}
\end{center}
\end{table*}
\begin{table*}[!ht]
\begin{center}

\renewcommand\arraystretch{1.0}
\setlength{\tabcolsep}{10pt}

\caption{Quantitative comparisons with ~\cite{ma2025inferencetimescalingdiffusionmodels} on DrawBench. RF-Sampling requires only 150 NFEs, far fewer than the baseline methods~(2880 NFEs), yet still achieves the top results in both ImageReward and AES, demonstrating the dual advantages of our method in both efficiency and effectiveness.}
\resizebox{1.\linewidth}{!}{\begin{tabular}{c|ccccc}
\hline
                         & \multicolumn{5}{c}{Method}                                                                                                   \\ \cline{2-6} 
\multirow{-2}{*}{Metric} & \multicolumn{1}{c|}{Standard} & Aesthetic + Random & + ZO-2 & \multicolumn{1}{c|}{+ Path-2} & RF-Sampling                    \\ \hline
NFEs                     & \multicolumn{1}{c|}{50}       & 2880               & 2880   & \multicolumn{1}{c|}{2880}     & 50 $\times$ 3 = 150             \\ \hline
ImageReward              & \multicolumn{1}{c|}{99.73}    & 101.21             & 98.42  & \multicolumn{1}{c|}{97.13}    & \CC106.21 \\ \hline
\end{tabular}
}

\vspace{2mm}

\resizebox{1.\linewidth}{!}{\begin{tabular}{c|ccccc}
\hline
                         & \multicolumn{5}{c}{Method}                                                                                                   \\ \cline{2-6} 
\multirow{-2}{*}{Metric} & \multicolumn{1}{c|}{Standard} & CLIPScore + Random & + ZO-2 & \multicolumn{1}{c|}{+ Path-2} & RF-Sampling                    \\ \hline
NFEs                     & \multicolumn{1}{c|}{50}       & 2880               & 2880   & \multicolumn{1}{c|}{2880}     & 50 $\times$ 3 = 150             \\ \hline
AES              & \multicolumn{1}{c|}{6.1459}    & 6.0323             & 6.0512  & \multicolumn{1}{c|}{6.0452}    & \CC6.1866 \\ \hline

\end{tabular}
}

\vspace{2mm}
\resizebox{1.\linewidth}{!}{\begin{tabular}{c|ccccc}
\hline
                         & \multicolumn{5}{c}{Method}                                                                                                   \\ \cline{2-6} 
\multirow{-2}{*}{Metric} & \multicolumn{1}{c|}{Standard} & ImageReward + Random & + ZO-2 & \multicolumn{1}{c|}{+ Path-2} & RF-Sampling                    \\ \hline
NFEs                     & \multicolumn{1}{c|}{50}       & 2880               & 2880   & \multicolumn{1}{c|}{2880}     & 50 $\times$ 3 = 150             \\ \hline
AES              & \multicolumn{1}{c|}{6.1459}    & 6.1459           & 6.1265   & \multicolumn{1}{c|}{6.0945}    & \CC6.1966 \\ \hline
\end{tabular}
}

\label{tab:saining_scaling_draw}
\end{center}
\end{table*}
\begin{table*}[!ht]
\begin{center}

\renewcommand\arraystretch{1.2}
\setlength{\tabcolsep}{10pt}

\caption{Quantitative comparisons with \cite{ma2025inferencetimescalingdiffusionmodels} on T2I-CompBench. RF-Sampling requires only 150 NFEs, far fewer than the baseline methods~(1920 NFEs), yet almost achieves the top results across different dimensions, demonstrating the dual advantages of our method in both efficiency and effectiveness.}
\resizebox{1.\linewidth}{!}{\begin{tabular}{cccccccc}
\hline
Method                                                                           & Color(↑)                          & Shape(↑)                          & Texture(↑)                        & Spatial(↑)                        & Numeracy(↑)                       & Complex(↑)                        & Overall(↑)                        \\ \hline
Standard                                                                         & 0.7535                         & 0.5018                         & 0.6167                         & 0.2783                         & 0.6052                         & 0.3706                         & 0.5210                         \\
\begin{tabular}[c]{@{}c@{}}Aesthetic + Random~(1920 NFEs)\end{tabular}         & 0.7518                         & 0.5219                         & 0.5926                         & \CC0.2893 & 0.6059                         & 0.3572                         & 0.5199                         \\ \hline
\begin{tabular}[c]{@{}c@{}}RF-Sampling~(50 $\times$ 3 = 150 NFEs)\end{tabular} & \CC0.7761 & \CC0.5323 & \CC0.6422 & 0.2687                         & \CC0.6082 & \CC0.3733 & \CC0.5335 \\ \hline

\end{tabular}
}
\label{tab:saining_scaling_t2i}
\end{center}
\end{table*}
\begin{table*}[!ht]
\begin{center}

\renewcommand\arraystretch{1.2}
\setlength{\tabcolsep}{10pt}

\caption{Detailed breakdown of Fig.~\ref{figure:time-scaling}, including step counts~(NFEs) and wall time. As shown in the table below, RF-Sampling outperforms standard sampling with the same time consumption and significantly enhances the performance of FLUX-Lite and FLUX-Dev. With the increase of inference time, RF-Sampling consistently performs well, validating the scalability of our method.}
\resizebox{1.\linewidth}{!}{\begin{tabular}{cccccc}
\hline
Model                       & Method                                                                                 & NFEs                     & HPSv2(↑)                      & AES(↑)                         & s/img~($\downarrow$)  \\ \hline
                            &                                                                                        & 28                       & 30.12                         & 6.3224                         & 34.63  \\
                            &                                                                                        & 50                       & 30.39                         & 6.3045                         & 46.60  \\
                            & \multirow{-3}{*}{Standard}                                                             & 75                       & 30.46                         & 6.2864                         & 60.61  \\ \cline{2-6} 
                            &                                                                                        & 7 $\times$ 5 + 21 = 56   & 30.84                         & 6.4397                         & 49.63  \\
                            &                                                                                        & 14 $\times$ 5 + 14 = 84  & 30.98                         & 6.4736                         & 64.57  \\
                            &                                                                                        & 21 $\times$ 5 + 7 = 112  & 31.04                         & 6.5032                         & 76.84  \\
\multirow{-7}{*}{FLUX-Lite} & \multirow{-4}{*}{\begin{tabular}[c]{@{}c@{}}RF-Sampling\\ ($\alpha = 2$)\end{tabular}} & 28 $\times$ 5 = 140      & \CC31.16 & \CC6.5379 & 95.26  \\ \hline
                            &                                                                                        & 50                       & 30.49                         & 6.2464                         & 59.09  \\
                            &                                                                                        & 75                       & 30.54                         & 6.2170                         & 75.85  \\
                            & \multirow{-3}{*}{Standard}                                                             & 100                      & 30.60                         & 6.1869                         & 91.48  \\ \cline{2-6} 
                            &                                                                                        & 10 $\times$ 3 + 40 = 70  & 30.58                         & 6.2505                         & 71.87  \\
                            &                                                                                        & 20 $\times$ 3 + 30 = 90  & 30.66                         & 6.2639                         & 86.07  \\
                            &                                                                                        & 30 $\times$ 3 + 20 = 110 & 30.70                         & 6.2893                         & 100.03 \\
                            &                                                                                        & 40 $\times$ 3 + 10 = 130 & 30.79                         & 6.2917                         & 114.30 \\
\multirow{-8}{*}{FLUX-Dev}  & \multirow{-5}{*}{\begin{tabular}[c]{@{}c@{}}RF-Sampling\\ ($\alpha = 1$)\end{tabular}}   & 50 $\times$ 3 = 150      & \CC31.06 & \CC6.3113 & 127.95 \\ \hline

\end{tabular}
}
\label{tab:fig2_breakdown}
\end{center}
\end{table*}

\begin{figure*}[!ht]
\centering
\includegraphics[width=1.\textwidth,trim={0cm 0cm 0cm 0cm},clip]{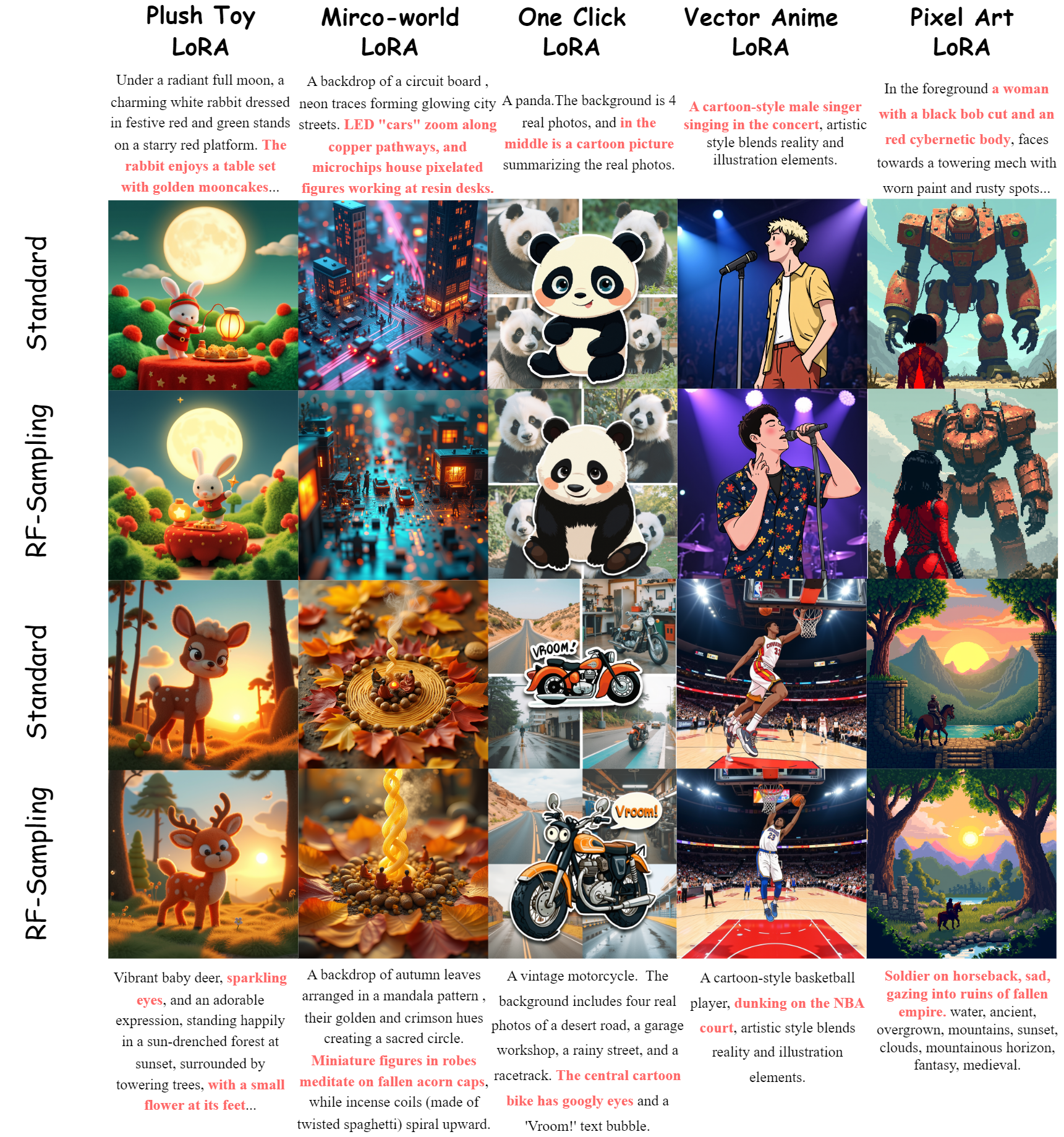}
\caption{\small We combine our proposed methods with existing LoRAs in FLUX community. Our RF-Sampling can be directly applied to the corresponding downstream tasks, validating the generalizability of our method.}
\label{figure:lora-appendix}
\end{figure*}

\begin{figure*}[!ht]
\centering
\includegraphics[width=1.\textwidth,trim={0cm 0cm 0cm 0cm},clip]{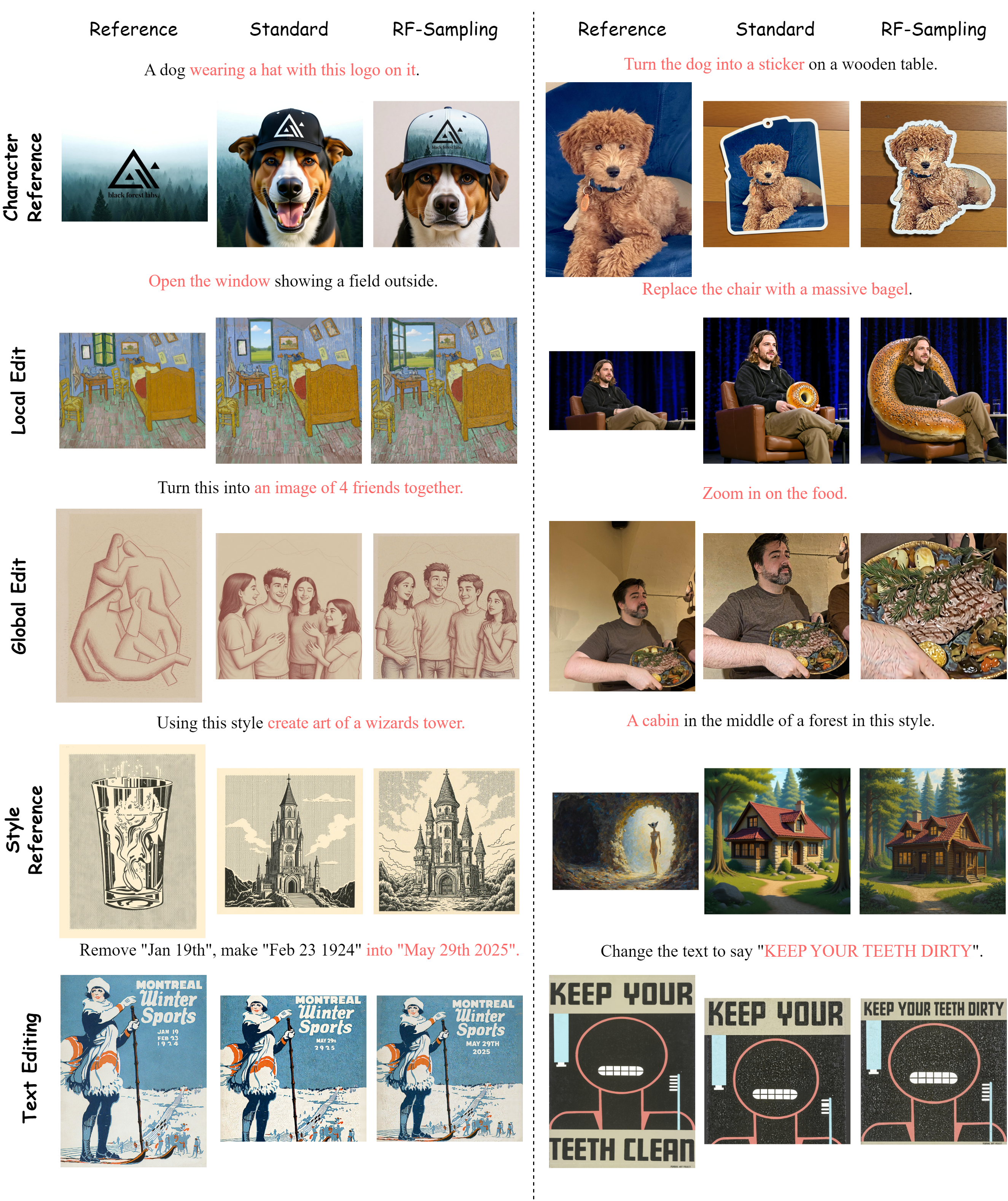}
\caption{\small We extend our proposed methods to image editing tasks on FLUX-Kontext. Our RF-Sampling can be directly applied to the corresponding downstream tasks, validating the effectiveness of our method.}
\label{figure:kontext}
\end{figure*}

\begin{figure*}[!ht]
\centering
\includegraphics[width=1.\textwidth,trim={0cm 0cm 0cm 0cm},clip]{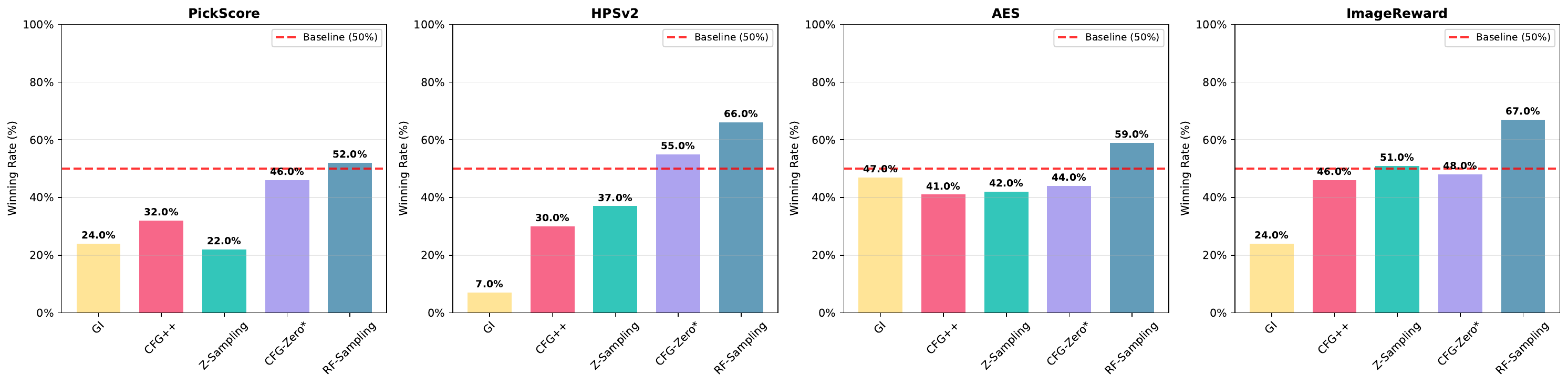}
\caption{The winning rate of RF-Sampling over other methods on SD3.5 on Pick-a-Pic dataset. The standard sampling~(baseline) winning rate defaults to 50\%}
\label{figure:winning-rate-sd3-pick}
\end{figure*}

\begin{figure*}[!ht]
\centering
\includegraphics[width=1.\textwidth,trim={0cm 0cm 0cm 0cm},clip]{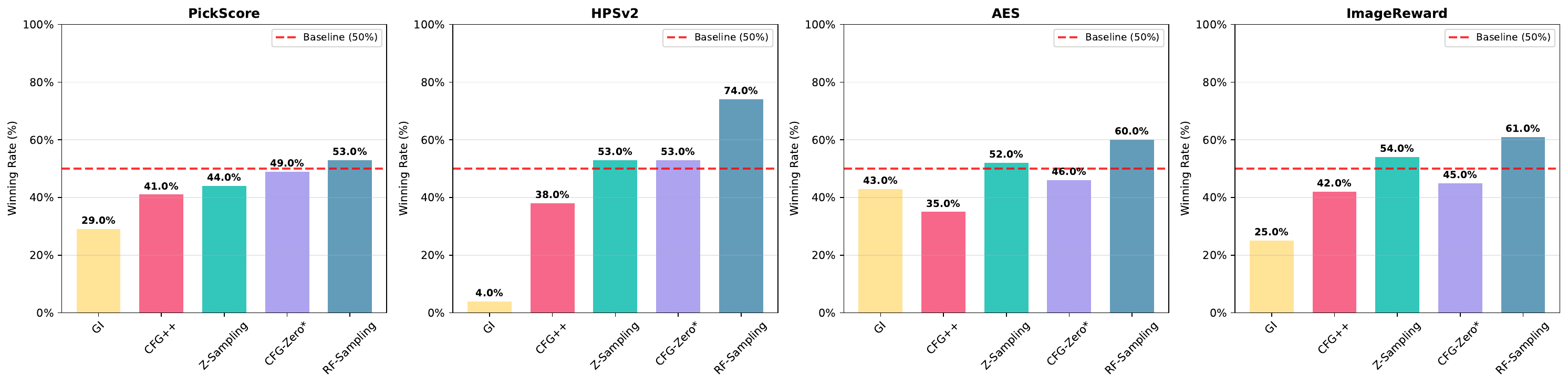}
\caption{The winning rate of RF-Sampling over other methods on SD3.5 on DrawBench dataset. The standard sampling~(baseline) winning rate defaults to 50\%.}
\label{figure:winning-rate-sd3-draw}
\end{figure*}

\begin{figure*}[!ht]
\centering
\includegraphics[width=1.\textwidth,trim={0cm 0cm 0cm 0cm},clip]{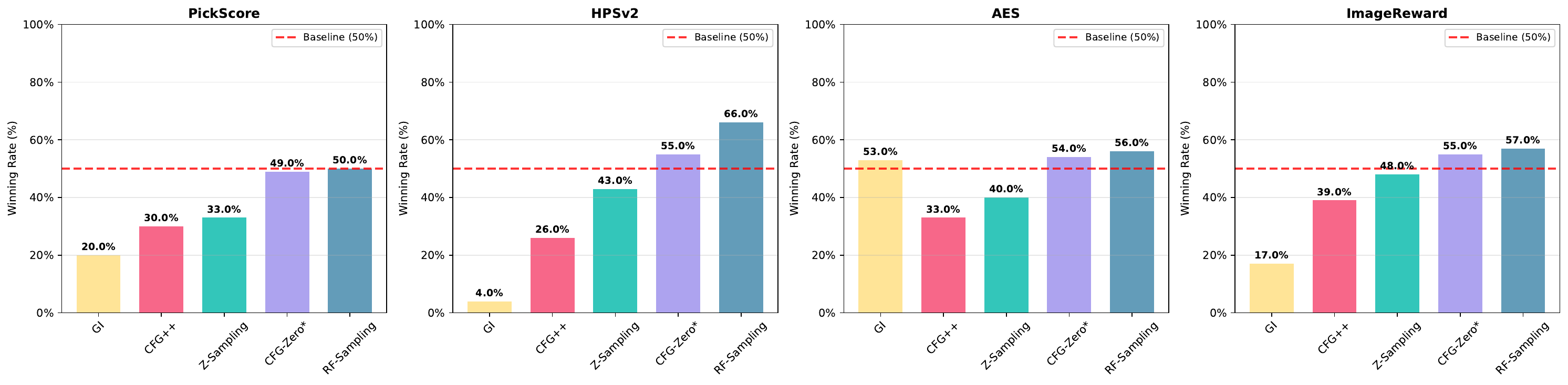}
\caption{The winning rate of RF-Sampling over other methods on SD3.5 on the animation subset of HPD v2 dataset. The standard sampling~(baseline) winning rate defaults to 50\%.}
\label{figure:winning-rate-sd3-anime}
\end{figure*}

\begin{figure*}[!ht]
\centering
\includegraphics[width=1.\textwidth,trim={0cm 0cm 0cm 0cm},clip]{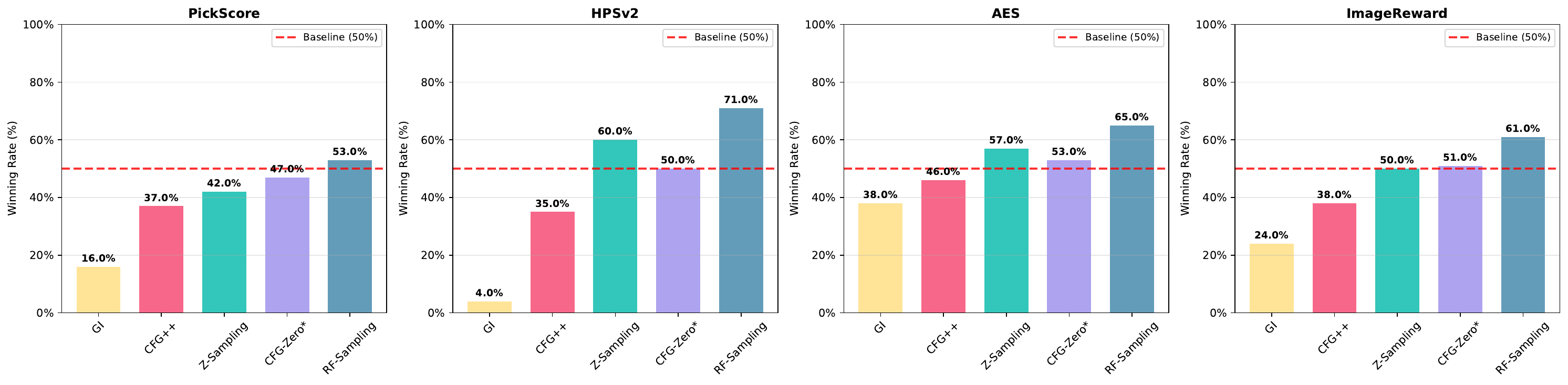}
\caption{The winning rate of RF-Sampling over other methods on SD3.5 on the photo subset of HPD v2 dataset. The standard sampling~(baseline) winning rate defaults to 50\%.}
\label{figure:winning-rate-sd3-photo}
\end{figure*}

\begin{figure*}[!ht]
\centering
\includegraphics[width=1.\textwidth,trim={0cm 0cm 0cm 0cm},clip]{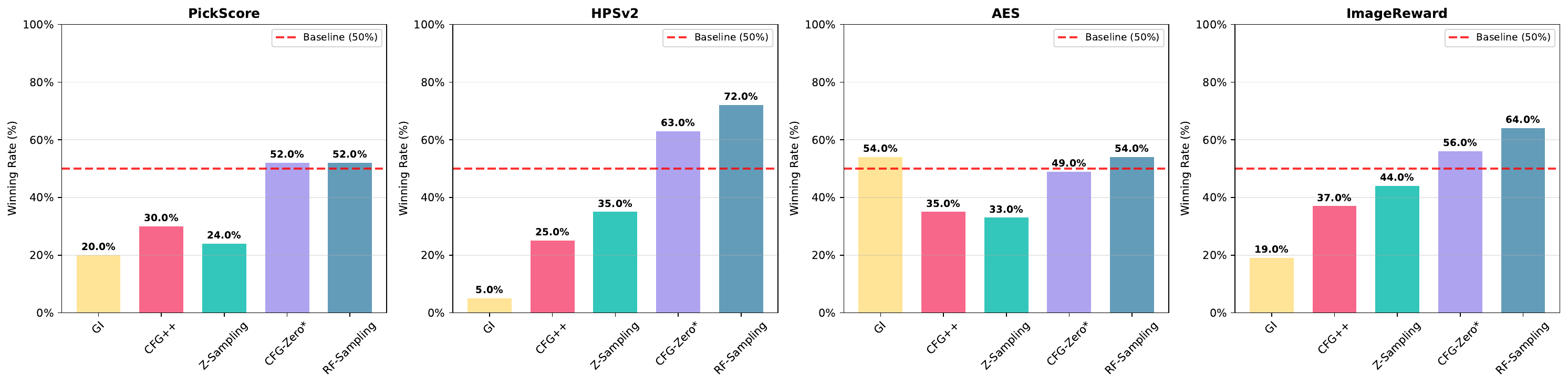}
\caption{The winning rate of RF-Sampling over other methods on SD3.5 on the concept-art subset of HPD v2 dataset. The standard sampling~(baseline) winning rate defaults to 50\%.}
\label{figure:winning-rate-sd3-concept}
\end{figure*}

\begin{figure*}[!ht]
\centering
\includegraphics[width=1.\textwidth,trim={0cm 0cm 0cm 0cm},clip]{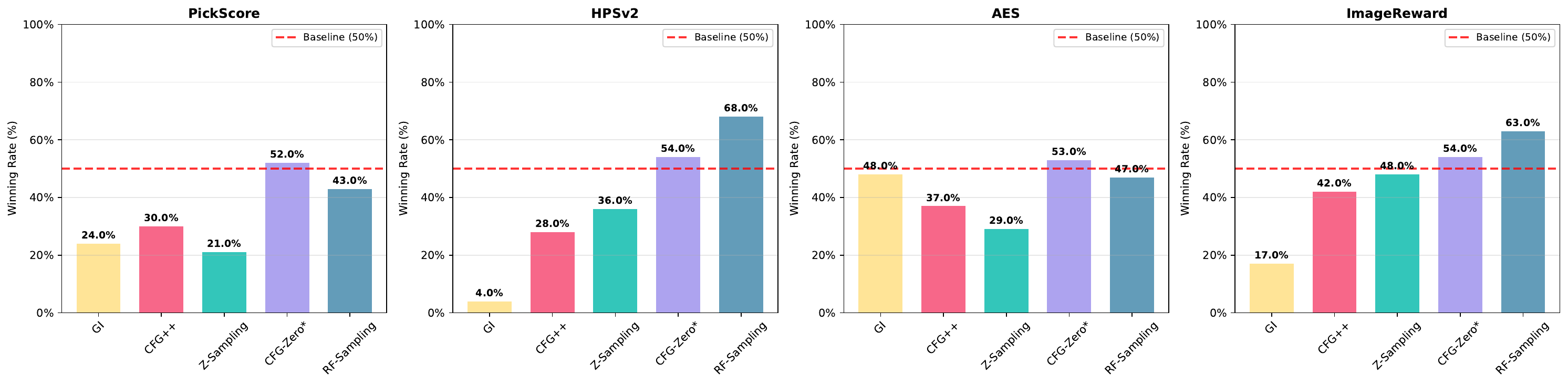}
\caption{The winning rate of RF-Sampling over other methods on SD3.5 on the painting subset of HPD v2 dataset. The standard sampling~(baseline) winning rate defaults to 50\%..}
\label{figure:winning-rate-sd3-paint}
\end{figure*}

\begin{figure*}[!ht]
\centering
\includegraphics[width=1.\textwidth,trim={0cm 0cm 0cm 0cm},clip]{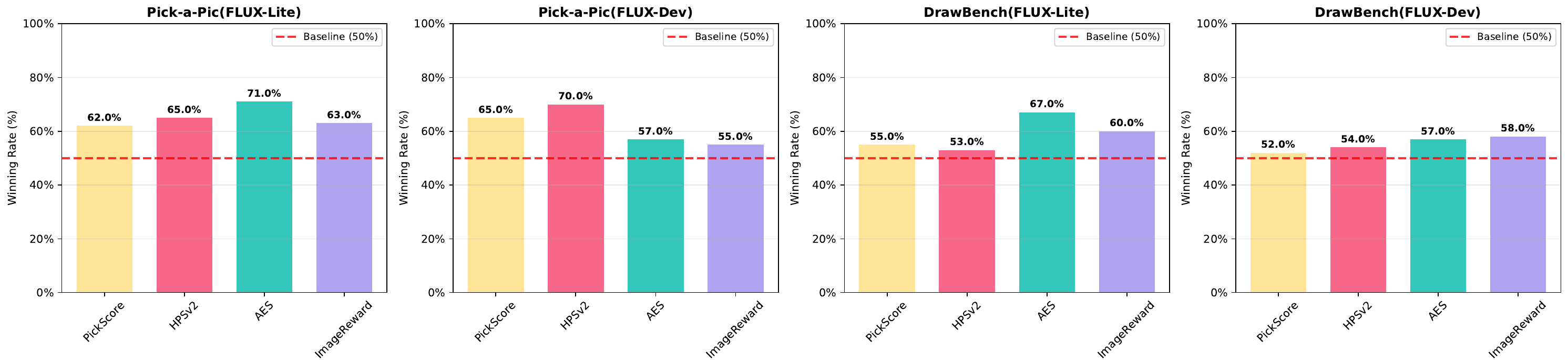}
\caption{The winning rate of RF-Sampling over the standard one on FLUX-Lite and FLUX-Dev on Pick-a-Pic and DrawBench datasets. The standard sampling~(baseline) winning rate defaults to 50\%.}
\label{figure:winning-rate-pick-draw}
\end{figure*}

\begin{figure*}[!ht]
\centering
\includegraphics[width=1.\textwidth,trim={0cm 0cm 0cm 0cm},clip]{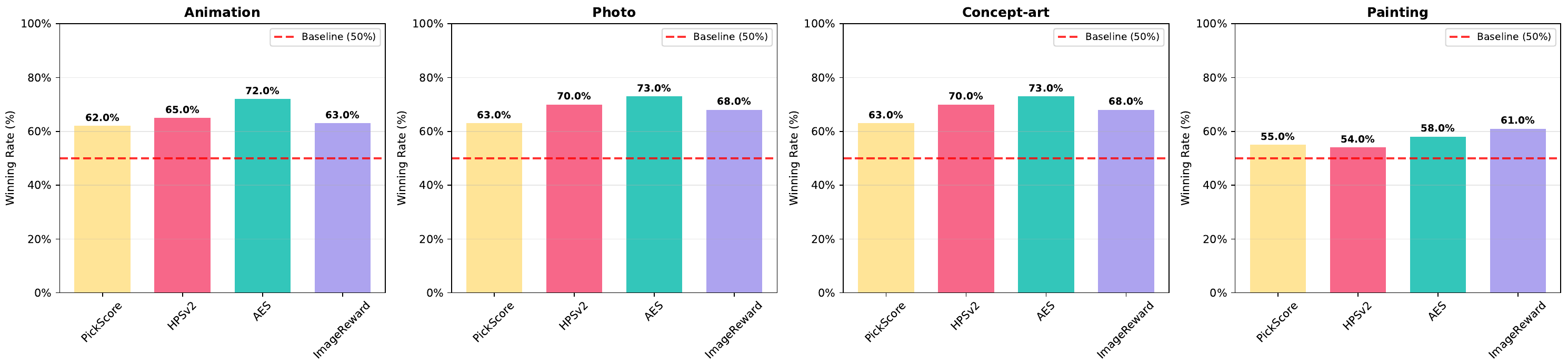}
\caption{The winning rate of RF-Sampling over the standard one on FLUX-Lite on the 4 subsets of HPD v2 datasets. The standard sampling~(baseline) winning rate defaults to 50\%.}
\label{figure:winning-rate-lite-hpd}
\end{figure*}

\begin{figure*}[!ht]
\centering
\includegraphics[width=1.\textwidth,trim={0cm 0cm 0cm 0cm},clip]{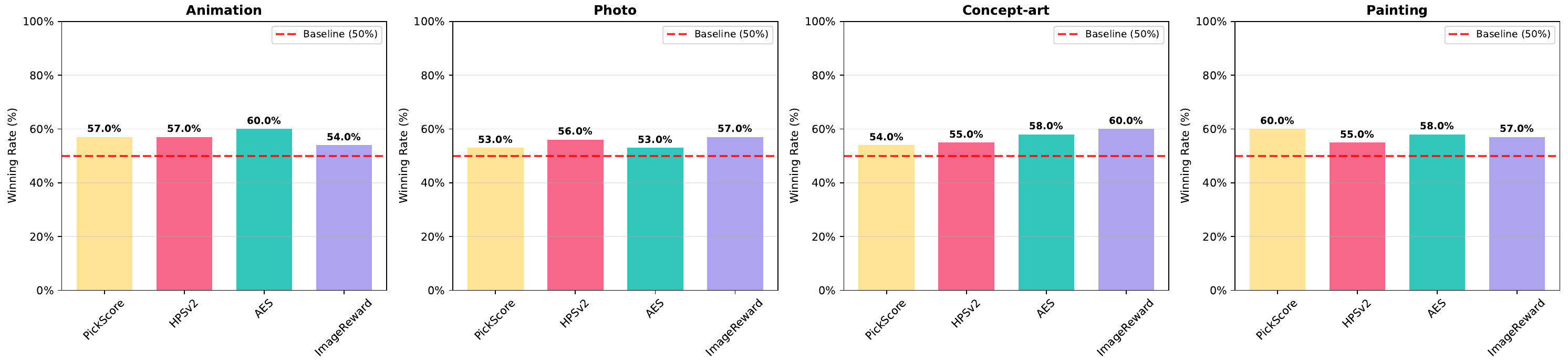}
\caption{The winning rate of RF-Sampling over the standard one on FLUX-Dev on the 4 subsets of HPD v2 datasets. The standard sampling~(baseline) winning rate defaults to 50\%.}
\label{figure:winning-rate-dev-hpd}
\end{figure*}

\subsection{More Visualizations}
\label{sec:more_vis}
We provide more visualizations of synthesized images on FLUX-Dev and FLUX-Lite, across HPD v2, Pick-a-Pic, DrawBench and GenEval datasets as shown in Fig.~\ref{figure:lite-anime}, \ref{figure:lite-photo}, \ref{figure:lite-paint}, \ref{figure:lite-concept}, \ref{figure:lite-geneval}, \ref{figure:lite-draw-pick}, \ref{figure:dev-anime}, \ref{figure:dev-photo}, \ref{figure:dev-paint}, \ref{figure:dev-concept}, \ref{figure:dev-geneval}, and \ref{figure:dev-draw-pick}.

\begin{figure*}[!ht]
\centering
\includegraphics[width=1.\textwidth,trim={0cm 0cm 0cm 0cm},clip]{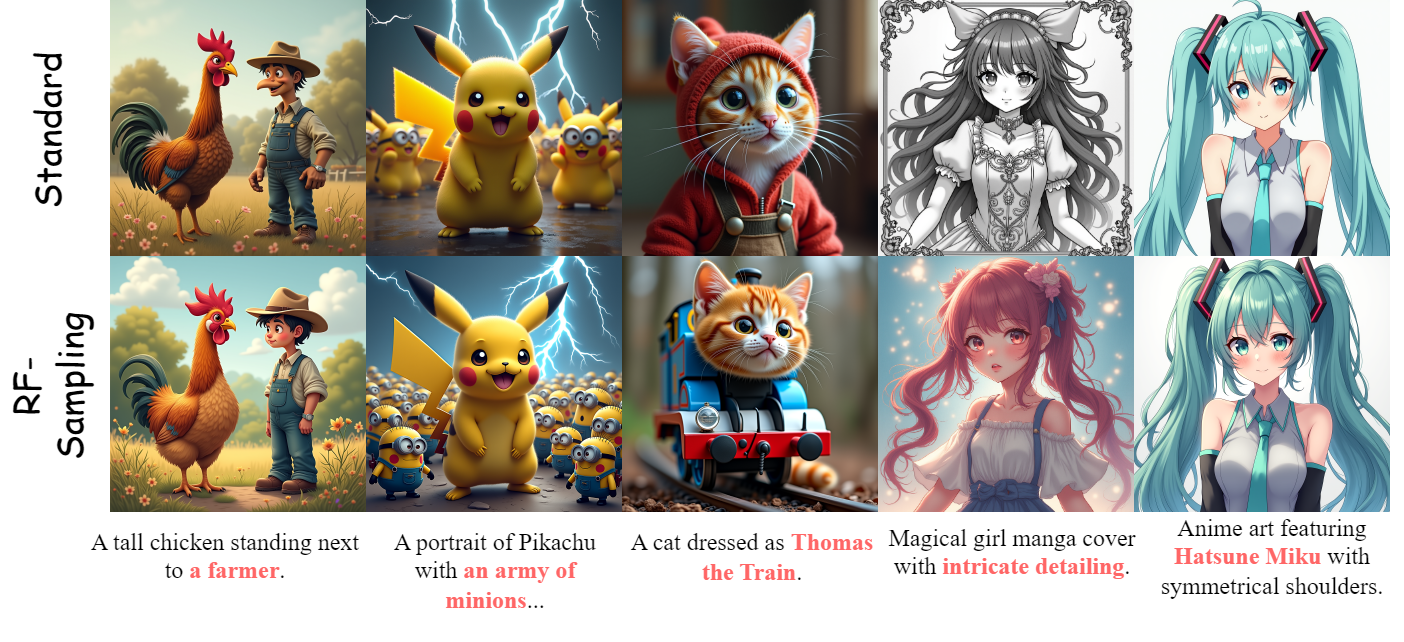}
\caption{\small Synthesized images of FLUX-Lite on anime subset of HPD v2.}
\label{figure:lite-anime}
\end{figure*}

\begin{figure*}[!ht]
\centering
\includegraphics[width=1.\textwidth,trim={0cm 0cm 0cm 0cm},clip]{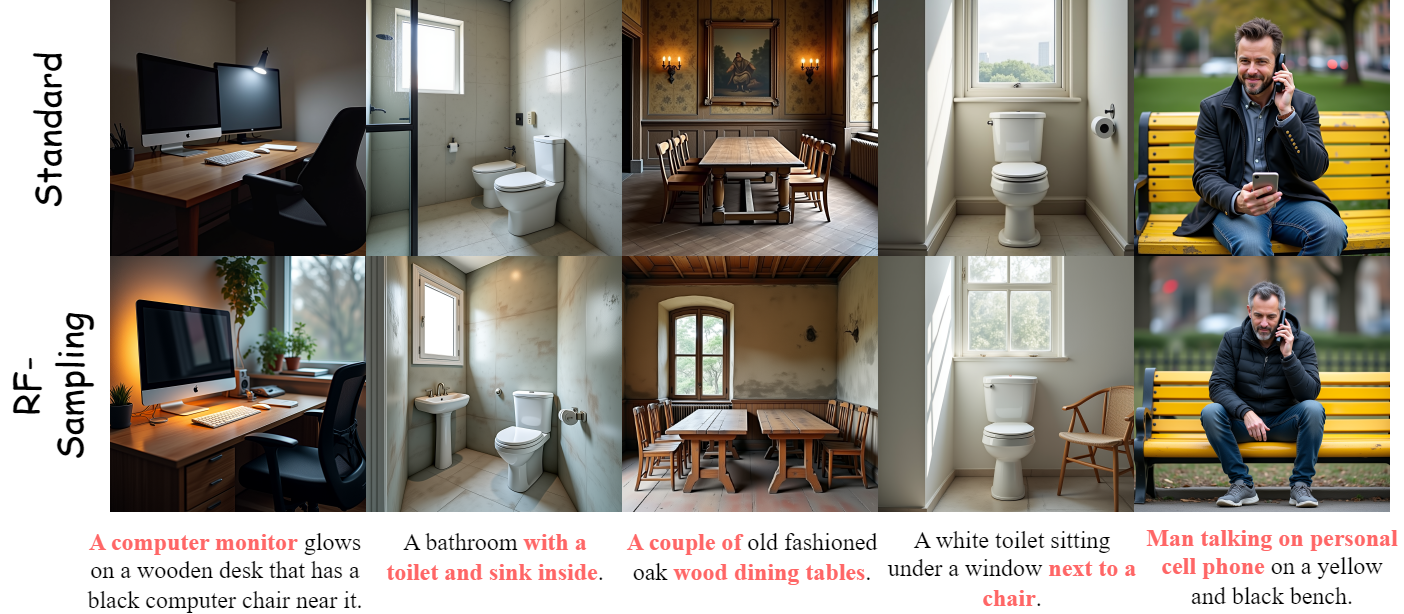}
\caption{\small Synthesized images of FLUX-Lite on photography subset of HPD v2.}
\label{figure:lite-photo}
\end{figure*}

\begin{figure*}[!ht]
\centering
\includegraphics[width=1.\textwidth,trim={0cm 0cm 0cm 0cm},clip]{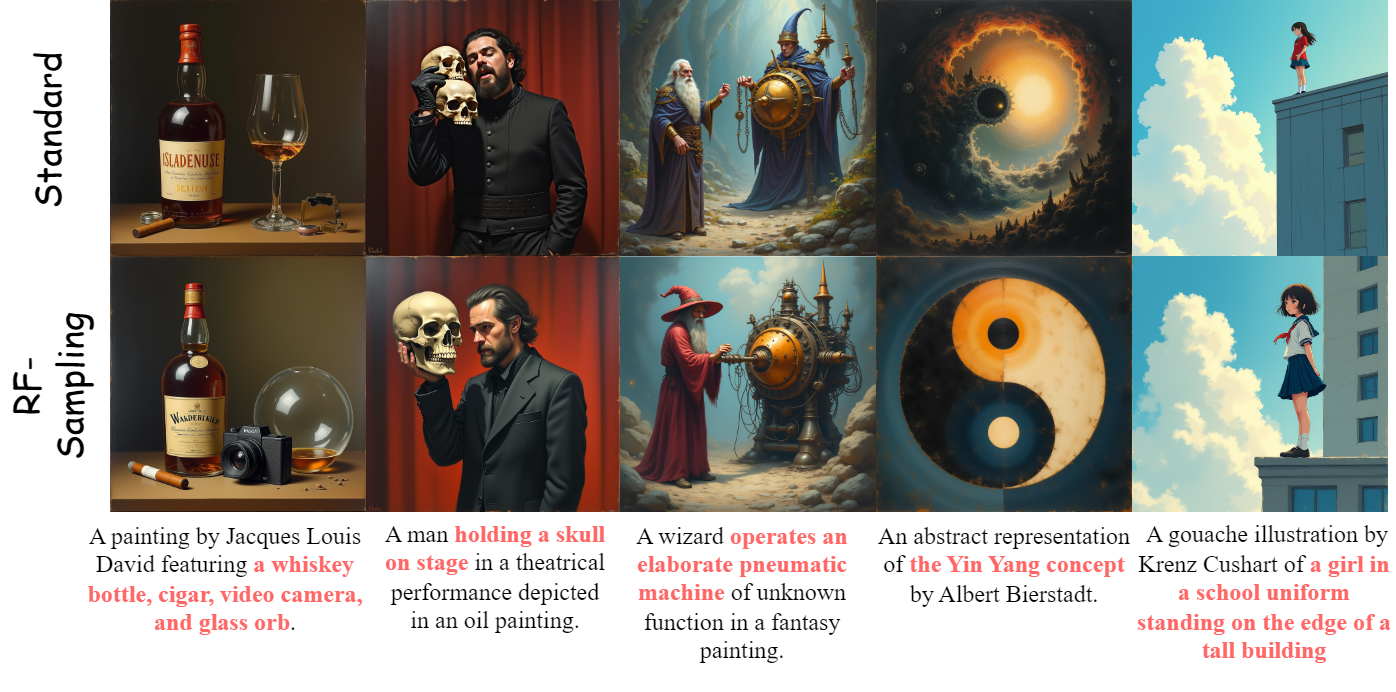}
\caption{\small Synthesized images of FLUX-Lite on painting subset of HPD v2.}
\label{figure:lite-paint}
\end{figure*}

\begin{figure*}[!ht]
\centering
\includegraphics[width=1.\textwidth,trim={0cm 0cm 0cm 0cm},clip]{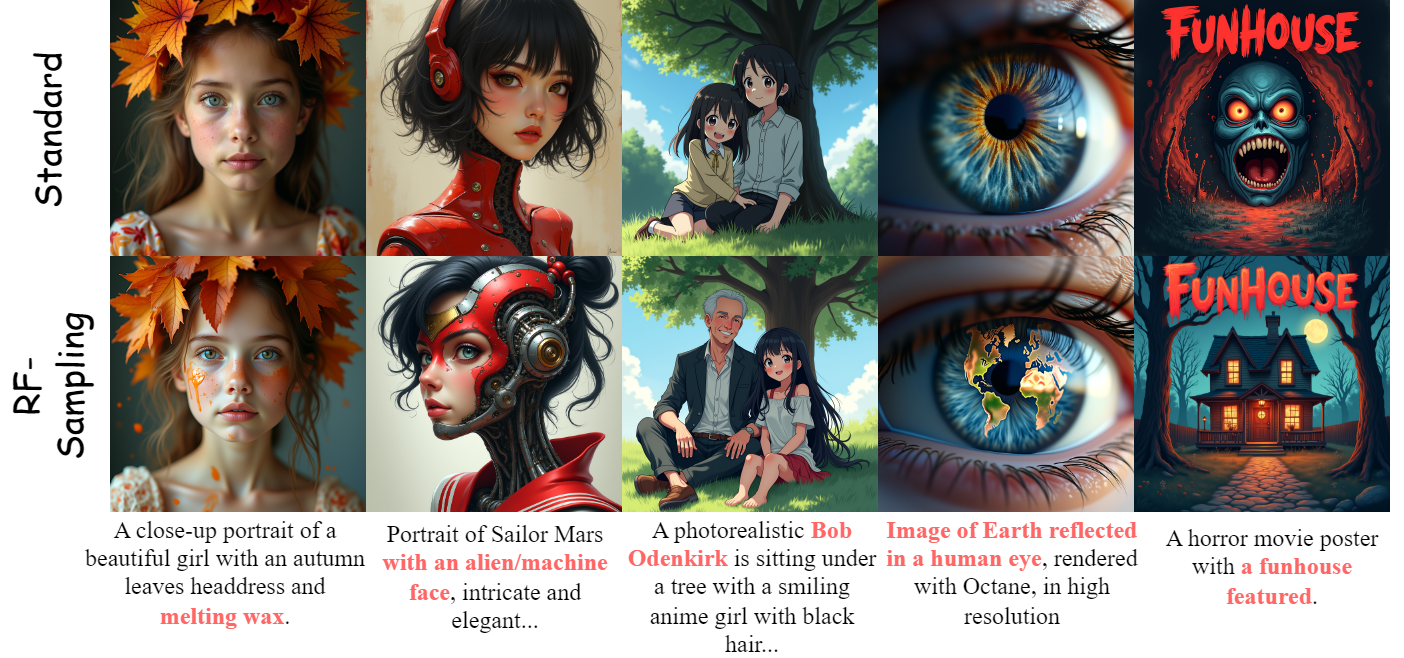}
\caption{\small Synthesized images of FLUX-Lite on concept-art subset of HPD v2.}
\label{figure:lite-concept}
\end{figure*}

\begin{figure*}[!ht]
\centering
\includegraphics[width=1.\textwidth,trim={0cm 0cm 0cm 0cm},clip]{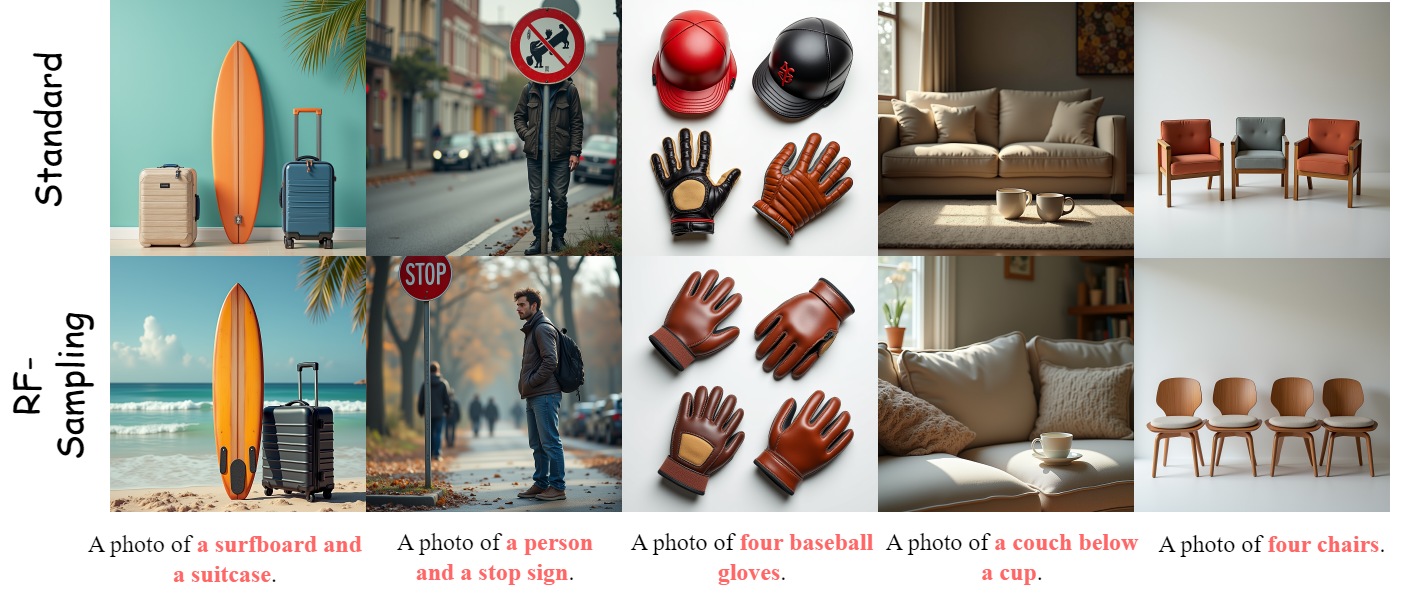}
\caption{\small Synthesized images of FLUX-Lite on GenEval.}
\label{figure:lite-geneval}
\end{figure*}

\begin{figure*}[!ht]
\centering
\includegraphics[width=1.\textwidth,trim={0cm 0cm 0cm 0cm},clip]{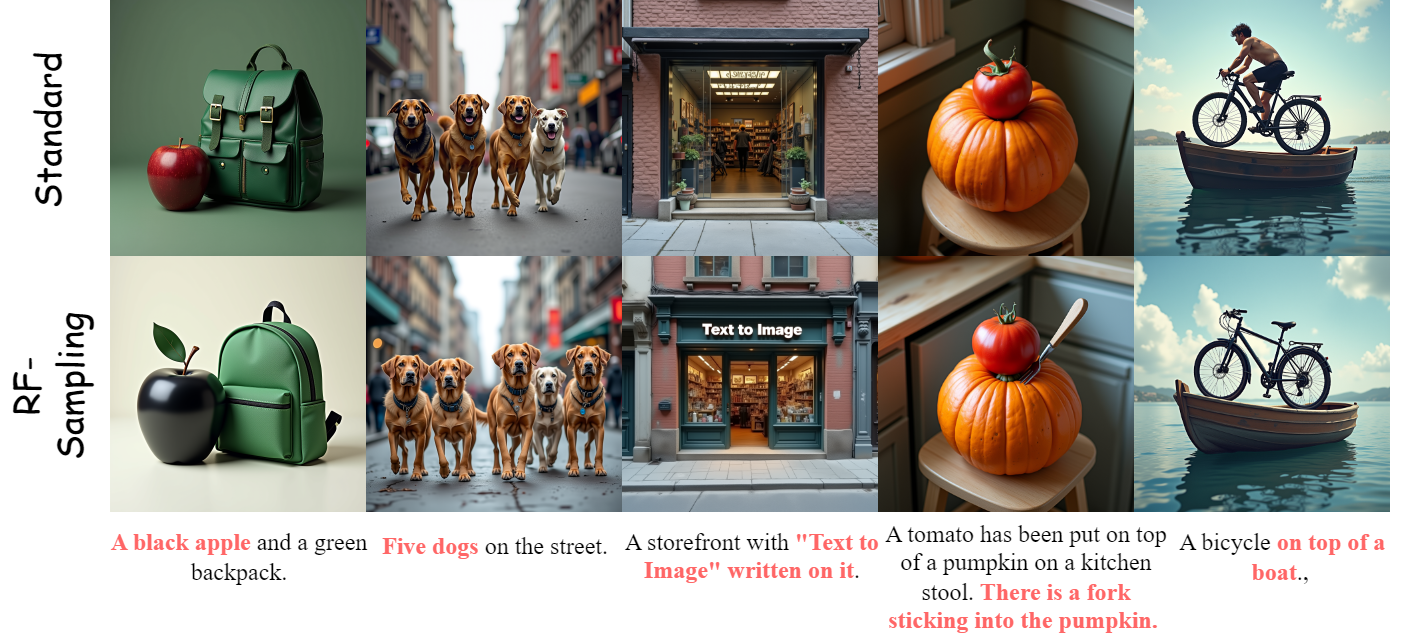}
\caption{\small Synthesized images of FLUX-Lite on Pick-a-Pic and DrawBench.}
\label{figure:lite-draw-pick}
\end{figure*}

\begin{figure*}[!ht]
\centering
\includegraphics[width=1.\textwidth,trim={0cm 0cm 0cm 0cm},clip]{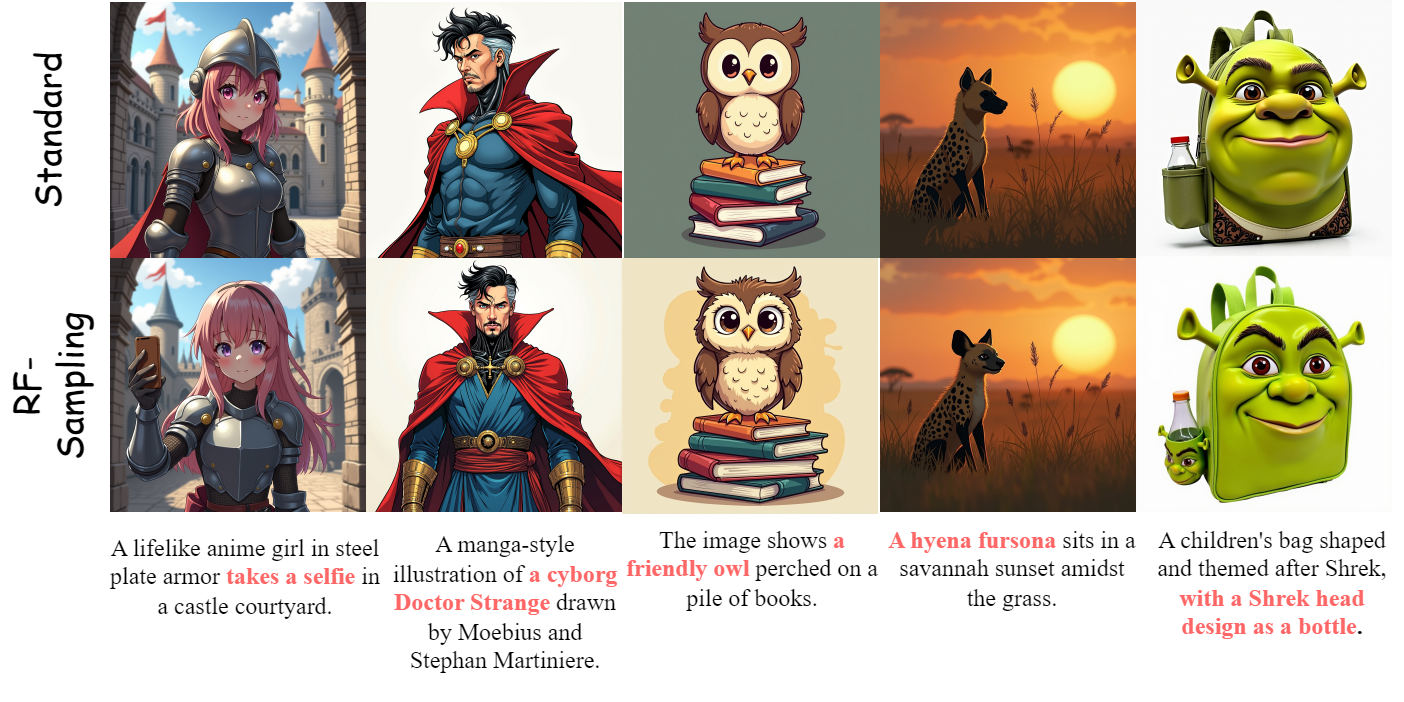}
\caption{\small Synthesized images of FLUX-Dev on anime subset of HPD v2.}
\label{figure:dev-anime}
\end{figure*}

\begin{figure*}[!ht]
\centering
\includegraphics[width=1.\textwidth,trim={0cm 0cm 0cm 0cm},clip]{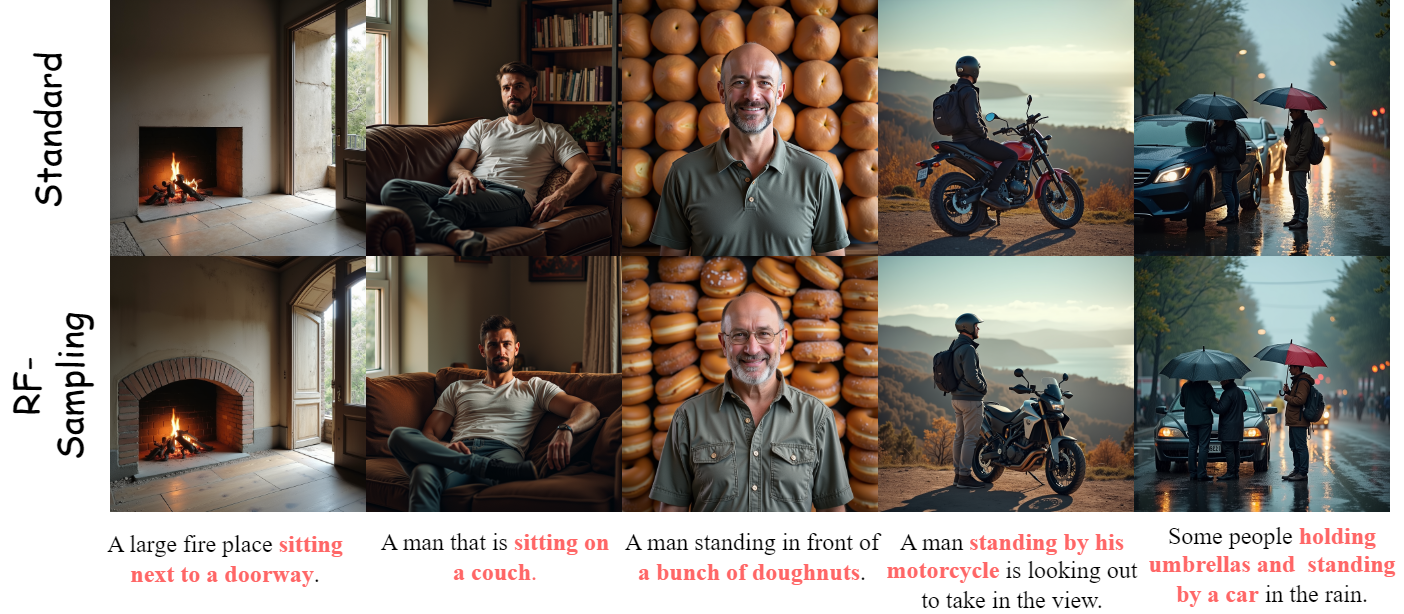}
\caption{\small Synthesized images of FLUX-Dev on photography subset of HPD v2.}
\label{figure:dev-photo}
\end{figure*}

\begin{figure*}[!ht]
\centering
\includegraphics[width=1.\textwidth,trim={0cm 0cm 0cm 0cm},clip]{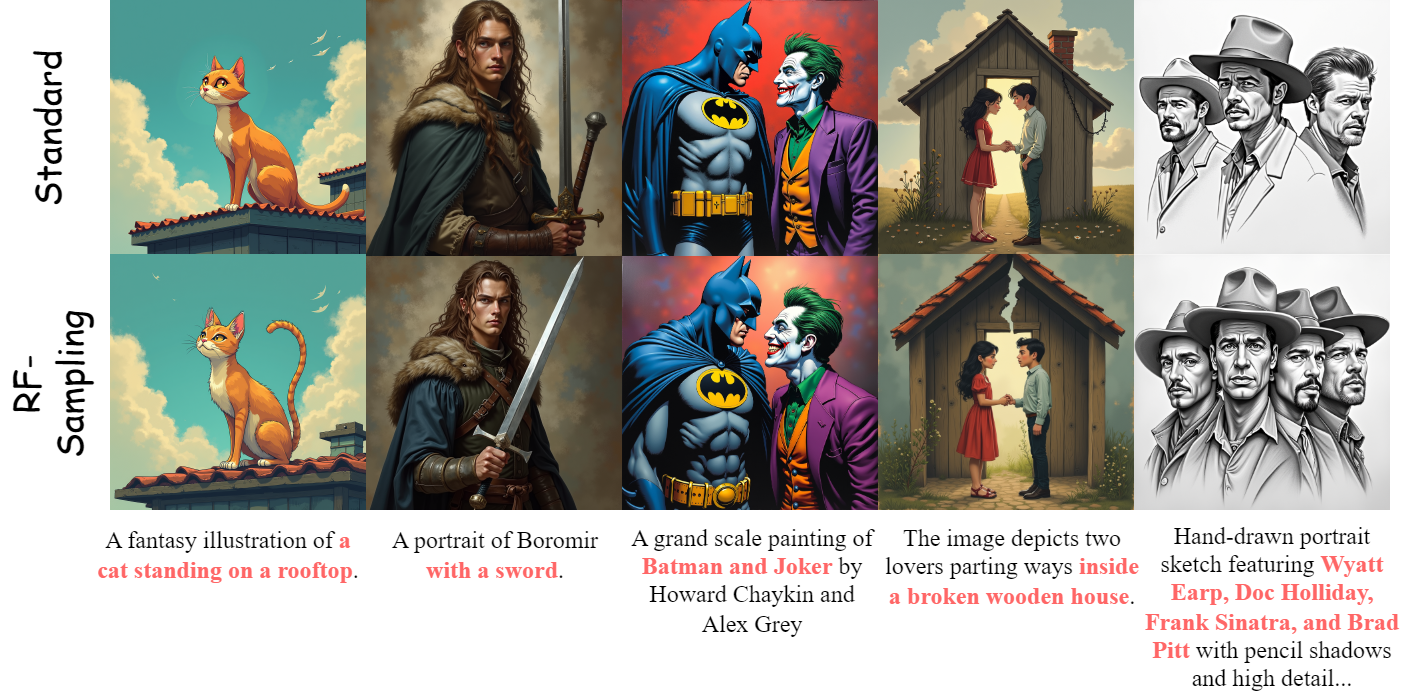}
\caption{\small Synthesized images of FLUX-Dev on painting subset of HPD v2.}
\label{figure:dev-paint}
\end{figure*}

\begin{figure*}[!ht]
\centering
\includegraphics[width=1.\textwidth,trim={0cm 0cm 0cm 0cm},clip]{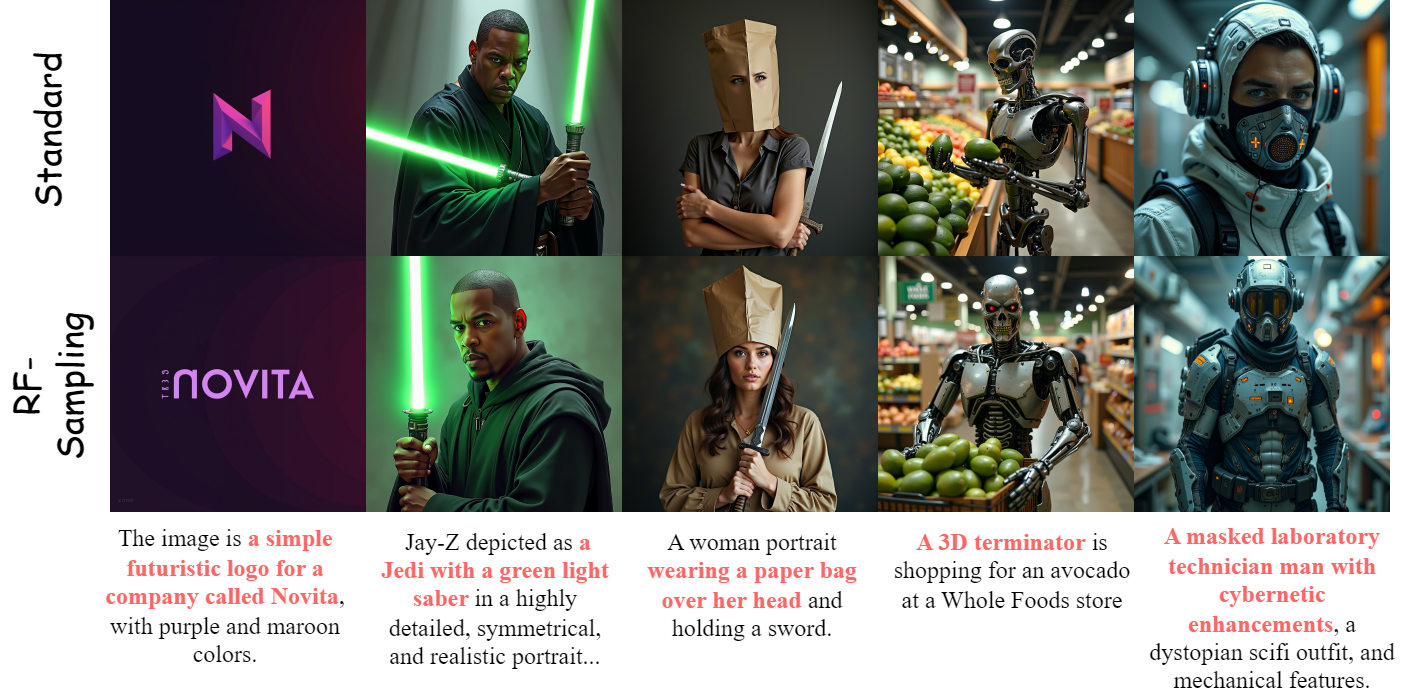}
\caption{\small Synthesized images of FLUX-Dev on concept-art subset of HPD v2.}
\label{figure:dev-concept}
\end{figure*}

\begin{figure*}[!ht]
\centering
\includegraphics[width=1.\textwidth,trim={0cm 0cm 0cm 0cm},clip]{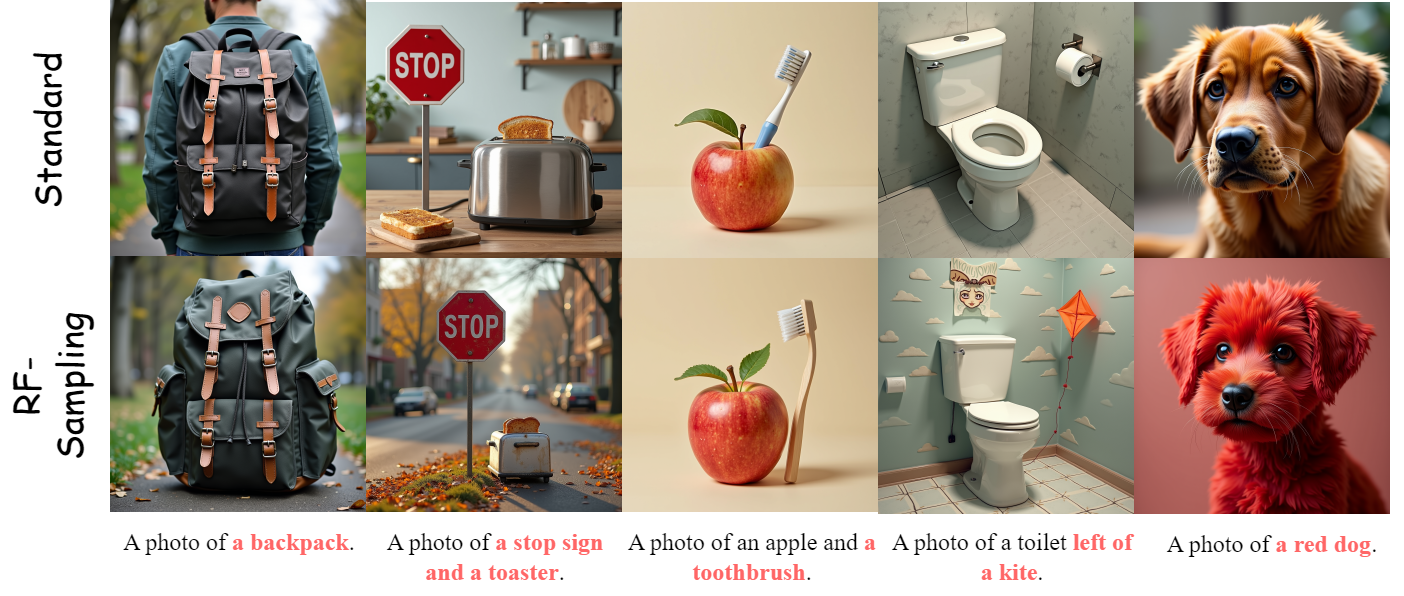}
\caption{\small Synthesized images of FLUX-Dev on GenEval.}
\label{figure:dev-geneval}
\end{figure*}

\begin{figure*}[!ht]
\centering
\includegraphics[width=1.\textwidth,trim={0cm 0cm 0cm 0cm},clip]{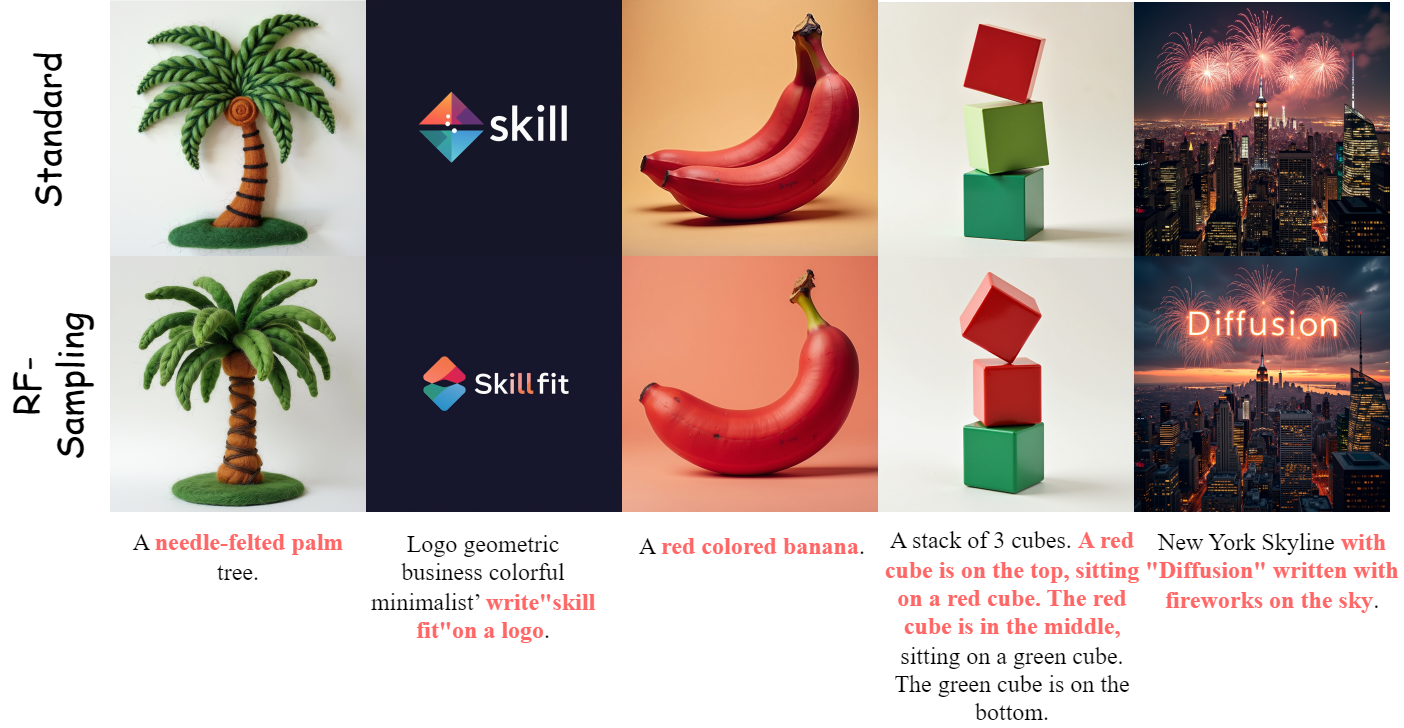}
\caption{\small Synthesized images of FLUX-Dev on Pick-a-Pic and DrawBench.}
\label{figure:dev-draw-pick}
\end{figure*}

\end{document}